\pdfoutput=1

\documentclass[11pt]{article}
\usepackage{fullpage}
\usepackage[utf8]{inputenc} 
\usepackage[T1]{fontenc}    
\usepackage{hyperref}       
\usepackage{url}            
\usepackage{booktabs}       
\usepackage{amsfonts}       
\usepackage{nicefrac}       
\usepackage{microtype}      
\usepackage{xcolor}         
\usepackage{amsmath,amsthm,amssymb,mathtools,bbm}
\usepackage{thm-restate,thmtools}
\usepackage[algo2e,ruled,linesnumbered]{algorithm2e}
\usepackage{makecell}
\usepackage{enumitem}
\usepackage[nameinlink]{cleveref}

\title{Truthfulness of Calibration Measures}
\author{
Nika Haghtalab\thanks{University of California, Berkeley. Email: \texttt{nika@berkeley.edu}.}
\and
Mingda Qiao\thanks{University of California, Berkeley. Email: \texttt{mingda.qiao@berkeley.edu}.}
\and
Kunhe Yang\thanks{University of California, Berkeley. Email: \texttt{kunheyang@berkeley.edu}.}
\and
Eric Zhao\thanks{University of California, Berkeley. Email: \texttt{eric.zh@berkeley.edu}.}
}
\date{}







\newtheorem{theorem}{Theorem}[section]

\newtheorem{lemma}[theorem]{Lemma}
\newtheorem{definition}[theorem]{Definition}
\newtheorem{proposition}[theorem]{Proposition}

\newtheorem{fact}[theorem]{Fact}


\newcommand{\eps}{\varepsilon}
\renewcommand{\epsilon}{\varepsilon}

\renewcommand{\tilde}{\widetilde}




\newcommand{\norm}[1]{\left\lVert#1\right\rVert}


\newcommand{\para}[1]{\left(#1\right)}


\newcommand{\curlybrackets}[1]{\left\{#1\right\}}

\newcommand{\bset}[1]{\curlybrackets{#1}}



\newcommand{\abs}[1]{\left|#1\right|}

\newcommand{\setsize}[1]{\left| #1 \right|}

\DeclareMathOperator*{\Exp}{\mathbb{E}}

\newcommand{\EE}[1]{\Exp\left[#1\right]}

\newcommand{\EEs}[2]{\Exp_{#1}\left[#2\right]}

\newcommand{\EEsc}[3]{\Exp_{#1}\left[#2 \mid #3\right]}
\newcommand{\EEc}[2]{\Exp\left[#1\left|#2\right.\right]}





\newcommand{\ceil}[1]{\left\lceil#1\right\rceil}
\newcommand{\floor}[1]{\left\lfloor#1\right\rfloor}



\newcommand{\R}{\mathbb{R}}

\newcommand{\integers}{\mathbb{Z}}




\newcommand{\Unif}{\mathsf{Unif}}



\DeclareMathOperator*{\argmin}{arg\,min}

\DeclareMathOperator*{\sgn}{sgn}

\DeclareMathOperator{\polylog}{polylog}


\renewcommand{\d}[1]{\ensuremath{\operatorname{d}\!{#1}}}


\newcommand{\asseq}{\coloneqq}




\newcommand{\err}{\mathsf{err}}















\newcommand{\cD}{\mathcal{D}}

\newcommand{\F}{\mathcal{F}}
\newcommand{\cF}{\mathcal{F}}
\newcommand{\cG}{\mathcal{G}}

\newcommand{\cI}{\mathcal{I}}

\newcommand{\cT}{\mathcal{T}}

\newcommand{\pstar}{p^{\star}}

\newcommand{\tN}{\widetilde{N}}

\newcommand{\1}[1]{\mathbbm{1}\left[#1\right]} 
\newcommand{\A}{\mathcal{A}} 
\newcommand{\Atruthful}{\A^{\mathrm{truthful}}}
\newcommand{\bbN}{\mathbb{N}} 
\newcommand{\Bern}{\mathsf{Bernoulli}} 
\newcommand{\Binomial}{\mathsf{Binomial}} 
\newcommand{\caldist}{\mathsf{CalDist}}
\newcommand{\CM}{\mathsf{CM}} 
\newcommand{\D}{\mathcal{D}} 
\newcommand{\Deltafirst}{\Delta_{\mathrm{first}}}
\newcommand{\ECE}{\mathsf{ECE}}
\newcommand{\Ex}[2]{\operatorname*{\mathbb{E}}_{#1}\left[#2\right]} 
\newcommand{\Fil}{\mathbb{F}} 

\newcommand{\intCE}{\mathsf{intCE}}
\newcommand{\len}{\mathsf{len}}
\newcommand{\LKCE}{\mathsf{kCE}^{\mathsf{Lap}}}
\newcommand{\MSR}{\mathsf{MSR}}
\newcommand{\mufirst}{\mu_{\mathrm{first}}}
\newcommand{\musecond}{\mu_{\mathrm{second}}}
\newcommand{\numterms}{K} 
\newcommand{\OPT}{\mathsf{OPT}}
\newcommand{\poly}{\operatorname*{poly}} 
\newcommand{\pr}[2]{\Pr_{#1}\left[#2\right]} 

\newcommand{\RSCE}{\SSCE} 
\newcommand{\smCE}{\mathsf{smCE}}
\newcommand{\SSCE}{\mathsf{SSCE}} 
\newcommand{\UCal}{\mathsf{UCal}}
\newcommand{\Var}{\mathrm{Var}}
\newcommand{\var}[2]{\operatorname*{\mathrm{Var}}_{#1}\left[#2\right]} 
\newcommand{\Xfirst}{X_{\mathrm{first}}}
\newcommand{\Xsecond}{X_{\mathrm{second}}}

\begin{document}

\maketitle

\begin{abstract}
We initiate the study of \emph{truthfulness} of calibration measures in sequential prediction.
A calibration measure is said to be truthful if the forecaster (approximately) minimizes the expected penalty by predicting the conditional expectation of the next outcome, given the prior distribution of outcomes.
Truthfulness is an important property of calibration measures, ensuring that the forecaster is not incentivized to exploit the system with deliberate poor forecasts. This makes it an essential desideratum for calibration measures, alongside typical requirements, such as soundness and completeness.

We conduct a taxonomy of existing calibration measures and their truthfulness. Perhaps surprisingly, we find that all of them are far from being truthful. 
That is, under existing calibration measures, there are simple distributions on which a polylogarithmic (or even zero) penalty is achievable, while truthful prediction leads to a polynomial penalty. Our main contribution is the introduction of a new calibration measure termed the \emph{Subsampled Smooth Calibration Error (SSCE)} under which truthful prediction is optimal up to a constant multiplicative factor.
\end{abstract}

\section{Introduction}\label{sec:introduction}
Probability forecasting is a central prediction task to a wide range of domains and applications, such as finance, meteorology, and medicine~\cite{murphy1984probability,degroot1983comparison,WM68,jiang2012calibrating,kompa2021second,van2015calibration,berestycki2002asymptotics,crowson2016assessing}. 
For forecasts to be useful, a common minimum requirement is that they are \emph{calibrated}, i.e., the predictions are unbiased conditioned on the predicted value. Formally, for a sequence of $T$ binary events, a forecaster who predicts probabilities in $[0,1]$ is \emph{perfectly calibrated} if for every $\alpha \in [0, 1]$, among the time steps on which $\alpha$ is predicted, an $\alpha$ fraction of the outcomes is indeed $1$.
Since perfectly calibrated forecasts are often unachievable, \emph{calibration measures} have been introduced to quantify some form of deviation from perfectly calibrated forecasts.
Common examples of these measures include the expected calibration error (ECE)~\cite{FV98}, the smooth calibration error~\cite{KF08}, and the distance from calibration~\cite{BGHN23}.

As these calibration measures are commonly used to evaluate the performance of forecasters, it is important that their use encourages forecasters to incorporate the highest quality information available to them (e.g., via their expert knowledge or side information) about the next outcome. 
This desideratum, formally referred to as \emph{truthfulness}, requires that a calibration measure incentivizes the forecasters to predict truthfully when the true distribution of the next outcome is known to them. Lack of truthfulness can have severe consequences: it serves as a poor measure of quality of forecasts, tempts forecasters to make deliberately biased predictions in order to game the system, and erodes trust in predictions provided by third-party forecasters.
\emph{Given the importance of truthfulness, we set out to identify calibration measures that demonstrate truthfulness.}

While truthfulness of calibration measures has not been systematically investigated to date, evidence of the lack of truthfulness of some calibration measures has emerged in recent literature. For example, \cite{FH21,QV21} noted that a forecaster can lower their ECE by predicting according to the past. This observation was applied in the algorithm of \cite{FH21} and motivated the ``sidestepping'' technique in the lower bound proof of \cite{QV21}. More recently, \cite{QZ24} highlighted a large gap in the truthfulness of a recently proposed calibration measure (called the \emph{distance from calibration}~\cite{BGHN23}) by showing that in a simple setup of predicting i.i.d.\ outcomes, the truthful forecaster incurs a distance of
 $\Omega(\sqrt{T})$ from calibration but there is a forecasting algorithm that achieves $\polylog(T)$ distance from calibration. We call this a $\polylog(T)$-$\Omega(\sqrt{T})$ \emph{truthfulness gap.}
 On the other hand, we say that a calibration measure is $(\alpha, \beta)$-truthful if predicting the next outcome according to its conditional distribution incurs a measure that is no more than $\alpha \OPT + \beta$, where $\OPT$ is the minimum value of the calibration measure achievable by any forecaster. 
Faced with evidence that some calibration measures suffer from large truthfulness gaps, we will systematically examine the truthfulness (or a gap thereof) of a wide range of calibration measures.

For a truthful calibration measure to also be useful it must distinguish accurate predictions from inaccurate ones. After all, a measure that is uniformly $0$ regardless of the quality of predictions is perfectly truthful (formally $(1,0)$-truthful) but provides no insights into the quality of the predictions. 
We formalize the minimum requirement for a measure to be useful by its \emph{completeness} and \emph{soundness} when predicting i.i.d.\ Bernoulli outcomes. The former requires that predicting the outcomes according to the correct parameter of the generating Bernoulli distribution incurs no or $o(T)$ penalty, whereas the latter requires the penalty to be $\Omega(T)$ when predictions systematically deviate from the correct parameter.
An equally important feature of a calibration measure is that it defines an ideal that could be asymptotically achieved for all prediction tasks. This is formalized by the existence of forecasting algorithms with an $o(T)$ penalty in the adversarial sequential prediction setting~\cite{FV98}, where the sequence of outcomes is produced by an adaptive adversary.

With these desiderata in place (namely truthfulness, soundness, completeness, and asymptotic calibration), we ask \emph{whether there are calibration measures that simultaneously satisfy all these criteria?} 
We answer this question in three parts:
\paragraph{Part I: We show that existing calibration measures do not simultaneously meet these criteria.} 
We conduct a taxonomy of several existing calibration measures in terms of their completeness, soundness and truthfulness (formally defined in Section~\ref{sec:preliminaries}). We show that almost all of them have large \emph{truthfulness gaps}: There are simple distributions on which an $O(1)$ (or even zero) penalty is achievable, while truthful predictions lead to a $\poly(T)$ penalty; see Table~\ref{table:condensed} for details. 

Indeed, this lack of truthfulness is not limited to specific or contrived distributions. In the next theorem which we will prove in Appendix~\ref{app:product-distribution}, we strengthen these findings  by showing that a commonly used notion of calibration systematically suffers large truthfulness gaps in most forecasting instances.

\begin{theorem}[Informal]\label{thm:product-distribution-informal}
    For every product distribution with marginals bounded away from $0$ and $1$, the truthful forecaster incurs $\Omega(\sqrt{T})$ smooth calibration error but there exists a forecasting algorithm that incurs only $\polylog(T)$ smooth calibration error. 
\end{theorem}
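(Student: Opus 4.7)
\begin{sketch}
The plan is to lower-bound the truthful forecaster's smooth calibration error and to exhibit a distribution-aware algorithm matching the upper bound.

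\textbf{Lower bound.} First I would observe that for the truthful forecaster $p_t = \mu_t$, the constant function $f \equiv 1$ (which is $1$-Lipschitz and bounded in $[-1,1]$) gives
\begin{align*}
\smCE \ \geq\ \left| \sum_{t=1}^T f(p_t)(y_t - p_t) \right| \ =\ \left| \sum_{t=1}^T (y_t - \mu_t) \right|.
\end{align*}
Under the product distribution, the summands are independent, mean-zero, and bounded in $[-1,1]$, with each variance $\mu_t(1-\mu_t) \geq c(1-c) > 0$ since the marginals are bounded away from $0$ and $1$. Hence $\Var[\sum_t (y_t - \mu_t)] = \Omega(T)$, and by Khintchine's inequality (or the Berry--Esseen CLT together with uniform integrability), its expected absolute value is $\Omega(\sqrt{T})$.

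\textbf{Upper bound.} For the upper bound I would construct a distribution-aware algorithm following the ``sidestepping'' philosophy of \cite{FH21, QV21, QZ24}: predict values deliberately offset from $\mu_t$ so that cumulative biases cancel. Concretely, partition the time steps into $K = \polylog(T)$ groups $\T_1, \ldots, \T_K$ according to the value of $\mu_t$, so that within each group the marginals lie in an interval of width $O(1/K)$. Within each group $\T_k$, I would run a dedicated subroutine adapted from the $\polylog(T)$-distance-from-calibration algorithm of \cite{QZ24} for i.i.d.\ Bernoulli outcomes, designed so that its induced smooth calibration error on the restricted steps is $\polylog(T)$. The sub-additive decomposition
\begin{align*}
\smCE(p, y) \ \leq\ \sum_{k=1}^K \sup_{f}\left| \sum_{t \in \T_k} f(p_t)(y_t - p_t) \right|
\end{align*}
(with each supremum taken over $1$-Lipschitz $f \colon [0,1] \to [-1,1]$) then combines the $K = \polylog(T)$ per-group bounds into a total of $\polylog(T)$.

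The hard part will be designing and analyzing the per-group subroutine. The algorithm of \cite{QZ24} is tailored to i.i.d.\ Bernoulli outcomes, whereas here the within-group outcomes are independent Bernoullis with parameters varying in a narrow window but not identically distributed. I expect to bridge this gap via a coupling or perturbation argument whose error terms are absorbed by taking $K$ polylogarithmic in $T$. Independence of the outcomes (i.e., the product structure) is used crucially to drive per-group concentration through martingale inequalities; this is precisely the structural feature that the truthful forecaster fails to exploit, which is why the two bounds can differ by $\Omega(\sqrt{T}/\polylog(T))$.
\end{sketch}
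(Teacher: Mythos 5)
Your argument is essentially identical to the paper's: both take the constant test function $f \equiv 1$, use independence and the assumption that the marginals are bounded away from $0$ and $1$ to get variance $\Omega(T)$, and then invoke Berry--Esseen (or a CLT) to conclude the expected absolute deviation is $\Omega(\sqrt{T})$. This part is fine.

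\textbf{Upper bound.} Here there is a genuine gap, and the bucketing-by-$\mu_t$ decomposition will not reach $\polylog(T)$. The problem is the cross term that the sub-additive decomposition forces you to pay. Suppose bucket $\T_k$ has $n_k$ steps with marginals all in an interval of width $O(1/K)$, and your subroutine (as in \cite{QZ24}) predicts a constant $\alpha$ on the first half, observes a cumulative bias $\Deltafirst \approx \sqrt{n_k}$ in absolute value, and then predicts an offset $\beta$ on the second half so that the total bucket bias returns to $O(1)$. Because the first- and second-half means of the bucket can differ by up to $\Theta(1/K)$, you get $|\alpha - \beta| \gtrsim 1/K$, and then
\[
\smCE_{\T_k} \;\lesssim\; \bigl|\Delta_\alpha + \Delta_\beta\bigr| + |\alpha - \beta| \cdot |\Delta_\alpha| \;\approx\; O(1) + \frac{\sqrt{n_k}}{K}.
\]
Summing over buckets and applying Cauchy--Schwarz ($\sum_k \sqrt{n_k} \le \sqrt{KT}$) gives a total of order $K + \sqrt{T/K}$, which is $\Omega(\sqrt{T}/\polylog(T))$ at $K = \polylog(T)$ and, even at the optimal $K \approx T^{1/3}$, is only $\tilde{O}(T^{1/3})$ --- nowhere near $\polylog(T)$. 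The ``coupling/perturbation'' you hope will absorb the error suffers the same fate: coupling each $\Bern(\mu_t)$ to $\Bern(\alpha_k)$ flips $\Theta(n_k/K)$ outcomes per bucket, i.e.\ $\Theta(T/K)$ total, which is $T/\polylog(T)$ at $K = \polylog(T)$.

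The paper does not bucket by the value of $\mu_t$ at all. Its algorithm (Algorithm~\ref{algo:product-distributions}) processes the timeline in halving rounds and, inside each round, \emph{adaptively} tests whether the first- and second-half means $\mufirst, \musecond$ are within $O(\sqrt{\log T / H})$ of each other. When they are, the offset strategy is used and $|\alpha - \beta|$ is guaranteed to be $O(\sqrt{\log T/H})$ --- far smaller than the $1/K$ you get from coarse bucketing --- so the cross term is $O(\sqrt{\log T/H}) \cdot O(\sqrt{H\log T}) = O(\log T)$ per round. When they are far, the algorithm predicts a midpoint $\alpha$; the realized partial sums then provably cross $\alpha H$ between times $H$ and $2H$ w.h.p., so the bias returns to $O(1)$ with only a single prediction value used (no cross term at all). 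It is this data-driven threshold, not a value-based partition of the domain, that yields the per-round $\polylog(T)$ bound and hence the total $O(\log^{3/2} T)$. You would need some analogue of this adaptive case split, or a different decomposition entirely, to close the gap.
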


A notable exception in Table~\ref{table:condensed} is the class of calibration measures induced by \emph{proper scoring rules}, i.e., loss functions for probabilistic predictions that are optimized by truthful forecasts. By definition, these calibration measures are $(1, 0)$-truthful. However, none of them is complete: as we show in Appendix~\ref{app:taxonomy}, even on i.i.d.\ Bernoulli trials, the optimal and truthful predictions incur an $\Omega(T)$ penalty. 

\begin{table}
    \renewcommand{\arraystretch}{1.6}
    \centering

    \begin{tabular}{|c|c|c|c|}
    \hline
    Calibration Measure  & Complete?  & Sound?   & Truthful?\\ \hline
    Expected Calibration Error, Maximum Swap Regret & $\checkmark$ & $\checkmark$ & $0$-$\Omega(T)$ gap \\ [.2em] \hline 
    \makecell{Smooth Calibration, Distance from Calibration,\\Interval Calibration, Laplace-Kernel Calibration} & $\checkmark$  & $\checkmark$ & $0$-$\Omega(\sqrt{T})$ gap \\ [.2em]  \hline 
    U-Calibration Error & $\checkmark$  & $\checkmark$ & $O(1)$-$\Omega(\sqrt{T})$ gap \\ [.2em]  \hline 
    Proper Scoring Rules & $\times$  & $\checkmark$ & $(1, 0)$-truthful \\ [.2em]
    \hline \hline
    {\bf Subsampled Smooth Calibration Error} & $\checkmark$  & $\checkmark$ & $(O(1), 0)$-truthful \\ [.2em] \hline
    \end{tabular}
    \vspace{12pt}
    \caption{Evaluation of existing calibration measures along with $\SSCE$, in terms of completeness, soundness and truthfulness (Definitions \ref{def:complete-and-sound}~and~\ref{def:truthful}). An $\alpha$-$\beta$ truthfulness gap means that there is a prediction instance on which forecasting according to the true conditional distribution of the next outcome incurs more than $\beta$ penalty, but there is a forecasting strategy that incurs at most $\alpha$ penalty.
    See Appendix~\ref{app:taxonomy} for more details.}  
    \label{table:condensed}
\end{table}

\paragraph{Part II: We introduce a new calibration measure, called SSCE, that is sound, complete, and approximately truthful.} We do this using a simple adjustment to an existing notion of calibration measure: we \emph{subsample} a subset of the time steps and evaluate the \emph{smooth calibration error}~\cite{KF08} on this sampled set only. We call this the \emph{Subsampled Smooth Calibration Error (SSCE)} and formally define it in Section~\ref{sec:preliminaries}. Our main result is that SSCE is $(O(1), 0)$-truthful.
\begin{theorem}[Main Theorem]\label{thm:SSCE-truthful}
    There exists a universal constant $c > 0$ such that the SSCE is $(c, 0)$-truthful. Furthermore, the SSCE is complete and sound.
\end{theorem}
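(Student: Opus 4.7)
The plan is to establish the three properties of SSCE separately. Completeness and soundness follow from the corresponding properties of the smooth calibration error $\smCE$, since $\SSCE$ is its subsampled variant. For completeness, any existing algorithm that achieves $o(T)$ smooth calibration on arbitrary sequences also achieves $o(T)$ SSCE because subsampling restricts the supremum over $1$-Lipschitz witness functions. For soundness, when outcomes are i.i.d.\ Bernoulli and predictions are systematically biased, a random subsample of expected size $\Theta(T)$ still exhibits $\Omega(T)$ smooth calibration error in expectation, since the bias does not vanish under random subsampling.

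The crux is the $(O(1), 0)$-truthfulness claim. I would prove this by sandwiching both the expected SSCE of the truthful forecaster and $\OPT$ between matching bounds in terms of a distribution-dependent quantity $\Phi(\D)$ capturing the irreducible calibration error. I expect $\Phi(\D)$ to involve the conditional variances $\Var(y_t \mid \F_{t-1})$ of the outcomes under the true distribution, or perhaps an aggregate such as $\sum_t \Var(y_t \mid \F_{t-1})^{1/2}$. Concretely, I would prove $\E[\SSCE(\text{truthful})] \leq c_1 \cdot \Phi(\D)$ and $\E[\SSCE(\A)] \geq c_2 \cdot \Phi(\D)$ for every forecasting strategy $\A$ with universal constants $c_1, c_2 > 0$, which together yield the $(O(1), 0)$-truthful bound.

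For the upper bound on the truthful forecaster, I would exploit the martingale-difference structure of $\{y_t - p_t\}$: a covering argument over the bounded $1$-Lipschitz witness functions combined with martingale concentration bounds the restricted smooth calibration error on any subsample by a function of the conditional variances on that subsample; averaging over the random subsample then yields $O(\Phi(\D))$. For the lower bound on $\OPT$, the central insight is that subsampling randomizes the timesteps being evaluated, so any predictor whose forecasts deviate from the true conditional expectation will be detected by an appropriately chosen witness function on typical subsamples. A plausible route is a minimax-style argument: for each subsample $S$, construct adaptively a $1$-Lipschitz witness function $f_S$ exposing the forecaster's deviations on $S$, and average its contribution over the randomness of $S$.

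The main obstacle is the $\OPT$ lower bound. As the taxonomy shows, cheating strategies (e.g., predicting the running empirical mean) already achieve $\polylog(T)$ or even zero smooth calibration error on the full sequence. The proof must therefore argue that such strategies cannot simultaneously minimize the restricted smooth calibration error on a typical random subsample: biases that cancel across the entire horizon reappear when one looks at a random subset. Formalizing this cancellation-breaking property of subsampling---and showing that it robustly forces any forecaster to pay an $\Omega(\Phi(\D))$ penalty regardless of its strategy---is the technical heart of the proof, and also the reason the subsampling distribution in the definition of $\SSCE$ must be chosen with care.
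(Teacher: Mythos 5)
Your high-level plan---sandwiching the truthful forecaster's expected $\SSCE$ and $\OPT_\SSCE(\D)$ between matching bounds in terms of a variance-based complexity measure $\Phi(\D)$---is exactly the paper's strategy. However, several of the concrete steps you propose diverge from what actually works, and you have not identified the technical obstacles that make this theorem non-trivial.

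First, your proposed complexity measure is not the right one. You suggest $\sum_t \Var(y_t\mid\cF_{t-1})^{1/2}$ or similar; the paper uses $\Phi(\D) = \E\bigl[\gamma\bigl(\sum_{t} \pstar_t(1-\pstar_t)\bigr)\bigr]$ with $\gamma(x) = \min\{x,\sqrt x\}$, i.e., the expectation of a concave function of the \emph{total} realized variance, not a sum of per-step contributions. This matters: a sum of square-rooted variances would scale as $\Theta(T)$ on a uniform coin-flip sequence, while the actual answer is $\Theta(\sqrt T)$.

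Second, your completeness argument rests on the claim that ``subsampling restricts the supremum over $1$-Lipschitz witness functions,'' implicitly asserting $\SSCE(x,p) \le \smCE(x,p)$. This is false, and indeed it is false by design: the entire point of subsampling is that a forecaster can make $\smCE(x,p)=0$ by exploiting anti-correlated biases across the sequence, yet still pay $\Omega(\sqrt T)$ under $\SSCE$ because random subsamples break that cancellation. The paper only proves the one-sided relation $\SSCE(x,p) \le \tfrac12\smCE(x,p) + O(\sqrt T)$, and even that requires a chaining argument. Completeness is instead verified directly: $\SSCE(x,x)=0$ since every summand vanishes, and the i.i.d.\ case is a corollary of the upper-bound theorem.

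Third, for the upper bound you correctly identify covering plus martingale concentration, but you do not flag the real difficulty: the realized variance $\Var_T$ is itself a random, stopping-time-like quantity correlated with the partial sums, so a naive application of Freedman's inequality (together with a union bound over possible variance values) introduces a spurious $\log T$ or worse. The paper's proof devotes substantial effort to a doubling-trick decomposition of the horizon into epochs in which the realized variance grows geometrically, applying Freedman within each epoch. Without this idea the proposed bound would carry extra polylogarithmic factors and would not yield $(c,0)$-truthfulness.

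Fourth, for the $\OPT$ lower bound you propose a minimax-style construction of adaptive witness functions $f_S$. The paper's proof does something more elementary and, in hindsight, necessary: it uses \emph{only} the constant witness functions $\pm 1$, so that $\SSCE(x,p) \ge \E_y\bigl|\sum_t y_t(x_t-p_t)\bigr|$, and then extracts the lower bound from the anti-concentration of this random subsample sum (via Berry--Esseen), obtaining $\SSCE(x,p) \ge \Omega(\sqrt{N_T})$ where $N_T = \#\{t : |x_t - p_t| \ge 1/2\}$. The remaining work is to show $\E[\sqrt{N_T}] \ge \Omega(\E[\gamma(\Var_T)])$, which does \emph{not} follow from the submartingale relation $\E[N_T]\ge\E[\Var_T]$ and instead requires a block decomposition of the horizon, an Azuma-type concentration bound, and a separate argument for the low-variance regime. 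Your minimax idea also misses the central obstruction: a forecaster who sets $p_t = x_t$ (perfect binary guesses) incurs zero $\SSCE$ on that realization regardless of the witness, so the argument must quantify how often \emph{any} forecaster is forced to guess wrong, not how cleverly one can test the predictions.

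In short, the skeleton (sandwich via a variance-based quantity) is right and matches the paper, but the proposed route would not go through as stated: the complexity measure has the wrong form, the monotonicity claim is false, the Freedman/epoching issue is unaddressed, and the lower-bound mechanism is different in kind from what the paper uses.
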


As shown in Table~\ref{table:condensed}, to the best of our knowledge, SSCE is the first calibration measure
that simultaneously achieves completeness, soundness, and non-trivial truthfulness.

While our methodology for constructing this calibration measure is simple, the analytical steps required to establish the $\left(O(1),0\right)$-truthfulness guarantee are far from simple. We dedicate most of the main body of this paper to illustrating the proof ideas in a series of warmups to Theorem~\ref{thm:SSCE-truthful}.

\paragraph{Part III: There is a forecasting algorithm that achieves $O(\sqrt{T})$ SSCE even in the adversarial setting.} While our study of truthfulness of calibration measures is necessarily focused on when the forecaster knows the conditional distribution of the next outcome, it is important to ensure that, even in the adversarial setting, a sublinear penalty can be achieved for this calibration measure.
For this, we study the sequential calibration setting (e.g.,~\cite{FV98}) where the outcome at time $t$ is chosen by an adaptive adversary who has observed the sequence of earlier outcomes and predictions. We show that an $O(\sqrt{T})$ SSCE is achievable.
\begin{theorem}\label{thm:adversarial-upper-bound}
    In the adversarial sequential calibration setting, there is a deterministic strategy for the forecaster that achieves an $O(\sqrt{T})$ SSCE.
\end{theorem}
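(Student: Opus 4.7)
The plan is to reduce the SSCE bound to a known bound on the standard smooth calibration error (smCE), since by construction SSCE is obtained by subsampling time steps and then evaluating smCE on the subsample. The two main steps are: (i) invoke a known deterministic algorithm that achieves $O(\sqrt{T})$ smCE against an adaptive adversary, and (ii) argue that, up to constants, the expected smCE on a random subsample is no larger than the full-sequence smCE plus lower-order fluctuations.

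For step (i), I would use a Foster--Vohra-style minimax argument, as analyzed by \cite{KF08}, which produces a deterministic forecaster with $\smCE(p, y) = O(\sqrt{T})$ for any adversarially chosen outcome sequence $y$. Concretely, smCE admits a saddle-point characterization against the class of $1$-Lipschitz witnesses, so one can extract a deterministic minimax strategy via LP duality or a direct potential-function argument. This gives us a deterministic predictor whose full-horizon smCE is $O(\sqrt{T})$ no matter what the adversary plays.

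For step (ii), using the definition of SSCE from Section~\ref{sec:preliminaries}, I would show that for any deterministic trajectory $(p, y)$, $\E_S[\smCE_S(p, y)] \le C \cdot \smCE(p, y) + O(\sqrt{T})$, where the expectation is over the random subsample $S$ used to define SSCE. The intuition is that a $1$-Lipschitz witness that is optimal on $S$ can be coupled with a witness on the full sequence via averaging, and any deviation can be controlled by a uniform-convergence / Rademacher complexity bound over the compact class of $1$-Lipschitz functions on $[0,1]$. Combining steps (i) and (ii) yields $\SSCE = O(\sqrt{T})$.

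The main obstacle is step (ii): a witness that is optimal on a random subsample need not correspond to a witness on the full sequence, so the bound $\E_S[\smCE_S] \le O(\smCE)$ is not automatic. Depending on the precise subsampling scheme (e.g., Poisson thinning versus uniform fixed-size sampling), one may need either a direct coupling argument or a uniform-convergence bound over $1$-Lipschitz functions, exploiting that the relevant function class has bounded metric entropy. A secondary obstacle is keeping the forecasting strategy deterministic; this rules out naive randomized calibration constructions and forces the use of an explicit minimax or potential-based construction for step (i).
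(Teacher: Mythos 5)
Your proposal takes essentially the same route as the paper: reduce SSCE to a known deterministic $O(\sqrt T)$ calibration guarantee, and prove a comparison of the form $\SSCE(x,p) \le C \cdot \smCE(x,p) + O(\sqrt T)$ via a chaining/uniform-convergence argument over the $1$-Lipschitz class. However, two pieces of your sketch need repair.

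For step (i), the paper does not rely on a Kakade--Foster-style minimax argument. It instead invokes the result of \cite{ACRS24} that a deterministic forecaster achieves an $O(\sqrt{T})$ \emph{distance from calibration} against any adaptive adversary, and then converts this to an $\smCE$ bound via $\frac{1}{2}\smCE(x,p)\le\caldist(x,p)$ from \cite{BGHN23}. To my knowledge, \cite{KF08} introduced smooth calibration but did not establish the deterministic $O(\sqrt{T})$ rate, so as written your step (i) leans on a citation that may not deliver what you need; this is fixable by citing \cite{ACRS24} plus the \cite{BGHN23} inequality instead.

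For step (ii), you correctly identify the obstacle---the optimal Lipschitz witness for $\smCE$ on a random subsample $S$ need not be a useful witness on the full sequence---and propose coupling or a Rademacher-type bound. The paper's Lemma~\ref{lemma:SSCE-vs-smCE} sidesteps the coupling difficulty entirely with a simple algebraic decomposition: writing $y_t = (y_t - \tfrac12) + \tfrac12$ inside the supremum gives
\[
\SSCE(x,p) \le \Ex{y \sim \Unif(\{0,1\}^T)}{\sup_{f\in\F}\sum_{t=1}^T \bigl(y_t-\tfrac12\bigr) f(p_t)(x_t-p_t)} + \tfrac12 \sup_{f\in\F}\sum_{t=1}^T f(p_t)(x_t-p_t),
\]
so the $\frac12\smCE(x,p)$ term appears exactly, with no need to relate a subsample-optimal witness to a full-sequence one. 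The remaining term is a mean-zero (in $y$) empirical process, and it is bounded by precisely the chaining-plus-Hoeffding argument over the Lipschitz cover $\cF_\delta$ that you anticipate, yielding the additive $O(\sqrt T)$. So your high-level intuition for step (ii) is sound, but the specific decomposition is the missing ingredient that makes the difficulty you flag disappear; without it, the bound $\Ex{S}{\smCE(x|_S,p|_S)} \le C\cdot\smCE(x,p)+O(\sqrt T)$ is not automatic, as you yourself observe.
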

An interesting and important feature of this result is that it achieves an $O(\sqrt{T})$ rate whereas an $O(\sqrt{T})$ rate for the expected calibration error is known to be impossible to achieve~\cite{QV21}.
Together our Theorems \ref{thm:SSCE-truthful}~and~\ref{thm:adversarial-upper-bound} establish that SSCE is a truthful and useful calibration measure.

\subsection{Related Work}
There is a large body of work on calibration, a notion that dates back to the 1950s~\cite{Brier50,Dawid82,Dawid85} and has been applied to game theory~\cite{FV97,haghtalab2024calibrated}, machine learning~\cite{GPSW17}, and algorithmic fairness~\cite{KMR17,PRWKW17,HKRR18,haghtalab2024unifying}. We will restrict our discussion to sequential calibration and the systematic study of calibration measures, which are the closest to this work.

\paragraph{Sequential calibration.} Foster and Vohra~\cite{FV98} first proved that one can achieve \emph{asymptotic calibration} on arbitrary and adaptive outcomes. Formally, they gave a forecasting algorithm with an $O(T^{2/3})$ ECE in expectation, when predicting $T$ binary outcomes chosen by an adaptive adversary. Subsequent work gave alternative and simpler proofs of the result~\cite{FL99,Foster99,Hart22}, extended the result to other calibration measures~\cite{KF08,FH18,FH21,QZ24}, and proved lower bounds on the optimal ECE~\cite{QV21}. Most closely related to our approach is the work of~\cite{FRST11}, who studied a stronger notion that requires calibration on a family of \emph{checking rules}, where each checking rule specifies a subset of the time horizon. Despite the apparent similarity, their notion is qualitatively different from the SSCE, since we take an expectation over the subsampled horizon, whereas they take the maximum. In particular, no forecaster can be calibrated in their definition if the checking rule family contains all subsets of $[T]$, since there always exists a checking rule that strongly correlates with the outcomes.

\paragraph{Calibration measures.} The recent work of B\l{}asiok, Gopalan, Hu and Nakkiran~\cite{BGHN23} initiated the rigorous study of calibration measures. Their work focused on the offline setup, where there is a known marginal distribution over the feature space, and each predictor maps the feature space to $[0, 1]$. They proposed to use the \emph{distance from calibration}---the $\ell_1$ distance from the predictor to the closest predictor that is perfectly calibrated---as the ground truth, and studied whether existing calibration measures are consistent with it. Note that completeness and soundness are defined differently in~\cite{BGHN23}: a calibration measure is called complete (resp., sound) if it is upper (resp., lower) bounded by a polynomial of the distance from calibration. Since the distance from calibration is far from being truthful in the online setup (as shown by~\cite{QZ24}), our definition of completeness and soundness set up minimal conditions for an error metric to be regarded as measuring calibration, rather than enforcing closeness to the distance from calibration.

\paragraph{Subsampling.} Our new calibration measure is derived from subsampling the time horizon. This simple idea has been shown to be effective in various different contexts, including privacy amplification in differential privacy (e.g.,\cite[Section 6]{Steinke22}), handling adversarial corruptions~\cite{BLMT22}, as well as adaptive data analysis~\cite{Blanc23}.

\paragraph{Proper scoring rules.} Proper scoring rules~\cite{WM68} are error metrics for probabilistic forecasts that are optimized when the forecaster predicts according to the true distribution. While the error metrics induced by proper scoring rules are (perfectly) truthful by definition, as we show in Appendix~\ref{app:taxonomy}, they are qualitatively different from the usual calibration measures and, in particular, do not meet the completeness criterion. We note that a recent line of work~\cite{CY21,NNW21,LHSW22,PW22,HSLW23} studied the \emph{optimization of scoring rules}, namely, finding the proper scoring rule that maximally incentivizes the forecaster to exert effort to obtain additional information.

\section{Preliminaries}\label{sec:preliminaries}

\paragraph{Sequential prediction.} We consider the following prediction setup: First, a sequence $x \in \{0, 1\}^T$ is sampled from distribution $\D$. At each step $t \in [T]$, the forecaster makes a prediction $p_t \in [0, 1]$, after which $x_t$ is revealed. Formally, a deterministic forecaster is a function $\A: \bigcup_{t=1}^{T}\{0,1\}^{t-1} \to [0, 1]$, where $\A(b_1, b_2, \ldots, b_{t-1})$ specifies the forecaster's prediction at step $t$ if the first $t-1$ observations match $b_{1:(t-1)}$. Distribution $\D$ and forecaster $\A$ naturally induce a joint distribution of $(x, p) \in \{0, 1\}^T\times [0, 1]^T$ via sampling $x \sim \D$ and predicting $p_t = \A(x_1, x_2, \ldots, x_{t-1})$.

Note that we could have defined the forecaster as a function of both the outcomes $x_{1:(t-1)}$ and the predictions $p_{1:(t-1)}$ in the past. This alternative definition is equivalent to ours, since $p_{1:(t-1)}$ would be uniquely determined by $x_{1:(t-1)}$. We could also have considered \emph{randomized} forecasters, which are specified by distributions over deterministic forecasters. However, as we will see later, restricting our attention to deterministic forecasters does not affect the subsequent definitions.

\paragraph{Calibration measures.} The quality of the forecaster's predictions in the setting above is quantified by calibration measures. Formally, a calibration measure $\CM$ is a family of functions $\{\CM_T: T \in \bbN\}$, where each $\CM_T$ maps $\{0, 1\}^T \times [0, 1]^T$ to $[0, T]$. We will frequently omit the subscript $T$, since it is usually clear from the context. With respect to calibration measure $\CM$, the expected penalty incurred by forecaster $\A$ on distribution $\D$ is defined as $\err_{\CM}(\D, \A) \coloneq \Ex{(x, p)\sim (\D, \A)}{\CM(x, p)}$, where $(x, p) \sim (\D, \A)$ denotes sampling a sequence $x$ and predictions $p$ from the joint distribution induced by $\D$ and $\A$.

One example of  calibration measures is  the \emph{smooth calibration error} introduced by~\cite{KF08} that is defined as
$
\smCE(x, p) \coloneqq \sup_{f \in \mathcal{F}} \sum_{t=1}^T f(p_t) (x_t - p_t),$
where $\mathcal{F}$ is the family of $1$-Lipschitz functions from $[0, 1]$ to $[-1, 1]$.
In this work, we introduce a new calibration measure called \emph{Subsampled Smooth Calibration Error (SSCE)} that is defined by subsampling a subset of the time horizon, and evaluating the smooth calibration error on it. We will formally define this measure next. In the following, $\Unif(S)$ denotes the uniform distribution over a finite set $S$. For a $T$-dimensional vector $x$ and $S \subseteq [T]$, $x|_S$ denotes the $|S|$-dimensional vector formed by the entries of $x$ indexed by $S$.
\begin{definition}[Subsampled Smooth Calibration Error]\label{def:SSCE}
     For a sequence of outcomes $x \in \{0, 1\}^T$ and predictions $p \in [0, 1]^T$, the \emph{Subsampled Smooth Calibration Error (SSCE)} is defined as
     \[
         \RSCE(x, p) \coloneq \Ex{S \sim \Unif(2^{[T]})}{\smCE(x|_S, p|_S)}
         =   \Ex{y \sim \Unif(\{0, 1\}^T)}{\sup_{f \in \F}\sum_{t=1}^{T}y_t \cdot f(p_t)\cdot(x_t - p_t)}.
     \]
\end{definition}

\paragraph{Completeness and soundness.} We give minimal conditions for a calibration measure to be regarded as complete (intuitively ``accurate'' predictions have a small penalty) and sound (intuitively ``inaccurate'' predictions have a large penalty).

\begin{definition}[Completeness and soundness]\label{def:complete-and-sound}
    A calibration measure $\CM$ is complete if: (1) For any $x \in \{0, 1\}^T$, $\CM_T(x, x) = 0$; (2) For any $\alpha \in [0, 1]$,
    $\Ex{x_1, \ldots, x_T \sim \Bern(\alpha)}{\CM_T(x, \alpha\cdot\vec{1}_T)} = o_{\alpha}(T)$.
    The calibration measure is sound if: (1) For any $x \in \{0, 1\}^T$, $\CM_T(x, \vec{1}_T - x) = \Omega(T)$; (2) For any $\alpha, \beta \in [0, 1]$ such that $\alpha \ne \beta$,
    $\Ex{x_1, \ldots, x_T \sim \Bern(\alpha)}{\CM_T(x, \beta\cdot\vec{1}_T)} = \Omega_{\alpha, \beta}(T)$. Here, $o_{\alpha}(\cdot)$ and $\Omega_{\alpha,\beta}(\cdot)$ may hide constant factors that depend on the parameters in the subscript.
\end{definition}

\paragraph{Truthfulness.} To define the truthfulness of a calibration measure, we introduce the \emph{truthful forecaster} and the \emph{optimal error} for a distribution $\D$.

\begin{definition}[Truthful forecaster]\label{def:truthful-forecaster}
    With respect to distribution $\D \in \Delta(\{0, 1\}^T)$, the truthful forecaster is defined as $\Atruthful(\D)(b_1, b_2, \ldots, b_{t-1}) \coloneq \pr{x \sim \D}{x_t = 1\ \Big|\ x_{1:(t-1)} = b_{1:(t-1)}}$.
\end{definition}

Arguably, $\Atruthful(\D)$ is the only forecaster that makes the ``right'' predictions on distribution $\D$.

\begin{definition}[Optimal error]\label{def:optimal-error}
    The optimal error on distribution $\D \in \Delta(\{0, 1\}^T)$ with respect to calibration measure $\CM$ is defined as $\OPT_{\CM}(\D) \coloneq \inf_{\A}\err_{\CM}(\D, \A)$, where $\A$ ranges over all deterministic forecasters.
\end{definition}

Note that by an averaging argument, the definition of $\OPT_{\CM}(\D)$ is unchanged if we take an infimum over randomized forecasters.

A calibration measure is truthful if, on every distribution, the truthful forecaster is near-optimal.

\begin{definition}[Truthfulness of calibration measures]\label{def:truthful}
    A calibration measure $\CM$ is $(\alpha, \beta)$-truthful if, for every $\D \in \Delta(\{0, 1\}^T)$, $\err_{\CM}(\D, \Atruthful(\D)) \le \alpha \cdot \OPT_\CM(\D) + \beta$. Conversely, $\CM$ is said to have an $\alpha$-$\beta$ truthfulness gap if, for some distribution $\D$, $\OPT_\CM(\D) \le \alpha$ and $\err_{\CM}(\D, \Atruthful(\D))\ge \beta$.
\end{definition}

\section{Technical Overview}\label{sec:technical-overview}
In this section, we briefly discuss the main technical ideas and challenges behind the proofs of Theorems~\ref{thm:product-distribution-informal} and \ref{thm:SSCE-truthful}. We provide more details on our main result, i.e., that SSCE is $(O(1), 0)$-truthful, in Sections \ref{sec:warmup}~through~\ref{sec:lower-bound}. Theorem~\ref{thm:adversarial-upper-bound}, which we prove in Section~\ref{sec:sqrt-T-upper-bound}, follows from a recent result of~\cite{ACRS24} on minimizing the distance from calibration in the adversarial setup, along with a new result connecting the SSCE to the distance from calibration. We defer the proof of Theorem \ref{thm:product-distribution-informal}
to Appendix \ref{app:product-distribution}.

\paragraph{A simple distribution that witnesses truthfulness gaps.} Inspired by~\cite[Example 2]{QV21}, we consider the distribution $\D$ specified as follows: The time horizon is divided into $T/3$ blocks of length $3$, each with a uniformly random bit, followed by a zero and a one. Within each block, the truthful forecaster predicts $1/2$, $0$ and $1$ in order. Then, among the steps on which $1/2$ is predicted, the frequency of ones is typically $1/2 \pm \Theta(1/\sqrt{T})$. This deviation results in a $\Theta(\sqrt{T})$ penalty under most calibration measures (concretely, all calibration measures in the first two rows of Table~\ref{table:condensed}).

However, there is a different strategy that ensures perfect calibration, and thus a zero penalty under most calibration measures. Within each block, the forecaster predicts $1/2$ on the first step. If the bit turns out to be $1$, the forecaster maintains perfect calibration by predicting $1/2$ on the second step, on which the outcome is known to be $0$; otherwise, the forecaster accomplishes the same by predicting $1/2$ on the third step. Therefore, the distribution $\D$ witnesses a $0$-$\Omega(\sqrt{T})$ truthfulness gap for every calibration measure in the first two rows of Table~\ref{table:condensed}.

The importance of subsampling in the SSCE becomes apparent in light of the example above. On distribution $\D$, the truthful forecaster has to pay a $\Theta(\sqrt{T})$ cost for the mild deviation from the expectation, while a strategic forecaster avoids this deviation by correlating the predictions with the biases in the past. With the subsampling, however, the forecaster is no longer sure about the biases that factor into the penalty. This ensures that, compared to truth-telling, the benefit from predicting strategically is marginal, and thus makes the truthfulness guarantee in Theorem~\ref{thm:SSCE-truthful} possible.

\paragraph{Establishing truthfulness via martingale inequalities.} We prove that the SSCE is $(O(1), 0)$-truthful in three steps: (1) Define a complexity measure $\sigma(\D)$ of distribution $\D$; (2) Show that $\err_{\SSCE}(\D, \Atruthful(\D)) = O(\sigma(\D))$; (3) Show that $\OPT_{\SSCE}(\D) = \Omega(\sigma(\D))$.

As we elaborate in Section~\ref{sec:upper-bound}, the crux of Step~(2) is to control the expected deviation of a martingale $(M_t)_{0 \le t \le T}$ with respect to filtration $(\Fil_t)$ by the its \emph{realized variance} $\Var_t \coloneq \sum_{s=1}^{t}\var{}{M_s | \Fil_{s-1}}$, which is highly non-trivial as the two processes $(M_t)$ and $(\Var_t)$ are correlated. In more detail, the filtration $(\Fil_t)$ corresponds to the randomness in $x \sim \D$, while $(M_t)$ tracks the biases in the predictions (on a subset of the time horizon) tested by a Lipschitz function. We note that such a bound would easily follow from ``off-the-shelf'' concentration inequalities for martingales (e.g., Freedman's inequality~\cite{Freedman75}), if the total realized variance $\Var_T$ were uniformly bounded. However, in general, $\Var_T$ may vary drastically, and directly applying these concentration inequalities would introduce an extra super-constant factor. Our workaround is a ``doubling trick'' that divides the time horizon into \emph{epochs}, the realized variances in which grow exponentially. We then apply Freedman's inequality to each epoch separately. In Section~\ref{sec:upper-bound}, we formulate a toy random walk problem that highlights this challenge and demonstrates our solution to it, which is of independent interest.

Similarly, as we show in Section~\ref{sec:lower-bound}, the crux of Step~(3) is to establish another martingale inequality.
We first show that for fixed $x$ and $p$, we have $\SSCE(x, p) = \Omega(\sqrt{N_T})$, where $N_t \coloneq \sum_{s=1}^{t}\1{|x_s - p_s| \ge 1/2}$. Furthermore, over the randomness in $x \sim \D$, the realized variance process $(\Var_t)$ defined above is shown to lower bound $(N_t)$, i.e., $(N_t - \Var_t)$ is a sub-martingale.
However, the desired result requires the lower bound $\Ex{}{\sqrt{N_T}} \ge \Omega(1)\cdot\Ex{}{\sqrt{\Var_T}}$, which does \emph{not} follow from $\Ex{}{N_T - \Var_T} \ge 0$ in general. This challenge necessitates a more careful analysis tailored to the specific properties of the processes $(N_t)$ and $(\Var_t)$.

\section{Warmup: The Product Distribution Case}\label{sec:warmup}
As a warmup, in this section, we start by showing that $\SSCE$ is $(O(1), O(\log T))$-truthful for product distributions. This is a weaker version of Theorem~\ref{thm:SSCE-truthful} in terms of both the truthfulness  parameters of $\SSCE$ and the restriction to product distributions. In Sections \ref{sec:upper-bound}~and~\ref{sec:lower-bound}, we outline how we will remove these restrictions and improve the analysis of truthfulness.

For distribution $\D = \prod_{t=1}^{T}\Bern(\pstar_t)$, take $\sigma^2 \coloneq \var{x \sim \D}{\sum_{t=1}^{T}x_t}=\sum_{t=1}^{T}\pstar_t (1 - \pstar_t)$ as a complexity measure of the distribution of outcomes. We will show that $\err_{\SSCE}(\D, \Atruthful(\D)) = O(\sigma + \log T)$ and $\OPT_{\SSCE}(\D) = \Omega(\sigma) - O(1)$.

\subsection{Upper Bound the SSCE of the Truthful Forecaster}
We first show that the truthful forecaster for $\D$, which predicts $p_t = \pstar_t$ at every step $t$, gives $\Ex{x \sim \D}{\SSCE(x, \pstar)} = O(\sigma + \log T)$. For this purpose, it suffices to prove
\begin{equation}\label{eq:product-distribution-smCE-upper-bound}
    \Ex{x \sim \D}{\smCE(x, \pstar)} = O(\sigma + \log T),
\end{equation}
since for each fixed $S \subseteq [T]$, applying \eqref{eq:product-distribution-smCE-upper-bound} to $x|_S$ and $\pstar|_S$ gives $\Ex{x \sim \D}{\smCE(x|_S, \pstar|_S)} \le O(\sigma + \log T)$, and taking an expectation over $S \sim \Unif(2^{[T]})$ gives the desired bound on $\SSCE$.

Recall that
$\Ex{}{\smCE(x, \pstar)}
=   \Ex{}{\sup_{f \in \F}\sum_{t=1}^{T}f(\pstar_t)\cdot(x_t - \pstar_t)}$. If we replace $\F$ with the family of \emph{constant} functions from $[0, 1]$ to $[-1, 1]$, the right-hand side would reduce to
\[
    \Ex{x \sim \D}{\left|\sum_{t=1}^{T}(x_t - \pstar_t)\right|}
\le \sqrt{\Ex{x \sim \D}{\left(\sum_{t=1}^{T}(x_t - \pstar_t)\right)^2}}
=   \sqrt{\var{x \sim \D}{\sum_{t=1}^{T}x_t}}
=   \sigma.
\]
Therefore, to prove the upper bound in~\eqref{eq:product-distribution-smCE-upper-bound}, we need to show that the family of one-dimensional Lipschitz functions is not significantly richer than constant functions.

At a high level, this is done by taking finite coverings of Lipschitz functions and using Dudley's chaining technique~\cite{Dudley87} to upper bound the value of this stochastic process. In more detail, let $\F_\delta$ be the smallest $\delta$-covering of $\F$ in the uniform norm, i.e., for each $f \in \F$, there exists $f_\delta \in \F_\delta$ such that $\|f - f_\delta\|_{\infty} \le \delta$. It is well-known that $|\F_\delta| = e^{O(1/\delta)}$, and a chaining argument gives
\begin{equation}\label{eq:product-distribution-chaining}
    \Ex{x \sim \D}{\sup_{f \in \F}\sum_{t=1}^{T}f(\pstar_t)\cdot(x_t - \pstar_t)}
\le 1 + \sum_{k=0}^{O(\log T)}\Ex{x \sim \D}{\max_{g \in \cG_{2^{-k}}}\sum_{t=1}^{T}g(\pstar_t)\cdot(x_t - \pstar_t)},
\end{equation}
where $\cG_\delta \coloneq\{f_\delta - f_{\delta/2}: f_\delta \in \F_\delta, f_{\delta/2} \in \F_{\delta/2}, \|f_\delta - f_{\delta/2}\|_{\infty} \le 3\delta/2\}$.

It remains to bound the second term of~\eqref{eq:product-distribution-chaining}. Note that for a fixed $g$, because of the independence of $x_t$s, $g(\pstar_t)\cdot(x_t - \pstar_t)$ is independent across $t \in [T]$. Therefore, we can control the tail probability of $\sum_{t=1}^{T}g(\pstar_t)\cdot(x_t - \pstar_t)$ by Bernstein inequalities.
For each fixed $\delta$, using a Bernstein tail bound, taking a union bound over $g \in \cG_\delta$, and noting that $|\cG_\delta|\leq |\F_\delta|\cdot|\F_{\delta/2}| = e^{O(1/\delta)}$, we have
\[
    \Ex{x \sim \D}{\max_{g \in \cG_{\delta}}\sum_{t=1}^{T}g(\pstar_t)\cdot(x_t - \pstar_t)} \leq O(\delta)\cdot O\left(\sqrt{\sigma^2\log|\cG_\delta|} + \log|\cG_\delta|\right)
=   O(\sigma\sqrt{\delta} + 1).
\]
Plugging this into \eqref{eq:product-distribution-chaining} proves \eqref{eq:product-distribution-smCE-upper-bound} and thus the desired bound $\Ex{x \sim \D}{\SSCE(x, \pstar)} = O(\sigma + \log T)$.

\subsection{Lower Bound the Optimal SSCE}
Next, we lower bound $\OPT_\SSCE(\D)$ by showing that \emph{every} forecasting strategy must incur an $\Omega(\sigma)$ SSCE on $\D$. Recall that $\SSCE(x, p)$ is given by
\[
    \Ex{y \sim \Unif(\{0, 1\}^T)}{\sup_{f \in \F}\sum_{t=1}^{T}y_t\cdot f(p_t)\cdot(x_t - p_t)}
\ge \Ex{y \sim \Unif(\{0, 1\}^T)}{\left|\sum_{t=1}^{T}y_t\cdot(x_t - p_t)\right|},
\]
where we use the fact that $\F$ contains the constant functions $1$ and $-1$.

Fix $x \in \{0, 1\}^T$, $p \in [0, 1]^T$ and let $N \coloneqq \sum_{t=1}^{T}\1{|x_t - p_t| \ge 1/2}$.
Over the randomness in $y \sim \Unif(\{0, 1\}^T)$, the quantity $\sum_{t=1}^{T}y_t\cdot(x_t - p_t)$, by the central limit theorem, is approximately distributed as a normal distribution with variance $\sum_{t=1}^{T}\frac{1}{4}(x_t - p_t)^2 \ge \sum_{t=1}^{T}\frac{1}{16}\1{|x_t - p_t| \ge 1/2} = \Omega(N)$, so its expected absolute value is $\Omega(\sqrt{N})$.

Now it remains to lower bound the expectation of $\sqrt{N}$ induced by an arbitrary forecaster. Conditioning on $x_{1:(t-1)}$, $x_t$ always follows $\Bern(\pstar_t)$. Thus, regardless of the choice of $p_t \in [0, 1]$, the condition $|x_t - p_t| \ge 1/2$ holds with probability at least $\min\{\pstar_t, 1 - \pstar_t\} \ge \pstar_t(1 - \pstar_t)$. Then, over the $T$ steps, we expect that $N \ge \Omega(\sum_{t=1}^{T}\pstar_t(1 - \pstar_t)) = \Omega(\sigma^2)$ holds with probability $\Omega(1)$, as long as $\sigma = \Omega(1)$. This gives the desired lower bound $\Ex{}{\SSCE(x, p)}
\gtrsim \Ex{}{\sqrt{N}}
=   \Omega(\sigma) - O(1)$.

\section{Upper Bound the SSCE of the Truthful Forecaster}\label{sec:upper-bound}

To extend the proof strategy sketched in Section~\ref{sec:warmup} to non-product distributions, the first challenge is to define an appropriate complexity measure of a general distribution $\D$. Consider the stochastic process $(\Var_t)_{0 \le t \le T}$ defined as $\Var_t \coloneq \sum_{s=1}^{t}\pstar_{s}(1 - \pstar_{s})$, where  
$$\pstar_t \coloneq \Ex{x' \sim \D}{x'_t \Big| x'_{1:(t-1)} = x_{1:(t-1)}}$$ is now a random variable that denotes the conditional expectation of $x_t$ after observing $x_{1:(t-1)}$. The ``right'' definition turns out to be roughly $\sigma(\D) \coloneqq \Ex{}{\sqrt{\Var_T}}$. In this section, we first prove the following weaker upper bound on the SSCE incurred by the truthful forecaster, and then provide a stronger bound (Theorem~\ref{theorem:upperboundtight}). We defer the proofs of simple facts and technical lemmas to Appendix~\ref{app:upper-bound}.
\begin{theorem}
\label{thm:upper-bound-weaker}
    For any $\D \in \Delta(\{0, 1\}^T)$, $\err_{\SSCE}(\D, \Atruthful(\D)) = O(\Ex{}{\sqrt{\Var_T}} + \log^2 T)$.
\end{theorem}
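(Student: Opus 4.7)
The plan is to express $\err_{\SSCE}(\D, \Atruthful(\D))$ via the second form of Definition~\ref{def:SSCE},
\[
\E_{x \sim \D}\,\SSCE(x, \pstar) \;=\; \E_{(x, y)} \sup_{f \in \F} \sum_{t=1}^{T} y_t\, f(\pstar_t)\,(x_t - \pstar_t),
\]
with $y \sim \Unif(\{0, 1\}^T)$ independent of $x$, and then apply the Dudley chaining decomposition~\eqref{eq:product-distribution-chaining}---which is a deterministic inequality in the pair $(x, y)$ and thus holds verbatim here---to reduce to finite maxima over $O(\log T)$ dyadic scales:
\[
\E_{(x, y)}\sup_f \sum_t y_t f(\pstar_t)(x_t - \pstar_t) \;\leq\; O(1) + \sum_{k=0}^{O(\log T)} \E_{(x, y)} \max_{g \in \cG_{2^{-k}}}\,\sum_t y_t\, g(\pstar_t)(x_t - \pstar_t),
\]
where, for $\delta = 2^{-k}$, $|\cG_\delta| \leq e^{O(1/\delta)}$ and $\|g\|_\infty \leq O(\delta)$ for every $g \in \cG_\delta$.

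For each fixed $g$, set $Z_t^{(g)} := y_t\,g(\pstar_t)(x_t - \pstar_t)$; with respect to the joint filtration $\Fil_t := \sigma(x_{1:t}, y_{1:t})$, $(Z_t^{(g)})_t$ is a martingale difference sequence, because $y_t$ is independent of $\Fil_{t-1}$ with mean $1/2$ and $\E[x_t - \pstar_t \mid x_{1:t-1}] = 0$. Its conditional variance equals $\tfrac12\, g(\pstar_t)^2\,\pstar_t(1-\pstar_t)$, so the cumulative predictable quadratic variation $V_T^{(g)}$ is bounded \emph{uniformly in $g \in \cG_\delta$} by $O(\delta^2)\cdot\Var_T$, while each increment satisfies $|Z_t^{(g)}| \leq O(\delta)$. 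The crux of the proof is the estimate
\begin{equation}\label{eq:plan-key-estimate}
\E \max_{g \in \cG_\delta}\Bigl|\sum_{t=1}^{T} Z_t^{(g)}\Bigr| \;=\; O\Bigl(\delta\sqrt{\log|\cG_\delta|}\cdot \E\sqrt{\Var_T} \;+\; \delta\log|\cG_\delta|\cdot \log T\Bigr).
\end{equation}

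The main obstacle is proving~\eqref{eq:plan-key-estimate}: a direct application of Freedman's martingale concentration inequality only yields the weaker quantity $\sqrt{\E V_T^{(g)}}$ in place of $\E\sqrt{V_T^{(g)}}$, because $V_T^{(g)}$ is random. To remedy this I would apply the ``doubling trick'' advertised in Section~\ref{sec:technical-overview}: define stopping times $\tau_j := \min\{t : \Var_t > 2^j\}\wedge T$ for $j = 0, 1, \dots, \lceil\log_2 T\rceil$, telescope $\sum_t Z_t^{(g)}$ as the sum of epoch increments $S_j^{(g)} := \sum_{t \in (\tau_{j-1}, \tau_j]} Z_t^{(g)}$, and observe that (i)~within epoch $j$, the quadratic variation of $S_j^{(g)}$ is \emph{deterministically} bounded by $O(\delta^2 \cdot 2^j)$, so Freedman's inequality combined with a union bound over $\cG_\delta$ cleanly yields a sub-Gaussian/sub-exponential tail bound on $\max_g |S_j^{(g)}|$, and (ii)~the epoch-$j$ increment vanishes identically on the event $\{\Var_T \leq 2^{j-1}\}$. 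Combining these observations with a careful peeling/tail-integration argument---in which the total weight $\sum_j \sqrt{2^j}\,\mathbbm{1}\{\Var_T > 2^{j-1}\}$ is pointwise bounded by $O(\sqrt{\Var_T})$---converts the per-epoch Freedman bounds into the $\E\sqrt{\Var_T}$ main term of~\eqref{eq:plan-key-estimate}, while the residual $\delta\log|\cG_\delta|\cdot\log T$ arises from summing the Bernstein-type tail term over the $O(\log T)$ epochs.

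Finally, plugging $\log|\cG_\delta| = O(1/\delta)$ and $\delta = 2^{-k}$ into~\eqref{eq:plan-key-estimate} makes the $k$-th chaining term $O(\sqrt{2^{-k}}\cdot\E\sqrt{\Var_T} + \log T)$; summing over $k = 0, 1, \dots, O(\log T)$ yields the advertised total $O(\E\sqrt{\Var_T} + \log^2 T)$. The $\log^2 T$ factor arises naturally as the product of $O(\log T)$ chaining scales and an $O(\log T)$ overhead per scale that comes from the $O(\log T)$ doubling epochs; the hard technical work is entirely concentrated in the martingale step~\eqref{eq:plan-key-estimate}, where the coupling between the deviation and the random variance $V_T^{(g)}$ must be handled with care so as to not inflate the final bound beyond $\log^2 T$.
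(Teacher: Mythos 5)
Your proposal is correct and follows essentially the same strategy as the paper's proof: Dudley chaining to reduce to finite maxima over dyadic nets, then a martingale maximal inequality obtained via the doubling trick---epochs of geometrically growing realized variance, Freedman's inequality plus a union bound within each epoch, and a peeling step bounding $\sum_j \sqrt{2^j}\cdot\Pr[\text{epoch }j\text{ reached}]$ by $O(\E\sqrt{\Var_T})$ (this is exactly Lemma~\ref{lemma:processboundweak} together with Facts~\ref{fact:smallbound}--\ref{fact:varbound2}). The one cosmetic difference is that you fold the subsampling randomness $y$ into the martingale filtration and track the joint process in $(x,y)$, whereas the paper fixes $y$, proves a $y$-uniform bound for the martingale in $x$ alone, and averages over $y$ afterward; the two are equivalent.
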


\begin{proof}[Proof sketch]
We begin by repeating the chaining argument in Section~\ref{sec:warmup}.
Recall that, for any $\delta > 0$, there is a $\delta$-covering $\cF_\delta$ of $\cF$ in the $\infty$-norm that has size $e^{O(1/\delta)}$.
Letting $\pi_\delta(f)$ denote the mapping of a function $f$ onto the covering $\cF_\delta$ such that $\norm{f - \pi_\delta(f)}_\infty \leq \delta$, we can write for any $\numterms \in \integers_+$:
\begin{align*}
\RSCE(x, p)
\leq 2^{-\numterms}\cdot T + \EEs{y \sim \Unif(\bset{0,1}^T)}{\sum_{k=0}^{\numterms} \underbrace{ \sup_{f \in \cF} \sum_{t=1}^T y_t \cdot (\pi_{2^{-k}}(f)(p_t) - \pi_{2^{1-k}}(f)(p_t)) \cdot (x_t - p_t)}_{\eqcolon W_k}}.
\end{align*}

To control the expectation of each $W_k$, we note that the set $\cG_k \coloneq \bset{\pi_{2^{-k}}(f) - \pi_{2^{1-k}}(f): f \in \cF}$ is of size at most $|\cF_{2^{-k}}|\cdot|\cF_{2^{1-k}}|$. Furthermore, every function $g \in \cG_k$ satisfies
\[
\|g\|_{\infty} = \|\pi_{2^{-k}}(f) - \pi_{2^{1-k}}(f)\|_{\infty} \le \|\pi_{2^{-k}}(f) - f\|_{\infty} + \|f - \pi_{2^{1-k}}(f)\|_{\infty}
= O(2^{-k}),
\]
where $f\in \cF$ is the corresponding function to $g$.
We apply the following technical lemma, whose proof appears in Section~\ref{sec:proof:randomwalk}.

\begin{restatable}{lemma}{processboundweak}
\label{lemma:processboundweak}
Given a function $f: [0, 1] \to [-1, 1]$ and $y \in \{0, 1\}^T$, consider the martingale $
M_t(f, y) \coloneq \sum_{s=1}^t y_{s}\cdot f(\pstar_{s}) \cdot (x_{s}-\pstar_{s})
$ where $x \sim \cD$.
Then, for any finite family $\cG$ of functions from $[0, 1]$ to $[-1, 1]$ and any $y \in \{0, 1\}^{T}$, we have
\[
\EEs{x \sim \cD}{\max_{f \in \cG} {M_T(f, y)}} \leq  O\left(\log\setsize{\cG} \cdot \log T  + \sqrt{\log\setsize{\cG}} \cdot \EEs{x \sim \cD}{\sqrt{\Var_T}}\right).
\]
\end{restatable}

Applying Lemma~\ref{lemma:processboundweak} to each $\cG_k$ scaled up by a $\Theta(2^k)$ factor and noting that $\log|\cG_k| \le \log\setsize{\cF_{2^{-k}}} + \log\setsize{\cF_{2^{1-k}}} = O(2^k)$ gives
\begin{align*}
\err_{\SSCE}(\D, \Atruthful(\D)) & \leq 2^{-\numterms} \cdot T + \sum_{k=0}^{\numterms} O(2^{-k})\cdot O\left(2^k \log T + 2^{k/2} \EEs{x \sim \cD}{\sqrt{\Var_T}}\right) \\
& \leq 2^{-\numterms}\cdot T + \sum_{k=0}^{\numterms} O\left(\log T + 2^{-k/2} \EEs{x \sim \cD}{\sqrt{\Var_T}}\right) \\
& \leq 2^{-\numterms}\cdot T + O\left(\numterms\log T + \EEs{x \sim \cD}{\sqrt{\Var_T}}\right).
\end{align*}
Choosing $\numterms = \Theta(\log T)$ proves the theorem.
\end{proof}

\subsection{Lemma~\ref{lemma:processboundweak}  and Bounding Random Walks by Realized Variance} \label{sec:proof:randomwalk}

We remark that the proof of Lemma~\ref{lemma:processboundweak} is highly non-trivial. As mentioned in Section~\ref{sec:technical-overview}, such an upper bound would follow from Freedman's inequality, if $\Var_T$ were \emph{always} bounded by $O\left(\left(\Ex{}{\sqrt{\Var_T}}\right)^2\right)$. However, in general, applying Freedman's inequality to each $M_T(f, y)$ necessarily requires an additional union bound over possible values of $\Var_T$, and introduces a super-constant multiplicative factor.

The challenge in dealing with the randomness in $\Var_T$ is captured by the following toy problem:
\begin{quote}
    \textbf{Random walk with early stopping:} Let $(X_t)_{0 \le t \le T}$ be the random walk such that $X_0 = 0$ and each $X_t - X_{t-1}$ independently follows $\Unif(\{\pm 1\})$. Let $\tau$ be an arbitrary stopping time with respect to $(X_t)$. Prove that $\Ex{}{|X_{\tau}|} \le O(1) \cdot \Ex{}{\sqrt{\tau}}$.
\end{quote}
Indeed, the above corresponds to a special case of Lemma~\ref{lemma:processboundweak} in which: (1) the sequence $\pstar$ starts with entry $1/2$, and may switch to entry $0$ at any point, depending on the realization of $x_t$s; (2) the family $\cG$ consists of two constant functions $1$ and $-1$.

One might be tempted to prove $\Ex{}{|X_{\tau}|} \le O(1) \cdot \Ex{}{\sqrt{\tau}}$ by first proving $\Ex{}{|X_{\tau}| | \tau = t} = O(\sqrt{t})$ for all $t \in [T]$, and then applying the law of total expectation. Such an approach is doomed to fail, because the stopping time $\tau$ might significantly bias the conditional expectation of $|X_{\tau}|$ on some event $\tau = t_0$, e.g., by stopping at time $t_0$ only if $|X_{t_0}| \gg \sqrt{t_0}$.

Our workaround is inspired by the standard doubling trick in online learning. We break the time horizon into \emph{epochs} of geometrically increasing lengths: the $k$-th epoch contains $2^k$ steps. We break $|X_{\tau}|$ into the displacements accumulated in different epochs; their sum clearly upper bounds $|X_{\tau}|$. Furthermore, we can show that, conditioning on reaching epoch~$k$, the displacement within the epoch is $O(\sqrt{2^k})$. This allows us to establish the desired inequality via
\[
    \Ex{}{|X_{\tau}|}
\le O(1)\cdot \sum_{k=1}^{O(\log T)}\pr{}{\tau\text{ reaches epoch }k}\cdot \sqrt{2^k}
\le O(1) \cdot \Ex{}{\sqrt{\tau}}.
\]
To prove Lemma~\ref{lemma:processboundweak}, we extend this technique to a general martingale $M_T(f, y)$ by dividing the $T$ time step horizon into epochs $\tau_0, \tau_1, \dots$ such that the realized variance $\Var_t(\cI)$ increases by approximately $2^{k-1}$ within the $k$-th epoch.
In Definition~\ref{definition:epochs}, we can understand $\tau_k$ as pointing to the last time step of the $k$th epoch.

\begin{definition}
    \label{definition:epochs}
    For $\cI \subseteq [0, 1]$, consider the stochastic process $(\Var_t(\cI))_{0 \le t \le T}$ defined as
    \[
        \Var_t(\cI) \coloneq \sum_{s=1}^{t}\pstar_{s}(1-\pstar_{s})\cdot\1{\pstar_{s} \in \cI},
    \]
    where $x \sim \D$ and $\pstar_t \coloneq \pr{x' \sim \D}{x'_t = 1 | x'_{1:(t-1)} = x_{1:(t-1)}}$.
    We define the \emph{epochs} with respect to $\cI$ as the sequence $\tau_0, \tau_1, \dots \in \mathbb{N}$ where $\tau_0 = 0$ and, for each $k \in [\ceil{\log_2(T)}+2]$,
    \begin{align}
        \label{eq:epochs}
        \tau_k \asseq \min \bset{t \in [\tau_{k-1} + 1, T] \mid \Var_{t}(\cI) - \Var_{\tau_{k-1}}(\cI) \geq 2^{k-1}} \cup \bset{\infty}.
    \end{align}
\end{definition}
This definition of $\tau$ ensures that:
\begin{itemize}
    \item Epoch~$1$ starts from time step $1$, and ends at the earliest time step $t$ such that $\Var_t(\cI) \ge 1 = 2^0$.
    \item For $k \ge 2$, Epoch~$k$ starts from the time step after the last step of Epoch~$k-1$, and ends at the earliest time step such that the total variance within the epoch reaches $2^{k-1}$.
\end{itemize}
It is also not hard to see that there are at most $O(\log T)$ epochs.
\begin{restatable}{fact}{stoppingtime}
\label{fact:stoppingtime}
   The $(\ceil{\log_2(T)} + 2)$-th epoch is never complete, i.e., $\tau_{\ceil{\log_2(T)}+2} = \infty$.
\end{restatable}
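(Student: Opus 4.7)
The plan is a direct pigeonhole/potential argument on the realized variance. The key observation is that the per-step contributions to $\Var_t(\cI)$ are bounded, so the process $\Var_t(\cI)$ cannot exceed a simple maximum value, while completing many epochs would force it to grow exponentially in the number of epochs.

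First, I would record the pointwise bound $\pstar_s(1-\pstar_s)\le 1/4$, which immediately gives the uniform estimate
\[
\Var_t(\cI) \;\le\; \sum_{s=1}^{t} \tfrac{1}{4} \;\le\; \tfrac{T}{4} \qquad \text{for all } t \in \{0,1,\ldots,T\}.
\]
This holds deterministically, irrespective of the realization of $x \sim \D$.

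Next, I would argue by contradiction: suppose $\tau_K < \infty$ for $K \coloneq \lceil \log_2 T \rceil + 2$. Since the sequence $\tau_0,\tau_1,\ldots$ is non-decreasing in the sense that $\tau_k = \infty$ implies $\tau_{k'} = \infty$ for all $k' > k$ (the minimum in \eqref{eq:epochs} is taken over an empty set once $\tau_{k-1} = \infty$), we would have $\tau_j < \infty$ for every $j \in \{1,\ldots,K\}$. For each such $j$, the defining condition of $\tau_j$ in \eqref{eq:epochs} yields
\[
\Var_{\tau_j}(\cI) - \Var_{\tau_{j-1}}(\cI) \;\ge\; 2^{j-1}.
\]
Telescoping from $j=1$ to $j=K$ and using $\Var_{\tau_0}(\cI) = \Var_0(\cI) = 0$ gives
\[
\Var_{\tau_K}(\cI) \;\ge\; \sum_{j=1}^{K} 2^{j-1} \;=\; 2^K - 1.
\]

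Finally, I would combine the two estimates. Since $K = \lceil \log_2 T \rceil + 2$, we have $2^K \ge 4T$, so $\Var_{\tau_K}(\cI) \ge 4T - 1 > T/4$, contradicting the uniform bound from the first step (for $T \ge 1$). Hence $\tau_K = \infty$, as claimed. There is no real obstacle here; the argument is purely a counting/potential estimate, and the only minor care needed is the base-case check that $\Var_0(\cI) = 0$ and the observation that once an epoch fails to complete, all subsequent epochs are $\infty$ as well.
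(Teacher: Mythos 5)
Your proof is correct and rests on the same underlying observation as the paper's: the realized variance $\Var_t(\cI)$ is deterministically bounded by $O(T)$, while completing epoch $\ceil{\log_2 T}+2$ would require it to exceed that bound. The only cosmetic difference is that you telescope the increments $\Var_{\tau_j}(\cI)-\Var_{\tau_{j-1}}(\cI)\ge 2^{j-1}$ over all epochs $j\le K$, whereas the paper notes more directly that the increment demanded in the last epoch alone, $2^{\ceil{\log_2 T}+1}$, already exceeds the total possible variance $\Var_T(\cI)\le T$; both are immediate consequences of the same pigeonhole estimate.
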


Our proof of Lemma~\ref{lemma:processboundweak} proceeds by decomposing the time horizon as above and then applying Freedman's inequality to each epoch individually.
\begin{proof}[Proof of Lemma~\ref{lemma:processboundweak}]
Let us decompose the martingale $M_T(f, y)$ into epochs of doubling realized variance with respect to $\cI = [0, 1]$ as per Definition~\ref{definition:epochs}. 
Using $\tau$ as defined in \eqref{eq:epochs}, we will write $I_k \asseq [\tau_{k-1} + 1, \min \bset{T, \tau_k}]$ to denote the time steps composing epoch $k$ and write $K \asseq \max \bset{k \mid \tau_k < \infty}$ to denote the number of completed epochs.

Applying a triangle inequality and the law of total expectation gives
\begin{align}
\label{eq:varianceboundweak}
&~\EEs{x \sim \cD}{ \max_{f \in \cG} \sum_{t=1}^{T} y_t \cdot f(p_t^\star) \cdot (x_t - p_t^\star)} \nonumber\\
=   &~\EEs{x \sim \cD}{\max_{f \in \cG} { \sum_{k=1}^{K+1} \sum_{t \in I_k}y_t \cdot f(p_t^\star) \cdot (x_t - p_t^\star)}} \nonumber \\
\leq &~\EEs{x \sim \cD}{ \sum_{k=1}^{K+1}\max_{f \in \cG} { \sum_{t \in I_k} y_t \cdot f(p_t^\star) \cdot (x_t - p_t^\star)}}  \nonumber \\
=   &~\sum_{k=1}^{\ceil{\log_2(T)} + 2} \EEs{x \sim \cD}{\max_{f \in \cG} \sum_{t=1}^{T} y_t \cdot f(p_t^\star) \cdot (x_t - p_t^\star) \cdot \1{t \in I_k}}  \nonumber \\
=   &~\sum_{k=1}^{\ceil{\log_2(T)} + 2} \pr{}{\tau_{k-1} < \infty} \cdot \EEsc{x \sim \cD}{\max_{f \in \cG} M_T^{k,f}}{\tau_{k-1} < \infty}.
\end{align}
where we define the process $M_T^{k, f} \asseq \sum_{t=1}^T y_t \cdot f(p_t^\star) \cdot  (x_t - p_t^\star) \cdot \1{t \in I_k}$.
In the above, the second equality uses Fact~\ref{fact:stoppingtime}.

We can then apply an in-expectation form of Freedman's inequality, which is stated below as Fact~\ref{fact:smallbound} and take a union bound over $\cG$, to each of the martingales $M_T^{k,f}$ in \eqref{eq:varianceboundweak}.
\begin{restatable}{fact}{smallbound}
\label{fact:smallbound}
For every $y \in \bset{0, 1}^T$ and $k \geq 2$, we can uniformly bound the process $M_T^{k, f}$ defined in \eqref{eq:subprocess} over a finite class $\cG$ of functions from $[0, 1]$ to $[-1, 1]$ by
    \[
\EEs{x \sim \cD}{ \max_{f \in \cG} {M_T^{k, f}} \mid \tau_{k-1} < \infty}
\leq \sqrt{2^{k-1}} (2 + 2\sqrt{\log\setsize{\cG}}) + 2 + 2 \log\setsize{\cG}.
    \]
\end{restatable}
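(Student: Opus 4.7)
The plan is to prove Fact~\ref{fact:smallbound} by reducing to Freedman's martingale inequality on a single epoch and taking a union bound over $f \in \cG$. First, conditional on the event $\{\tau_{k-1} < \infty\}$, which is $\mathcal{F}_{\tau_{k-1}}$-measurable because $\tau_{k-1}$ is a stopping time, the process $t \mapsto \sum_{s \leq t} y_s f(\pstar_s)(x_s - \pstar_s)\1{s \in I_k}$ is a martingale in the natural filtration of $(x_s)$. Each increment lies in $[-1, 1]$ since $y_s \in \{0,1\}$, $|f(\pstar_s)| \leq 1$, and $|x_s - \pstar_s| \leq 1$. The indicator $\1{s \in I_k}$ is $\mathcal{F}_{s-1}$-measurable because both $\tau_{k-1}$ and $\tau_k$ are stopping times.

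Next, I would bound the predictable quadratic variation \emph{deterministically}. The conditional variance of the $s$-th increment given $\mathcal{F}_{s-1}$ equals $y_s^2 f(\pstar_s)^2 \pstar_s(1-\pstar_s)\1{s \in I_k} \leq \pstar_s(1-\pstar_s)\1{s \in I_k}$, so summing telescopes to $\Var_{\min(T, \tau_k)}(\cI) - \Var_{\tau_{k-1}}(\cI)$. By the stopping rule in Definition~\ref{definition:epochs}, this quantity is at most $2^{k-1} + 1/4$ almost surely on $\{\tau_{k-1} < \infty\}$: inside the epoch the cumulative variance cannot exceed $2^{k-1}$ before $\tau_k$ triggers, and the final jump at $\tau_k$ adds at most $\pstar_{\tau_k}(1-\pstar_{\tau_k}) \leq 1/4$. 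For $k \geq 2$ we can loosen this to $V_k \leq 2 \cdot 2^{k-1}$.

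With these two ingredients, Freedman's inequality yields, for each fixed $f \in \cG$,
\[
\Pr\!\left[M_T^{k,f} \geq \lambda \,\middle|\, \tau_{k-1} < \infty\right] \leq \exp\!\left(-\frac{\lambda^2/2}{V_k + \lambda/3}\right).
\]
A union bound over $\cG$, combined with the standard sub-gamma-maximum calculation (split the tail at $\lambda = 3V_k$ to handle the sub-Gaussian and sub-exponential regimes separately, then integrate in $\lambda$), gives $\Ex{}{\max_{f \in \cG} M_T^{k,f} \mid \tau_{k-1} < \infty} \leq \sqrt{2V_k \log|\cG|} + \tfrac{2}{3}\log|\cG| + O(\sqrt{V_k}) + O(1)$. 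Substituting $V_k \leq 2 \cdot 2^{k-1}$ and absorbing the implicit constants into the advertised form recovers the stated bound $\sqrt{2^{k-1}}(2 + 2\sqrt{\log|\cG|}) + 2 + 2\log|\cG|$.

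The main subtlety, and what I expect to be the only real obstacle, is handling the conditioning on $\{\tau_{k-1} < \infty\}$ cleanly, since the martingale effectively starts at the random time $\tau_{k-1}$. This can be resolved by working in the shifted filtration $(\mathcal{F}_{\tau_{k-1}+s})_{s \geq 0}$ under the conditional measure, in which the process is a bona fide martingale, or equivalently by conditioning on each realization $\tau_{k-1} = \tau$, applying Freedman's inequality, and then averaging over $\tau$. The structural fact that makes the whole argument work is that the predictable variation bound $2^{k-1} + 1/4$ holds \emph{deterministically} on the conditioned event---this is exactly the design principle behind the epoch decomposition in Definition~\ref{definition:epochs} and is what lets us bypass the extra union bound over variance scales that would otherwise be forced upon us by Freedman.
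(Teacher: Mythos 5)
Your proposal follows essentially the same route as the paper: conditional on $\{\tau_{k-1}<\infty\}$, $M_T^{k,f}$ remains a martingale with bounded increments, and the epoch construction (Fact~\ref{fact:epochvariancebound}) makes the predictable variance bound deterministic (the paper uses $2^{k-1}+1$ where you use $2^{k-1}+1/4$, both fine), which is exactly what lets Freedman's inequality plus a union bound over $\cG$ and tail integration go through without a further union bound over variance scales. The only loose end is the final step, where you write ``absorbing the implicit constants into the advertised form''---since the explicit constants $2+2\sqrt{\log|\cG|}$ and $2+2\log|\cG|$ are propagated downstream into Lemma~\ref{lemma:processbound}, that integration does need to be carried out explicitly (the paper does this via Fact~\ref{fact:integral} with the complementary error function), but your framework yields a bound of the same form with careful bookkeeping, so this is arithmetic rather than a conceptual gap.
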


This gives us
\begin{align*}
\EEs{x \sim \cD}{\max_{f \in \cG} {\sum_{t=1}^{T} y_t \cdot f(p_t^\star) \cdot (x_t - p_t^\star)}} \leq  \!\! \sum_{k=1}^{\ceil{\log_2(T)} + 2} \pr{}{\tau_{k-1} < \infty}\cdot \left[\sqrt{2^{k-1}} (2 + 2\sqrt{\log\setsize{\cG}}) + 2 + 2 \log\setsize{\cG}\right].
\end{align*}
We can upper bound some of the summands in the right-hand side using 
Fact~\ref{fact:varbound2}.
\begin{restatable}{fact}{varboundtwo}
\label{fact:varbound2}
The exponentially weighted sum of probabilities that each epoch ends is at most
$$
\sum_{k=2}^{\ceil{\log_2(T)}+2} \sqrt{2^{k-1}}  \Pr[\tau_{k-1} < \infty] \leq (2 \sqrt{2} + 2) \EE{\1{\Var_T(\cI) \geq 1} \cdot \sqrt{\Var_T(\cI)}}.
$$
\end{restatable}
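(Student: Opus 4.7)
The plan is to reduce the claim to a purely pathwise geometric estimate. For every realization of $x \sim \cD$ I would show
\[
\sum_{k=2}^{\ceil{\log_2(T)}+2} \sqrt{2^{k-1}} \cdot \1{\tau_{k-1} < \infty} \le (2\sqrt{2}+2) \cdot \1{\Var_T(\cI) \ge 1} \cdot \sqrt{\Var_T(\cI)},
\]
and then take expectations. Nothing probabilistic is needed beyond the telescoping structure of Definition~\ref{definition:epochs}.

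First I would translate ``$\tau_{k-1} < \infty$'' into a growth statement for $\Var_T(\cI)$. Let $K^\star \coloneqq \max\{k \ge 0 : \tau_k < \infty\}$ on the given realization; by Fact~\ref{fact:stoppingtime} one has $K^\star \le \ceil{\log_2 T} + 1$, so $\1{\tau_{k-1} < \infty} = \1{k \le K^\star + 1}$ throughout the summation range. The defining condition of $\tau_j$ in~\eqref{eq:epochs} gives $\Var_{\tau_j}(\cI) - \Var_{\tau_{j-1}}(\cI) \ge 2^{j-1}$ whenever $\tau_j < \infty$. Telescoping over $j = 1, \dots, K^\star$ and using monotonicity of $(\Var_t(\cI))$ yields $\Var_T(\cI) \ge 2^{K^\star} - 1$, so $K^\star \ge 1$ is equivalent to $\Var_T(\cI) \ge 1$, and on this event $2^{K^\star} \le 2 \Var_T(\cI)$.

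Next I would collapse the indicator and evaluate the resulting geometric sum. Reindexing by $j = k-1$,
\[
\sum_{k=2}^{\ceil{\log_2(T)}+2} \sqrt{2^{k-1}} \cdot \1{\tau_{k-1} < \infty}
= \sum_{j=1}^{K^\star} (\sqrt{2})^j
= (2+\sqrt{2}) \bigl( (\sqrt{2})^{K^\star} - 1 \bigr)
\le (2+\sqrt{2}) \sqrt{2^{K^\star}},
\]
with the convention that the sum is zero when $K^\star = 0$. Combining with $\sqrt{2^{K^\star}} \le \sqrt{2}\sqrt{\Var_T(\cI)}$ from the previous step turns the right-hand side into $(2+\sqrt{2})\sqrt{2}\sqrt{\Var_T(\cI)} = (2\sqrt{2}+2)\sqrt{\Var_T(\cI)}$ on $\{\Var_T(\cI) \ge 1\}$ and zero otherwise. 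Taking expectations completes the proof.

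I do not anticipate a real obstacle: the whole point of the epoch construction is that completing $K^\star$ epochs mechanically forces the realized variance to be at least $2^{K^\star} - 1$, and the $\sqrt{2^{k-1}}$ weights are tuned precisely so that a geometric sum is absorbed into a single $\sqrt{\Var_T(\cI)}$. The only bookkeeping worth care is the constant: rationalizing $\sqrt{2}/(\sqrt{2}-1) = 2+\sqrt{2}$ and picking up the extra $\sqrt{2}$ from $2^{K^\star} \le 2\Var_T(\cI)$ together produce exactly $(2\sqrt{2}+2)$, as claimed.
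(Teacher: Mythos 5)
Your proposal is correct and follows essentially the same route as the paper's proof: both reduce to a pathwise inequality, identify $K^\star$ (the paper's $K$) as the number of completed epochs, split on $K^\star = 0$ versus $K^\star \ge 1$, bound the geometric sum $\sum_{j=1}^{K^\star}(\sqrt 2)^j \le (2+\sqrt 2)\sqrt{2^{K^\star}}$, and absorb $\sqrt{2^{K^\star}} \le \sqrt 2 \sqrt{\Var_T(\cI)}$. The only cosmetic difference is that you keep the full telescoped bound $\Var_T(\cI) \ge 2^{K^\star}-1$ (which requires the event $\Var_T(\cI)\ge 1$ to deduce $2^{K^\star} \le 2\Var_T(\cI)$) whereas the paper drops to the coarser $\Var_T(\cI) \ge 2^{K-1}$; both yield the same constant $2\sqrt 2 + 2$.
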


Therefore by algebraic manipulation, we have as claimed:
\begin{align*}
&~\EEs{x \sim \cD}{\max_{f \in \cG} {\sum_{t=1}^{T} y_t \cdot f(p_t^\star) \cdot (x_t - p_t^\star)}} \nonumber\\
\leq &~(2 + 2 \log\setsize{\cG})(\ceil{\log_2(T)} + 2) + \EE{\sqrt \Var_T} (2 + 2 \sqrt{2})(2 + 2 \sqrt{\log\setsize{\cG}}).
\end{align*}
\end{proof}

\subsection{Towards a Stronger Upper Bound}
In our actual proof, we use a slightly different complexity measure $\sigma_\gamma(\D) \coloneqq \Ex{}{\gamma(\Var_T)}$, where $\gamma(x) = x$ if $x < 1$ and $\gamma(x) = \sqrt{x}$ otherwise. Roughly speaking, this definition accounts for the fact that a sum of independent Bernoulli random variables behaves quite differently when its mean is close to $0$.
In Theorem~\ref{theorem:upperboundtight}, we remove the extra $\log^2 T$ factor of Theorem~\ref{thm:upper-bound-weaker} by choosing this complexity measure.
\begin{theorem}
\label{theorem:upperboundtight}
For any $\D \in \Delta(\{0, 1\}^T)$, $\err_{\SSCE}(\D, \Atruthful(\D)) = O(\Ex{}{\gamma(\Var_T)})$, where $$\gamma(x) \asseq \begin{cases}
    x, & x < 1, \\
    \sqrt{x}, & x \geq 1.
\end{cases}$$
\end{theorem}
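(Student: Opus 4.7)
The target improvement over Theorem~\ref{thm:upper-bound-weaker} involves both tightening $\sqrt{\Var_T}$ to $\gamma(\Var_T) = \min(\Var_T, \sqrt{\Var_T})$ and absorbing the $\log^2 T$ additive term. The plan is to combine a new pathwise linear bound with a sharper version of Lemma~\ref{lemma:processboundweak}, and then propagate the combined bound through the same chaining argument.

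The pathwise bound is the easier part. For any $y \in \{0,1\}^T$ and any finite class $\cG$ of functions from $[0,1]$ to $[-1,1]$, since $|f(\pstar_t)| \leq 1$ and $y_t \in \{0,1\}$, we have the pointwise inequality
\[
\max_{f \in \cG} \sum_t y_t \cdot f(\pstar_t) \cdot (x_t - \pstar_t) \leq \sum_t |x_t - \pstar_t|.
\]
Taking expectation over $x \sim \cD$ and using the identity $\Ex{x \sim \cD}{|x_t - \pstar_t| \mid x_{1:(t-1)}} = 2\pstar_t(1-\pstar_t)$ yields $\Ex{x \sim \cD}{\max_{f \in \cG} M_T(f, y)} \leq 2 \Ex{x \sim \cD}{\Var_T}$, which crucially carries no $\log|\cG|$ factor and captures the small-variance regime where $\gamma(\Var_T) = \Var_T$.

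For the large-variance regime, I would strengthen Lemma~\ref{lemma:processboundweak} by noting that the number of completed epochs in Definition~\ref{definition:epochs} is at most $O(\log \max(1, \Var_T))$, not $O(\log T)$, and by using the pathwise bound rather than Freedman's inequality for the incomplete epoch. The goal is to obtain the refined inequality $\Ex{x \sim \cD}{\max_{f \in \cG} M_T(f, y)} \leq O(\sqrt{\log|\cG|}) \cdot \Ex{x \sim \cD}{\gamma(\Var_T)}$. Plugging this into the chaining argument of Theorem~\ref{thm:upper-bound-weaker}, at chaining level $k$ we have $\|g\|_\infty = O(2^{-k})$ and $\log|\cG_k| = O(2^k)$, so the per-level contribution becomes $O(2^{-k/2} \cdot \Ex{x \sim \cD}{\gamma(\Var_T)})$, summing geometrically to $O(\Ex{x \sim \cD}{\gamma(\Var_T)})$. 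The truncation error $2^{-K} T$ from the chaining is improved to $2^{-K} \cdot 2 \Ex{x \sim \cD}{\Var_T}$ via the pathwise bound, which is dominated by $\Ex{x \sim \cD}{\gamma(\Var_T)}$ for an appropriate choice of $K$.

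The main obstacle is the large-variance refinement of the martingale lemma: even with the number of epochs reduced to $O(\log \Var_T)$, the standard Freedman analysis within each completed epoch still contributes an additive $\log|\cG|$ term, and a naive sum produces $O(\log|\cG| \cdot \log \Var_T)$, which is not immediately absorbed into $O(\sqrt{\log|\cG|} \cdot \gamma(\Var_T))$. Resolving this requires either a tighter moment bound per epoch (exploiting that the variance within epoch $k$ is at most $2^{k-1}$, which becomes deterministic after conditioning on reaching the epoch) or a mixed sub-Gaussian/sub-exponential bound that avoids the additive loss. An additional subtlety is ensuring the per-realization minimum of the pathwise and sub-Gaussian bounds passes through to $\Ex{x}{\gamma(\Var_T)}$ rather than the weaker $\min(\Ex{x}{\Var_T}, \Ex{x}{\sqrt{\Var_T}})$, which requires embedding the case analysis on $\{\Var_T \geq 1\}$ inside the martingale argument itself rather than after taking expectations.
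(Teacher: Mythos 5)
Your proposal is an outline with an explicitly acknowledged gap, and the gap is the heart of the matter. The easy parts are right: the pathwise bound $\max_{f \in \cG}\sum_t y_t f(\pstar_t)(x_t - \pstar_t) \le \sum_t |x_t - \pstar_t|$, whose conditional expectation is $2\Var_T$, does handle the small-variance regime, and this is in fact exactly how the paper controls the first epoch inside the proof of Lemma~\ref{lemma:processbound}. Likewise, you are right that the obstacle is the additive $\log|\cG|$ per completed epoch from Freedman + union bound: even after noting that the number of completed epochs is a random variable controlled by $\Var_T$ rather than $T$, summing the per-epoch additive terms produces something of order $\log|\cG|\cdot\log\max(1,\Var_T)$, which is not dominated by $\sqrt{\log|\cG|}\cdot\gamma(\Var_T)$ in the regime $1 \le \Var_T \ll \log|\cG|$. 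Your proposed fixes --- a sharper per-epoch moment bound or a mixed sub-Gaussian/sub-exponential tail --- do not escape this: the sub-exponential part of Freedman is a genuine feature of a bounded-increment martingale with possibly tiny realized variance, so you cannot get a uniform $O(\sqrt{\log|\cG|})\cdot\Ex{}{\gamma(\Var_T)}$ strengthening of Lemma~\ref{lemma:processboundweak} by this route.

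The paper resolves this not by sharpening the martingale inequality but by changing what it is applied to. It replaces the generic $\delta$-covering with the piecewise-constant covering of Lemma~\ref{lemma:lipschitzcover}. At chaining level $i$ the increment $\pi_i(f)-\pi_{i-1}(f)$ is constant on each of $\Theta(2^i)$ dyadic intervals and takes values only in $\{0,\pm 2^{1-i}\}$. The supremum at level $i$ therefore decomposes as a sum over dyadic intervals $\cI_j$, and on each interval one invokes Lemma~\ref{lemma:processbound} with $\cG = \{\pm 1\}$, so $\log|\cG| = O(1)$ and the additive Freedman term is just a constant. The price of this decomposition is a sum over the $\Theta(2^i)$ intervals, which is paid via the Cauchy--Schwarz-type inequality $\sum_j \gamma(v_j) \le \sqrt{n}\,\gamma(\sum_j v_j)$ (Lemma~\ref{lemma:cauchy}), applied pathwise to the restricted variances $\Var_T(\cI_j)$ that partition $\Var_T$; this yields the $\sqrt{2^{i}}$ factor that, together with the $2^{-i}$ scale of the increment, gives the summable $2^{-i/2}$ per-level contribution. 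Within Lemma~\ref{lemma:processbound} itself, the $\gamma$-form (and the absorption of the $\{\Var_T\ge 1\}$ indicator that you flag as a subtlety) comes from treating epoch one pathwise and then bounding $\sum_k \sqrt{2^{k-1}}\Pr[\tau_{k-1}<\infty]$ and $\sum_k \Pr[\tau_{k-1}<\infty]$ by geometric sums via Markov's inequality (Facts~\ref{fact:varbound1} and~\ref{fact:varbound2}), rather than by a crude $O(\log T)$ count of epochs. So the missing idea in your plan is structural: keep $|\cG|$ constant per invocation by restricting to dyadic intervals, and move the combinatorial growth of the covering into an aggregation step handled by the concavity of $\gamma$.
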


This proof involves a more careful application of Freedman's inequality tailored to a specific covering of Lipschitz functions.
Compared to Lemma~\ref{lemma:processboundweak}, the inequality provided by the lemma below gives a bound that depends on $\gamma(\Var_T(\cI))$ (rather than the square root), and avoids the extra $\log|\cG|\cdot\log T$ term. While the leading factor ($\approx\log|\cG|$) is larger than the one in Lemma~\ref{lemma:processboundweak} ($\approx \sqrt{\log|\cG|}$), we will only apply the bound to the case that $|\cG| = O(1)$, where the difference between the two is only a constant factor.
\begin{restatable}{lemma}{processbound}
\label{lemma:processbound}
Given a function $f: [0, 1] \to [-1, 1]$, $y \in \{0, 1\}^T$, and set $\cI \subseteq [0, 1]$, consider the martingale $
M_t(f, y, \cI) \coloneq \sum_{s=1}^t y_t\cdot f(\pstar_{s}) \cdot (x_{s}-\pstar_{s}) \cdot \1{\pstar_{s} \in \cI}
$, where $x \sim \cD$, and $p_t^\star = \pr{x' \sim \D}{x'_t = 1 | x'_{1:(t-1)} = x_{1:(t-1)}}$. Then, for any finite family $\cG$ of functions from $[0, 1]$ to $[-1, 1]$, any $y \in \{0, 1\}^{T}$, and any $\cI \subseteq [0, 1]$, we have
\[
\EEs{x \sim \cD}{\max_{f \in \cG} {M_T(f, y, \cI)}} \leq  8 \bigl(6 + \log(\setsize{\cG})\bigl) \EEs{x \sim \cD}{\gamma(\Var_T(\cI))}.
\]
where $\Var_t(\cI) \asseq \sum_{s=1}^t \pstar_{s} ( 1 - \pstar_{s})\cdot\1{\pstar_{s} \in \cI}$ is the realized variance restricted to subset $\cI$, and $\gamma(x) \asseq x$ if $x<1$, and otherwise  $\gamma(x) \asseq \sqrt{x}$.
\end{restatable}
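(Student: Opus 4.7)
My plan is to extend the epoch-decomposition argument of Lemma~\ref{lemma:processboundweak} to obtain the three improvements of Lemma~\ref{lemma:processbound} simultaneously: the restriction to the subset $\cI$, the replacement of $\sqrt{\cdot}$ by $\gamma(\cdot)$, and the elimination of the spurious additive $O(\log\abs{\cG} \cdot \log T)$ factor. I would apply Definition~\ref{definition:epochs} directly with the given $\cI$, producing stopping times $\tau_0, \tau_1, \ldots$ defined via $\Var_t(\cI)$ rather than $\Var_t$. Because the indicator $\1{\pstar_s \in \cI}$ already appears inside $M_T(f, y, \cI)$, the natural per-epoch decomposition $M_T(f, y, \cI) = \sum_{k \ge 1} M_T^{k, f}$ respects $\cI$, and the within-epoch variance satisfies $\Var_{I_k}(\cI) \le 2^{k-1} + 1/4$, with overshoot bounded by $\pstar_s(1-\pstar_s) \le 1/4$.

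\textbf{Epoch one via a deterministic bound.} The new ingredient that powers the $\gamma$ function is to treat the first epoch with a pointwise inequality rather than a concentration bound. Since $\abs{y_s f(\pstar_s)} \le 1$ uniformly over $f \in \cG$, we have the deterministic bound $\max_{f \in \cG}\abs{M_T^{1, f}} \le \sum_{s \in I_1}\abs{x_s - \pstar_s}\cdot\1{\pstar_s \in \cI}$. Taking expectations and using the tower property with $\mathbb{E}\bigl[\abs{x_s - \pstar_s}\bigm|x_{1:(s-1)}\bigr] = 2\pstar_s(1-\pstar_s)$ yields $\EE{\max_{f \in \cG} M_T^{1, f}} \le 2\EE{\Var_{I_1}(\cI)}$. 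Because $\Var_{I_1}(\cI) \le 5/4$ always, a two-case split on $\Var_T(\cI) < 1$ versus $\Var_T(\cI) \ge 1$ shows $\Var_{I_1}(\cI) \le (5/4)\gamma(\Var_T(\cI))$, so epoch~$1$ contributes at most $O(\EE{\gamma(\Var_T(\cI))})$---notably without any $\log\abs{\cG}$ factor, which is exactly what makes the linear-in-$\Var_T$ regime of $\gamma$ possible.

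\textbf{Later epochs and the main obstacle.} For each $k \ge 2$, I would apply the in-expectation Freedman-type bound of Fact~\ref{fact:smallbound} conditional on $\tau_{k-1} < \infty$ to obtain $\sqrt{2^{k-1}}(2 + 2\sqrt{\log\abs{\cG}}) + 2 + 2\log\abs{\cG}$, and then multiply by $\Pr[\tau_{k-1} < \infty]$ and sum. The $\sqrt{2^{k-1}}$ factors reduce via Fact~\ref{fact:varbound2} to $O(\sqrt{\log\abs{\cG}})\EE{\gamma(\Var_T(\cI))}$. The main obstacle is the per-epoch additive constant $2 + 2\log\abs{\cG}$, which in Lemma~\ref{lemma:processboundweak} was summed naively to produce the spurious $\log T$ factor. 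To eliminate it, I observe that $\Pr[\tau_{k-1} < \infty] \le \Pr[\Var_T(\cI) \ge 2^{k-2}]$ since epoch $k-1$ can only complete after the $\cI$-restricted variance reaches $2^{k-2}$, so $\sum_{k \ge 2}\Pr[\tau_{k-1} < \infty] \le \EE{\1{\Var_T(\cI) \ge 1}(1 + \lfloor\log_2 \Var_T(\cI)\rfloor)}$. The elementary inequality $1 + \log_2 v \le 2\sqrt{v}$ for all $v \ge 1$---which holds because the minimum of $2\sqrt{v} - 1 - \log_2 v$ on $[1, \infty)$, attained at $v = 1/(\ln 2)^2$, is strictly positive---then upper bounds this sum by $2\EE{\gamma(\Var_T(\cI))}$. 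Combining the epoch-$1$ and $k \ge 2$ contributions and tracking constants gives the claimed leading factor of $8(6 + \log\abs{\cG})$.
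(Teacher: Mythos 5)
Your proposal is correct and follows essentially the same strategy as the paper: epoch decomposition with respect to the $\cI$-restricted realized variance, a deterministic triangle-inequality plus optional-stopping argument for the first epoch (which is indeed what activates the linear branch of $\gamma$ and avoids any $\log\abs{\cG}$ factor there), and Fact~\ref{fact:smallbound} applied epoch-by-epoch for $k\ge 2$. The only departure is a minor one: to control $\sum_{k\ge 2}\Pr[\tau_{k-1}<\infty]$ you use a layer-cake count $\sum_{j\ge 0}\Pr[\Var_T(\cI)\ge 2^j] = \EE{\1{\Var_T(\cI)\ge 1}(1+\lfloor\log_2\Var_T(\cI)\rfloor)}$ together with the elementary bound $1+\log_2 v\le 2\sqrt v$ for $v\ge 1$, whereas the paper obtains the same $O(1)\cdot\EE{\1{\Var_T(\cI)\ge 1}\sqrt{\Var_T(\cI)}}$ via Markov's inequality (Fact~\ref{fact:varbound1}) and a geometric sum; both are valid and interchangeable, and your route is arguably a bit cleaner.
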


In order to use this lemma with $\setsize{\cG} = O(1)$, we will carefully construct a cover of the Lipschitz functions from $[0,1]$ to $[-1, 1]$,
in the $\infty$-norm $d$, i.e., for any $f, g \in \cF$, $d(f, g) \asseq \sup_{x \in [0, 1]} \abs{f(x) - g(x)}$.
This covering, defined in Lemma~\ref{lemma:lipschitzcover}, will consist of piecewise-constant functions.
In the lemma below, for interval $[a, b]$ and $\delta > 0$ where $\frac{b-a}{\delta} \in \integers$, we use the shorthand $[a, b]_\delta \asseq \bset{a, a + \delta, \dots, b}$ to denote endpoints of partitioning of $[a, b]$ into segments of length $\delta$.
We also use the shorthand $\floor{x}_\delta \asseq \max\bset{i \delta \mid i \delta \leq x, i \in \integers}$ to denote rounding down to the nearest multiple of $\delta$.

\begin{restatable}{lemma}{lipschitzcover}
\label{lemma:lipschitzcover}
    For $\delta > 0$ where $\tfrac 1 \delta \in \integers$, consider all functions $f: [0, 1] \to [-1, 1]$ that satisfy conditions
    \begin{align*}
      \mathrm{(1)} \quad  &  \forall x \in [0, 1]_\delta : f(x) \in [-1, 1]_\delta \\
     \mathrm{(2)} \quad   &  \forall x \in [0, 1]_\delta \setminus \bset{1}: \abs{f(x+\delta) - f(x)} \leq \delta \quad  \\
      \mathrm{(3)} \quad  & \forall x \in [0, 1]: f(x) =  f(\floor{x}_\delta).
    \end{align*}
    This set of functions, which we will denote by $\cF_\delta$, is a $2\delta$-covering of the set of 1-Lipschitz functions $\cF: [0,1] \to [-1, 1]$ in the metric $d$.
\end{restatable}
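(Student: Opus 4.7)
The plan is to construct, for any 1-Lipschitz $f: [0,1] \to [-1, 1]$, an explicit $g \in \cF_\delta$ with $d(f, g) \leq 2\delta$. The construction proceeds inductively across the grid $[0, 1]_\delta$, choosing $g$'s value at each grid point to track $f$ while enforcing conditions (1) and (2), and then extends to all of $[0, 1]$ via rounding as required by condition (3).

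First, I would initialize $g(0)$ as the element of $[-1, 1]_\delta$ closest to $f(0)$, which gives $|g(0) - f(0)| \leq \delta/2$. Then, for each subsequent grid point $x_{i+1} = x_i + \delta$, given $g(x_i) \in [-1, 1]_\delta$, I would set $g(x_{i+1})$ to be the element of $\{g(x_i) - \delta,\ g(x_i),\ g(x_i) + \delta\} \cap [-1, 1]$ closest to $f(x_{i+1})$. By construction, $g(x_{i+1}) \in [-1, 1]_\delta$ and $|g(x_{i+1}) - g(x_i)| \leq \delta$, so conditions (1) and (2) hold. Finally, extending $g$ to all of $[0, 1]$ via $g(x) = g(\floor{x}_\delta)$ enforces condition (3).

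The main step is showing that the invariant $|g(x_i) - f(x_i)| \leq \delta/2$ is preserved through the induction. Assuming the invariant at $x_i$, the Lipschitz property of $f$ gives $f(x_{i+1}) \in [g(x_i) - 3\delta/2,\ g(x_i) + 3\delta/2]$. When all three candidates are available, their Voronoi cells partition a length-$3\delta$ window around $g(x_i)$ with radius $\delta/2$, so the chosen candidate is within $\delta/2$ of $f(x_{i+1})$. The boundary case, where $g(x_i) = \pm 1$ eliminates one candidate, still works by a short case check: for instance, if $g(x_i) = 1$, then $f(x_i) \geq 1 - \delta/2$ and hence $f(x_{i+1}) \in [1 - 3\delta/2,\ 1]$, and both remaining candidates $\{1-\delta, 1\}$ together cover this interval with radius $\delta/2$ (and symmetrically at the lower boundary).

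Finally, for an arbitrary $x \in [0, 1]$, condition (3) and the Lipschitz property of $f$ combine to yield $|g(x) - f(x)| \leq |g(\floor{x}_\delta) - f(\floor{x}_\delta)| + |f(\floor{x}_\delta) - f(x)| \leq \delta/2 + \delta \leq 2\delta$, establishing that $\cF_\delta$ is a $2\delta$-covering. The only real subtlety is the boundary analysis when $g(x_i) = \pm 1$ restricts the candidate set, but this is a routine case check rather than a deep obstacle; the genuine content of the argument is the observation that allowing $g(x_{i+1}) - g(x_i) \in \{-\delta, 0, +\delta\}$ gives enough flexibility to absorb both the $\delta$ of Lipschitz drift in $f$ and the $\delta/2$ of accumulated rounding error.
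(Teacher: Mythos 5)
Your proof is correct, but it takes a genuinely different route from the paper's. The paper defines the covering element directly by rounding down: $f'(x) = \floor{f(x)}_\delta$ at each grid point $x \in [0,1]_\delta$, then extends to all of $[0,1]$ via condition (3). The crucial (if implicit) observation is that the floor map preserves $\delta$-Lipschitzness across adjacent grid points---if $|a - b| \leq \delta$ then $|\floor{a}_\delta - \floor{b}_\delta| \leq \delta$---so $f'$ automatically satisfies condition (2), and the total error is $\delta$ (the floor's rounding error at a grid point) plus $\delta$ (the Lipschitz drift of $f$ within a cell), giving exactly $2\delta$. You instead build $g$ inductively, at each step choosing the nearest admissible value from $\{g(x_i)-\delta,\, g(x_i),\, g(x_i)+\delta\} \cap [-1,1]$, and maintain the tighter invariant $|g(x_i) - f(x_i)| \leq \delta/2$; this requires verifying the boundary cases $g(x_i) = \pm 1$, which you handle correctly, and it delivers a sharper bound of $3\delta/2$. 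The paper's direct construction is lighter---no induction, no invariant to preserve, no boundary case analysis---while yours buys a small constant-factor improvement in the covering radius that the lemma statement does not actually require.
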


Our proof of Theorem~\ref{theorem:upperboundtight} then proceeds by arguing that, if we partition the domain $[0, 1]$ into segments of length $\delta$, functions in our cover take constant values within each segment.
A chaining argument therefore, within each segment, only needs to apply Freedman's inequality with a union bound over a finite number of possible constant-valued functions.

\begin{proof}[Proof of Theorem~\ref{theorem:upperboundtight}]
Given a function $f: [0, 1] \to [-1, 1]$ and binary vector $y \in \bset{0, 1}^T$, we define the martingale $M_t(f, y) \asseq \sum_{s=1}^t y_{s} \cdot f(\pstar_{s}) \cdot (x_{s} - \pstar_{s})$ where $x \sim \cD$ and we use $\Fil_t$ to denote the filtration describing the randomness of $M_T(f, y)$ up to time $t$ and $p_t^\star \asseq \EEc{x_t}{\Fil_{t-1}}$. Note that, conditioned on $\Fil_{t-1}$, $x_t$ is distributed as a Bernoulli with parameter $p_t^\star$.

We can write the $\RSCE$ of a truthful forecaster in terms of $M_T(f, y)$ as
\[
\RSCE(x, p^\star) \asseq \EEs{y \sim \mathrm{Unif}(\bset{0, 1}^T)}{\sup_{f \in \cF} M_T(f, y)}.
\]
We now proceed via chaining and define the dyadic scale
$\epsilon_k = 2^{1-k}$ for $k = 0, 1, 2, \dots$.
To cover the set of Lipschitz functions $\cF$, we will use the sets of piecewise constant functions $\bset{\cF_{\delta}}_{\delta > 0}$ described in Lemma~\ref{lemma:lipschitzcover}.
For each function $f \in \cF$, let $\pi_k(f)$ be a close function in $\cF_{\epsilon_k}$ such that $d(f, \pi_k(f)) \leq 2 \epsilon_k$.
Observe that the covering $\cF_{\epsilon_0}$ is a singleton and that $\pi_k(f)$ always exists as $\cF_{\epsilon_k}$ is a $2\epsilon_k$-covering of $\cF$.
Telescoping then gives
\[f(x) = (f(x) - \pi_\numterms(f)(x)) + \pi_0(f)(x) + \sum_{i=1}^{\numterms} \left[\pi_i(f)(x) - \pi_{i-1}(f)(x)\right],\]
meaning that we have
\begin{align}
\label{eq:rscebound}
\RSCE(x, p^\star)
\leq &\; \underbrace{\EEs{y \sim \mathrm{Unif}(\bset{0,1}^T)}{\sup_{f \in \cF} \sum_{t=1}^T y_t \cdot (f(p_t^\star) - \pi_{\numterms}(f)(p_t^\star)) \cdot (x_t - p_t^\star)}}_{\mathrm{(Term \; A)}} \nonumber \\
& + \underbrace{\EEs{y \sim \mathrm{Unif}(\bset{0,1}^T)}{\sup_{f \in \cF} \sum_{t=1}^T y_t \cdot \pi_0(f)(p_t^\star) \cdot (x_t - p_t^\star)}}_{\mathrm{(Term \; B)}} \nonumber \\
& + \underbrace{\EEs{y \sim \mathrm{Unif}(\bset{0,1}^T)}{\sup_{f \in \cF} \sum_{i=1}^{\numterms} \sum_{t=1}^T y_t \cdot (\pi_{i}(f)(p_t^\star) - \pi_{i-1}(f)(p_t^\star)) \cdot (x_t - p_t^\star)}}_{\mathrm{(Term \; C)}}.
\end{align}
First, we can use that $d(f(p_t^\star) - \pi_{\numterms}(f)(p_t^\star)) \leq 2^{2-\numterms}$ to deterministically bound Term~A by
\[\EEs{y \sim \mathrm{Unif}(\bset{0,1}^T)}{\sup_{f \in \cF} \sum_{t=1}^T y_t \cdot (f(p_t^\star) - \pi_{\numterms}(f)(p_t^\star)) \cdot (x_t - p_t^\star)} \leq 2^{2-\numterms} \cdot T.\]
Second, we can observe that the image of $\pi_0(f)$ is a singleton: $|\bset{\pi_0(f) \mid f \in \cF}| = 1$; let this unique function be denoted by $f^\star$.
Then, Term~B reduces to $\Ex{y \sim \Unif(\{0, 1\}^T)}{M_T(f^\star, y)}$, which evaluates to $0$ after taking an expectation over $x \sim \D$, since for every $y \in \{0, 1\}^T$, $(M_t(f^\star, y))_{0 \le t \le T}$ forms a martingale.
Third, we can observe that $\pi_i(f) - \pi_{i-1}(f)$ is a function from $[0, 1] \to \bset{-2^{1-i}, 0, 2^{1-i}}$ that takes a constant value along the segments $[(j-1) 2^{1-i}, j 2^{1-i})$ for all $j \in [2^{i-1}]$. Thus, we can bound the summands of Term~C by
\begin{align*}
&~\EEs{y \sim \mathrm{Unif}(\bset{0,1}^T)}{\sup_{f \in \cF} \sum_{t=1}^T y_t \cdot (\pi_{i}(f)(p_t^\star) - \pi_{i-1}(f)(p_t^\star)) \cdot (x_t - p_t^\star)} \\
\leq &~\sum_{j=0}^{2^{i-1}} \EEs{y \sim \mathrm{Unif}(\bset{0,1}^T)}{\sup_{v \in \bset{0, \pm 2^{1-i}}} \sum_{t=1}^T y_t \cdot v \cdot (x_t - p_t^\star) \cdot \1{j 2^{1-i} \leq p_t^\star < (j+1) 2^{1-i}}} \\
\leq &~\sum_{j=0}^{2^{i-1}} 2^{1-i} \EEs{y \sim \mathrm{Unif}(\bset{0,1}^T)}{\sup_{v \in \{\pm 1\}} \underbrace{\sum_{t=1}^T y_t \cdot v \cdot (x_t - p_t^\star) \cdot \1{j 2^{1-i} \leq p_t^\star < (j+1) 2^{1-i}}}_{\eqcolon M_{T}(v, y, i, j)}}.
\end{align*}
Invoking Lemma~\ref{lemma:processbound} with $\cG = \bset{x \mapsto 1, x \mapsto -1}$ and $\cI = \left[j2^{1-i}, (j+1)2^{1-i}\right)$, we have that for all $i \in [\numterms]$, $j \in \{0, 1, \ldots, 2^{i-1}\}$, and $y \in \bset{0, 1}^T$:
\begin{align*}
&\EEs{x \sim \cD}{\sup_{v \in \{\pm 1\}} M_{T}(v, y, i, j)} \leq (48 + 8\ln 2) \EEs{x \sim \cD}{\gamma\para{\sum_{t=1}^T p_t^\star (1 - p_t^\star) \1{j2^{1-i} \leq p_t^\star < (j+1) 2^{1-i}}}}.
\end{align*}
Plugging this into Term~C, we have
\begin{align*}
\EEs{x \sim \cD}{\mathrm{Term\;C}}
&\leq (48 + 8\ln 2) \sum_{i=1}^{\numterms} 2^{1-i}
 \sum_{j=0}^{2^{i-1}} \EEs{x \sim \cD}{\gamma \para{ \sum_{t=1}^T p_t^\star (1 - p_t^\star) \1{j2^{1-i} \leq p_t^\star < (j+1) 2^{1-i}}}} \\
 &= (48 + 8\ln 2) \sum_{i=1}^{\numterms} 2^{1-i} \EEs{x \sim \cD}{\sum_{j=0}^{2^{i-1}} \gamma \para{ \sum_{t=1}^T p_t^\star (1 - p_t^\star) \1{j2^{1-i} \leq p_t^\star < (j+1) 2^{1-i}}}}.
\end{align*}
We can simplify further using Lemma~\ref{lemma:cauchy}, a Cauchy-Schwarz like inquality.
\begin{restatable}{lemma}{cauchy}
\label{lemma:cauchy}
For all values $x_1, \dots, x_n \geq 0$, we can upper bound $\sum_{i=1}^n \gamma(x_i) \leq \sqrt n \cdot \gamma(\sum_{i=1}^n x_i)$.
\end{restatable}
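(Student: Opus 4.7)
The plan is to split into two cases based on whether $S \asseq \sum_{i=1}^n x_i$ is below or at least $1$, and in the second case exploit the pointwise bound $\gamma(x) \leq \sqrt{x}$ together with Cauchy--Schwarz.

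First, I would verify the pointwise inequality $\gamma(x) \leq \sqrt{x}$ for all $x \geq 0$. For $x \geq 1$ this is an equality by definition, and for $0 \leq x < 1$ we have $\gamma(x) = x \leq \sqrt{x}$ since $x \leq 1$. This single monotonicity fact will do most of the work in the ``large'' regime.

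Next, for the main case $S \geq 1$, I would bound $\sum_{i=1}^n \gamma(x_i) \leq \sum_{i=1}^n \sqrt{x_i}$ using the pointwise inequality, and then apply Cauchy--Schwarz: $\sum_{i=1}^n 1 \cdot \sqrt{x_i} \leq \sqrt{n} \sqrt{\sum_{i=1}^n x_i} = \sqrt{n} \sqrt{S} = \sqrt{n}\, \gamma(S)$, where the final equality uses $S \geq 1$.

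The only subtlety (and what I would call out as the one non-routine point) is that this Cauchy--Schwarz route fails when $S < 1$, because then $\sqrt{S} > S = \gamma(S)$ and so we would overshoot on the right-hand side. I would handle this case directly: if $S < 1$ then every $x_i < 1$ as well, so $\gamma(x_i) = x_i$ for each $i$, and thus $\sum_{i=1}^n \gamma(x_i) = S = \gamma(S) \leq \sqrt{n}\,\gamma(S)$. Combining the two cases yields the claimed inequality.
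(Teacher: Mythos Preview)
Your proof is correct and follows essentially the same approach as the paper's own proof: a case split on whether $\sum_i x_i$ is below or at least $1$, with the small case handled directly (since then each $\gamma(x_i)=x_i$) and the large case handled via the pointwise bound $\gamma(x)\le\sqrt{x}$ together with Cauchy--Schwarz.
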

This give that
\begin{align*}
&~\EEs{x \sim \cD}{\mathrm{Term\;C}}\\
 \leq &~(48 + 8\ln 2) \sum_{i=1}^{\numterms}
 {2^{1-i}\cdot\sqrt{2^{i-1}+1}}\EEs{x \sim \cD}{ \gamma \para{\sum_{j=0}^{2^{i-1}} \sum_{t=1}^T \pstar_t (1 - \pstar_t) \1{j2^{1-i} \leq \pstar_t < (j+1) 2^{1-i}}}} \\
 \leq &~(48 + 8\ln 2) \sum_{i=1}^{\numterms} {2^{1-i/2}} \EEs{x \sim \cD}{\gamma \para{ \sum_{t=1}^T \pstar_t (1 - \pstar_t)}} \\
 = &~{(48 + 8\ln 2)\cdot (2 + 2\sqrt 2)}\cdot \EEs{x \sim \cD}{\gamma \para{\Var_T}}.
\end{align*}

Plugging this into \eqref{eq:rscebound} and observing that we can choose $\numterms$ to be arbitrarily large, we have as desired
\begin{align*}
\EEs{x \sim \cD}{\RSCE(x, p^\star)} & \leq \inf_{\numterms \in \bbN}\left[2^{2-\numterms} \cdot T + \EEs{x \sim \cD}{\textrm{Term~C}}\right]
\\
&\leq {(48 + 8\ln 2)\cdot (2 + 2\sqrt 2)}\cdot \EEs{x \sim \cD}{\gamma \para{\Var_T}}.
\end{align*}
\end{proof}

\section{Lower Bound the Optimal SSCE}\label{sec:lower-bound}

In this section, we first illustrate the main proof techniques by showing a weaker lower bound of $\Omega(\Ex{}{\sqrt{\Var_T}})-O(1)$ on the optimal SSCE in Section~\ref{sec:weak-lower-bound}. We then provide our actual proof of the lower bound in Section~\ref{sec:strong-lower-bound}, which avoids the extra $-O(1)$ term in the lower bound by a more careful analysis tailored to the complexity measure $\sigma_\gamma(\D) \coloneqq \Ex{}{\gamma(\Var_T)}$ defined in Section~\ref{sec:upper-bound}.

We will use the following notation for this section: for all stochastic processes $(X_t)$, we use $X_{t_1:t_2}=X_{\min\{t_2,T\}}-X_{t_1}$ to denote the increment within the time interval $(t_1,t_2]$ (with $X_0=0$ by default).

\subsection{A Weaker Lower Bound}
\label{sec:weak-lower-bound}

\begin{theorem}
\label{thm:lower-bound-weaker}
    For any $\D \in \Delta(\{0, 1\}^T)$, $\OPT_\SSCE(\D) = \Omega(\Ex{}{\sqrt{\Var_T}})-O(1)$.
\end{theorem}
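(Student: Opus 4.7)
Fix an arbitrary deterministic forecaster $\cA$ and write $p_t = \cA(x_{1:(t-1)})$ for its prediction at step $t$. The plan is to lower bound $\SSCE(x,p)$ by a simpler random variable and then control that variable via a martingale coupling followed by Freedman's inequality.

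\emph{Reduction to a second-moment quantity.} Since the constant functions $+1$ and $-1$ both lie in $\cF$, the definition of $\SSCE$ immediately gives $\SSCE(x,p) \ge \EEs{y \sim \Unif(\{0,1\}^T)}{|S|}$, where $S \coloneqq \sum_{t=1}^T y_t(x_t - p_t)$. For any fixed $(x,p)$, a direct computation yields $\EEs{y}{S^2} = \tfrac14\big(\sum_t(x_t-p_t)\big)^2 + \tfrac14 \sum_t(x_t-p_t)^2 \ge \tfrac{1}{16}\, N_T$, where $N_T \coloneqq \sum_{t=1}^T \1{|x_t-p_t|\ge 1/2}$. Writing $S = \EEs{y}{S} + R$ with $R$ a symmetric Rademacher-style sum, Khintchine gives $\EEs{y}{|R|} \ge \Omega(\sqrt{\mathrm{Var}_y(S)})$, and combined with $\EEs{y}{|S|} \ge |\EEs{y}{S}|$ this yields $\EEs{y}{|S|} = \Omega(\sqrt{\EEs{y}{S^2}}) = \Omega(\sqrt{N_T})$. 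It therefore suffices to prove $\EEs{x\sim\cD}{\sqrt{N_T}} \ge \Omega(\EE{\sqrt{\Var_T}}) - O(1)$.

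\emph{Martingale coupling.} Given $\Fil_{t-1}$, the prediction $p_t$ is deterministic, so regardless of its value $\PP{|x_t - p_t|\ge 1/2\mid\Fil_{t-1}} \ge \min(\pstar_t,\,1-\pstar_t) \ge \pstar_t(1-\pstar_t) = q_t$. Thinning the event $\{|x_t - p_t|\ge 1/2\}$ by an independent uniform, we couple $N_t$ with an auxiliary process $\tilde N_t \coloneqq \sum_{s\le t} B_s$ in which $B_s \in \{0,1\}$ satisfies $\EEc{B_s}{\Fil_{s-1}} = q_s$ and $\tilde N_t \le N_t$ almost surely. Then $(\tilde N_t - \Var_t)_{t=0}^T$ is a \emph{martingale} (not merely a sub-martingale) with bounded differences and predictable quadratic variation $\sum_{s\le t} q_s(1-q_s) \le \Var_t$.

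\emph{Freedman on a random horizon.} Freedman's inequality gives, for every $\lambda,V > 0$,
\[
\PP{\tilde N_T - \Var_T \le -\lambda,\ \Var_T \le V} \le \exp\!\left(-\Omega\!\left(\tfrac{\lambda^2}{V+\lambda}\right)\right).
\]
Dyadically decomposing the event $\Var_T \in [2^k, 2^{k+1})$ and taking $\lambda = 2^{k-1} \le \Var_T/2$ yields $\PP{\tilde N_T < \Var_T/2,\ \Var_T \in [2^k, 2^{k+1})} \le e^{-\Omega(2^k)}$. Summing,
\[
\EE{\sqrt{N_T}} \ge \EE{\sqrt{\tilde N_T}\,\1{\tilde N_T \ge \Var_T/2}} \ge \EE{\sqrt{\Var_T/2}} - \sum_{k \ge 0} \sqrt{2^{k+1}}\cdot e^{-\Omega(2^k)},
\]
and the last sum is $O(1)$, yielding the claimed $\EE{\sqrt{N_T}} = \Omega(\EE{\sqrt{\Var_T}}) - O(1)$. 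The main obstacle, as flagged in the technical overview, is that $\Var_T$ is itself random, so the first-moment bound $\EE{N_T} \ge \EE{\Var_T}$ does not directly upgrade to $\EE{\sqrt{N_T}} \ge \Omega(\EE{\sqrt{\Var_T}})$; the coupling to a true martingale $\tilde N - \Var$ is what lets Freedman's inequality be invoked in an event-form that implicitly conditions on dyadic pieces of $\Var_T$ rather than requiring an \emph{a priori} upper bound on it.
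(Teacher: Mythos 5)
Your proof is correct, and the second half takes a genuinely different route from the paper's. The paper establishes $\SSCE(x,p) \ge \Omega(\sqrt{N_T})$ via a Berry--Esseen argument (Lemma~\ref{lemma:SSCE-sqrt-Nt}), whereas you use a direct second-moment/Khintchine computation; both are fine, and yours is slightly more elementary. The more substantive difference is in bounding $\Ex{}{\sqrt{N_T}}$. The paper partitions the horizon into blocks of unit realized variance, proves via Lemma~\ref{lemma:sub-martingale} that each completed block contributes to $N$ with probability at least $1 - 1/e$, builds a bounded-increment sub-martingale $(A_j)$ over blocks, and applies Azuma--Hoeffding. You instead thin the indicator $n_t$ by an independent uniform to manufacture a coupled process $\tilde N_t \le N_t$ for which $\tilde N_t - \Var_t$ is a \emph{true} martingale with predictable quadratic variation bounded by $\Var_T$, then apply Freedman's inequality in the event-form $\Pr[\Var_T - \tilde N_T \ge \lambda,\ \Var_T \le V]$ with a dyadic choice of $(\lambda, V)$. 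Your route is cleaner and avoids the block machinery; the trade-off is that the paper's block-and-epoch structure (and Lemma~\ref{lemma:sub-martingale}) is reused verbatim for the stronger Theorem~\ref{thm:lower-bound-stronger}, where the improvement to $\Omega(\Ex{}{\gamma(\Var_T)})$ with no additive loss requires the finer per-block accounting that your coupling does not directly expose. One small bookkeeping point: your final display omits the $\Var_T < 1$ contribution, which does not land in any dyadic shell $[2^k, 2^{k+1})$ with $k \ge 0$; it contributes at most $\sqrt{1/2}$ and is absorbed into the $O(1)$, so the conclusion stands.
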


Similar to the product distribution case (Section~\ref{sec:warmup}), the key quantity in the proof is the stochastic process $(N_t)_{0 \le t \le T}$ defined as $N_t \coloneqq \sum_{s=1}^{t} n_{s}$ and $n_{t}\coloneqq\1{|x_{t} - p_{t}| \ge 1/2}$. This is formalized by the following lemma, which applies to any realization of $x$, $p$, and $N_T = \sum_{t=1}^{T}\1{|x_t - p_t| \ge 1/2}$:
\begin{restatable}{lemma}{lemmaSSCEsqrtNt}
    \label{lemma:SSCE-sqrt-Nt}
    For any $x\in\{0,1\}^T$ and $p\in[0,1]^T$, we have
        ${\SSCE(x,p)}\ge\Omega\left(\sqrt{N_T}\right)$.
\end{restatable}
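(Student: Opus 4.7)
The plan is to lower bound $\SSCE(x,p)$ by a concrete quantity that can be analyzed by elementary probabilistic tools, by restricting attention to only two elements of $\F$.

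First, since both constant functions $x \mapsto 1$ and $x \mapsto -1$ lie in the Lipschitz class $\F$, for every $y \in \{0,1\}^T$ we have
\[
\sup_{f \in \F}\sum_{t=1}^T y_t \cdot f(p_t)\cdot (x_t - p_t) \geq \left|\sum_{t=1}^T y_t (x_t - p_t)\right|.
\]
Writing $Z_y \coloneqq \sum_{t=1}^T y_t(x_t - p_t)$, this reduces the task to showing $\Ex{y \sim \Unif(\{0,1\}^T)}{|Z_y|} \geq \Omega(\sqrt{N_T})$.

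The next step is a standard symmetrization trick to convert the $\{0,1\}$-weighted sum into a Rademacher sum. Let $y' \coloneqq \bm{1}_T - y$, which is also distributed as $\Unif(\{0,1\}^T)$. Then $Z_y - Z_{y'} = \sum_{t=1}^T (2y_t - 1)(x_t - p_t)$ is exactly a Rademacher sum, and by the triangle inequality $|Z_y| + |Z_{y'}| \geq |Z_y - Z_{y'}|$. Taking expectations and using that $y$ and $y'$ are equidistributed yields
\[
2\,\Ex{y}{|Z_y|} \;\geq\; \Ex{\epsilon \sim \Unif(\{\pm 1\}^T)}{\left|\sum_{t=1}^T \epsilon_t (x_t - p_t)\right|}.
\]

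The final step bounds this Rademacher sum from below via Khintchine's inequality (or equivalently Paley--Zygmund applied to the second and fourth moments of a Rademacher sum), which gives $\Ex{\epsilon}{|\sum_t \epsilon_t a_t|} \geq c\sqrt{\sum_t a_t^2}$ for some absolute constant $c>0$. Applying this with $a_t = x_t - p_t$, and observing that each summand with $|x_t - p_t| \geq 1/2$ contributes at least $1/4$ to $\sum_t (x_t - p_t)^2$, gives
\[
\sum_{t=1}^T (x_t - p_t)^2 \;\geq\; \frac{1}{4}\sum_{t=1}^T \1{|x_t-p_t| \geq 1/2} \;=\; \frac{N_T}{4},
\]
and chaining the inequalities delivers $\SSCE(x,p) \geq \Ex{y}{|Z_y|} \geq \Omega(\sqrt{N_T})$, as required.

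The only mildly subtle step is the $\{0,1\}$-to-$\{\pm 1\}$ conversion: a naive computation of $\Var(Z_y)$ does not immediately yield $\Ex{}{|Z_y|} = \Omega(\sqrt{N_T})$ because $Z_y$ has a nonzero mean that could, in principle, bias $|Z_y|$ in either direction. The symmetrization $y \leftrightarrow \bm{1}_T - y$ sidesteps this cleanly by reducing to a genuinely mean-zero Rademacher sum, after which Khintchine's inequality does the rest. The remainder of the argument is routine.
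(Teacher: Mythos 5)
Your proof is correct and takes a genuinely different, more elementary route than the paper's. Both proofs start the same way, restricting to the two constant test functions $\pm 1$ to reduce the task to lower bounding $\Ex{y}{|\sum_t y_t(x_t - p_t)|}$. From there the paper splits $\sum_t y_t(x_t-p_t)$ into a mean-zero part plus a constant $\mu$, further splits the mean-zero part into $Z_1$ (coordinates with $|x_t - p_t|\ge 1/2$) and $Z_2$ (the rest), proves anti-concentration of $Z_1$ via the Berry--Esseen theorem (with a separate hand-check for $N_T \le 7$), and finally argues by conditioning on $Z_2$ that the shift $Z_2 + \mu$ cannot reduce $\Ex{}{|Z_1 + Z_2 + \mu|}$ below $\Omega(\sqrt{N_T})$. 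Your symmetrization ($y \leftrightarrow \mathbf{1}_T - y$, then $|Z_y| + |Z_{y'}| \ge |Z_y - Z_{y'}|$) collapses all of this into a single step: it eliminates the mean $\mu$ and the awkward $\{0,1\}$ weights at once, converting the problem to a genuinely mean-zero Rademacher sum $\sum_t \epsilon_t(x_t - p_t)$, after which Khintchine's $L_1$ inequality $\Ex{}{|\sum_t \epsilon_t a_t|} \ge 2^{-1/2}(\sum_t a_t^2)^{1/2}$ and the observation $\sum_t (x_t - p_t)^2 \ge N_T/4$ finish immediately, with no Berry--Esseen, no small-$N_T$ special case, and no conditioning argument. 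Your route buys a cleaner argument and an explicit absolute constant (roughly $\tfrac{1}{4\sqrt{2}}$); what the paper's route buys, if anything, is that it never leaves the original probability space and makes the ``$N_T$ large coordinates drive anti-concentration'' intuition more explicit, which matches the narrative in the surrounding sections.
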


\begin{proof}[Proof of Lemma~\ref{lemma:SSCE-sqrt-Nt}]
Recall that $\SSCE$ is defined using $\smCE$, which is in turn a supremum over the family $\cF$ of Lipschitz functions. Since both $f \equiv 1$ and $f \equiv -1$ are included in $\F$, for any realized sequences $x$ and $p$,
    we can lower bound $\SSCE(x,p)$ as follows:
    \[
        \SSCE(x, p)
    \ge 
    \Ex{y \sim \Unif\left(\{0, 1\}^{T}\right)}{\left|\sum_{t=1}^{T}y_t\cdot(x_t - p_t)\right|}
    =\Ex{y}{\left|\sum_{t=1}^{T}z_t+\mu\right|},
    \]
    where we have defined $z_t\coloneqq (y_t-0.5)(x_t-p_t)$ to be zero-mean independent random variables, and $\mu\coloneqq\sum_{t=1}^{T} 0.5 (x_t-p_t)$.
    Now we partition $[T]$ into $T_1$ and $T_2$, where $T_1$ includes the all time steps such that $|x_t-p_t|\ge \frac{1}{2}$, and $T_2=T\setminus T_1$ contains the remaining time steps. From the definition of $N_T$, it immediately follows that $N_T=|T_1|$. Letting $Z_1\coloneqq \sum_{t\in T_1}z_t$ and $Z_2\coloneqq \sum_{t\in T_2}z_t$, it remains to lower bound $\Ex{}{|Z_1 + Z_2 + \mu|}$ by $\Omega(\sqrt{N_T})$.

    We will first prove that $\Ex{}{\left|Z_1\right|}\ge C\sqrt{N_T}$ for a universal constant $C > 0$. 
    From the Berry-Esseen theorem (e.g. from \cite{Berry}), the CDF of $Z_1$ can be approximated by the CDF of the standard normal distribution as follows:
    \begin{align*}
        \forall x\in\R,\ 
        \left|
            \pr{}{Z_1\le x\cdot\sigma_0}-\Phi(x)
        \right|\le C_0\cdot\sigma_0^{-1}\cdot\rho_0,
    \end{align*}
    where $\Phi(x)$ is the standard Gaussian CDF, $C_0$ is a universal constant no larger than $0.56$, and
    \begin{align*}
        \sigma_0&=\sqrt{\sum_{t\in T_1} \Ex{}{z_t^2}}=
        \sqrt{\frac{1}{4}\sum_{t\in T_1}(x_t - p_t)^2}\ge\frac{1}{4}\sqrt{N_T};\\
        \rho_0&=\max_{t\in T_1}\frac{\Ex{}{|z_t|^3}}{\Ex{}{|z_t|^2}}
        =\max_{t\in T_1}\frac{|x_t-p_t|^3/8}{|x_t-p_t|^2/4}\le\frac{1}{2}.
    \end{align*}
    As a result, we can lower bound the probability of $|Z_1|\ge 0.05\sqrt{N_T}$ as follows:
    \begin{align*}
        \pr{}{|Z_1|\ge 0.05\sqrt{N_T}}&\ge \pr{}{|Z_1| > 0.2\cdot\sigma_0}
        \tag{$\sigma_0\ge\frac{1}{4}\sqrt{N_T}$}\\
        &=2\left(1-\pr{}{Z_1\le 0.2\cdot\sigma_0}\right)
        \tag{$Z_1$ is symmetric}\\
        &\ge 2\left(1-\Phi(0.2)-2C_0/\sqrt{N_T}\right)
        \tag{Berry-Esseen theorem}.
    \end{align*}
    Since $C_0\le0.56$ and $\Phi(0.2) \le 0.58$, we can guarantee $\pr{}{|Z_1|\ge 0.05\sqrt{N_T}}\ge\Omega(1)$ for all $N_T\ge8$. When $N_T\le 7$, we have $|Z_1|=N_T/2\ge 0.05\sqrt{N_T}$ when all $\{z_t\mid t\in T_1\}$ are positive, which happens with probability $2^{-N_T}\ge2^{-7}=\Omega(1)$.
     Therefore, we can always conclude that 
    \begin{align*}
        \Ex{}{|Z_1|}\ge 0.05\sqrt{N_T}\cdot \pr{}{|Z_1|\ge 0.05\sqrt{N_T}}
        \ge C\sqrt{N_T}
    \end{align*}
    for some universal constant $C > 0$.

    Finally, we consider the randomness of $Z_2$ and show that $\Ex{}{|Z_1+Z_2+\mu|}\ge\frac{C}{2}\sqrt{N_T}$. Applying the tower property of expectations, we have
    \begin{align*}
        \Ex{y}{\left|Z_1+Z_2+\mu\right|}
        =\Ex{}{\Ex{}{\left|Z_1+Z_2+\mu\right|\ \big|\ Z_2}}.
    \end{align*}
    Consider the following two cases for the conditional expectation inside:
    \begin{itemize}
        \item When $|Z_2+\mu|\ge\frac{C}{2}\sqrt{N_T}$, we use Jensen's inequality and $\Ex{}{Z_1}=0$ to obtain
        \[
            \Ex{}{|Z_1+Z_2+\mu|\mid Z_2}\ge |\Ex{}{Z_1+Z_2+\mu\mid Z_2}|=|Z_2+\mu|\ge\frac{C}{2}\sqrt{N_T}.
        \]
        \item When $|Z_2+\mu|<\frac{C}{2}\sqrt{N_T}$, we apply the triangle inequality and have

        \[\Ex{}{|Z_1+Z_2+\mu|\mid Z_2}\ge \Ex{}{|Z_1|}-|Z_2+\mu|> C\sqrt{N_T}-\frac{C}{2}\sqrt{N_T}=\frac{C}{2}\sqrt{N_T}.\]
    \end{itemize}
    Therefore, regardless of the realization of $Z_2$, we always have $\Ex{}{|Z_1+Z_2+\mu_0|\mid Z_2}\ge\frac{C}{2}\sqrt{N_T}$.
    Taking an expectation over the randomness of $Z_2$ gives the desired bound $\SSCE(x, p) \ge \frac{C}{2}\sqrt{N_T}$.
\end{proof}

After establishing Lemma~\ref{lemma:SSCE-sqrt-Nt}, it remains to lower bound the quantity $\Ex{}{\sqrt{N_T}}$ induced by an arbitrary forecaster. As argued earlier, conditioning on $x_{1:(t-1)}$, we always have $\pr{}{n_t=1} \ge \pstar_t(1-\pstar_t) = \Var_t - \Var_{t-1}$, where $\pstar_t$ and $\Var_t$ are defined as in Section~\ref{sec:upper-bound}. Thus, $(N_t - \Var_t)$ is a sub-martingale, which implies $\Ex{}{N_T} \ge \Ex{}{\Var_T}$. However, this does \emph{not} imply that $\Ex{}{\sqrt{N_T}} \ge \Omega(\Ex{}{\sqrt{\Var_T}})$. In fact, such an inequality does \emph{not} hold in general: When $\pstar_1 = \eps \ll 1$ and $\pstar_t = 0$ for all $t \ge 2$, $\Ex{}{\sqrt{N_T}}$ could be $O(\eps)$, yet $\Ex{}{\sqrt{\Var_T}} = \Omega(\sqrt{\eps}) \gg O(\eps)$.

The following technical lemma circumvents this counterexample by subtracting a constant term from the right-hand side:
\begin{restatable}{lemma}{lemmasqrtNtweak}
    \label{lemma:sqrt-Nt-weak}
    The stochastic process $(N_t)_{t\in[T]}$ satisfies
    $\Ex{}{\sqrt{N_T}}\ge \Omega(\Ex{}{\sqrt{\Var_T}})-O(1)$.
\end{restatable}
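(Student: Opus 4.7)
The plan is to establish, for each $v \geq 1$, a tail inequality of the form $\pr{}{N_T \geq v/2} \geq \pr{}{\Var_T \geq v} - e^{-\Omega(v)}$, and then integrate it to recover the desired comparison between $\Ex{}{\sqrt{N_T}}$ and $\Ex{}{\sqrt{\Var_T}}$. The $-O(1)$ slack in the lemma will absorb both the additive $e^{-\Omega(v)}$ tail and the regime where $\Var_T$ is bounded by a constant (which is exactly where the counterexample sketched in the text lives).

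\textbf{Martingale and compensator.} First, I would set $q_s \coloneqq \pr{}{n_s = 1 \mid \Fil_{s-1}}$, where $\Fil$ denotes the natural filtration of $x \sim \D$. Since the forecaster's prediction $p_s$ is $\Fil_{s-1}$-measurable, a case analysis on whether $p_s$ is below, above, or equal to $1/2$ shows that $q_s \in \{\pstar_s,\, 1-\pstar_s,\, 1\}$. In particular, $q_s \ge \min(\pstar_s, 1-\pstar_s) \ge \pstar_s(1-\pstar_s)$, so the predictable compensator $A_t \coloneqq \sum_{s \le t} q_s$ satisfies $A_T \ge \Var_T$; moreover, $q_s(1-q_s) \le \pstar_s(1-\pstar_s)$ in each of the three cases, which will imply that the martingale $R_t \coloneqq A_t - N_t$ has predictable quadratic variation uniformly bounded by $\Var_T$ and bounded increments $|R_s - R_{s-1}| \le 1$.

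\textbf{Freedman at a stopping time, then integration.} The next step is to introduce the stopping time $\sigma_v \coloneqq \min\{t \in [T] : \Var_t \ge v\}$ (set to $+\infty$ if no such $t$ exists); note $\{\sigma_v \le T\} = \{\Var_T \ge v\}$ since $\Var_t$ is nondecreasing. Applying Freedman's inequality~\cite{Freedman75} to $(R_t)$ stopped at $\sigma_v \wedge T$, using $\langle R \rangle_{\sigma_v \wedge T} \le \Var_{\sigma_v \wedge T} \le v + 1/4$ (each $\Var$-increment is at most $1/4$), will yield $\pr{}{R_{\sigma_v \wedge T} \ge v/2} \le e^{-\Omega(v)}$. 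On $\{\sigma_v \le T\}$, combining $A_{\sigma_v} \ge \Var_{\sigma_v} \ge v$ with $R_{\sigma_v} \le v/2$ forces $N_T \ge N_{\sigma_v} \ge v/2$ by monotonicity of $N$, giving the advertised tail bound. Finally, writing $\Ex{}{\sqrt{N_T}} = \int_0^\infty \pr{}{N_T \ge t^2}\, dt$ and substituting $v = 2t^2$ reduces the claim to comparing $\int_{1/\sqrt{2}}^\infty \pr{}{\Var_T \ge 2t^2}\, dt$, which equals $\tfrac{1}{\sqrt{2}}\Ex{}{\sqrt{\Var_T}}$ up to an $O(1)$ additive term by change of variable, against the convergent integral of the $e^{-\Omega(t^2)}$ slack.

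\textbf{Main obstacle.} The hard part is ensuring that the Freedman bound is scaled in $\Var_T$ and not in $A_T$, because $A_T$ can be much larger than $\Var_T$ whenever the forecaster plays $p_s = 1/2$ (making $q_s = 1$ and thus $A_s - A_{s-1} = 1 \gg \pstar_s(1-\pstar_s)$). The observation that unlocks the argument is the pointwise bound $q_s(1-q_s) \le \pstar_s(1-\pstar_s)$ in all three cases for $q_s$: this forces $\langle R \rangle_T \le \Var_T$ regardless of the forecaster's strategy. Without it, a naive use of $\langle R \rangle_T \le A_T$ would only give a concentration radius scaled in $\sqrt{A_T}$, which is potentially much larger than $\sqrt{\Var_T}$ and would not close the loop to the SSCE lower bound targeted in Theorem~\ref{thm:lower-bound-weaker}.
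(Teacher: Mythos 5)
Your proposal is correct, and it follows a genuinely different route from the paper's. The paper starts from the pointwise inequality $\sqrt{N_T} \ge \tfrac{1}{4}\sqrt{\Var_T} - \tfrac{1}{4}\sqrt{\Var_T}\cdot\1{N_T < \Var_T/16}$, so the burden becomes showing $\Ex{}{\sqrt{\Var_T}\cdot\1{N_T < \Var_T/16}} = O(1)$. To do this, the paper partitions the time horizon into blocks of approximately unit realized variance, constructs a submartingale $(A_j)$ out of the Bernoulli indicators $\1{N_{B_j}\ge 1}$ using a nontrivial induction (Lemma~\ref{lemma:sub-martingale}, which in turn rests on Lemma~\ref{lemma:auxiliary}), and applies Azuma--Hoeffding to $(A_j)$ after further slicing the range of $\Var_T$ into dyadic scales. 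You instead work directly with the Doob compensator $A_t = \sum_{s\le t}q_s$ of $(N_t)$, which makes $R_t = A_t - N_t$ a martingale, and you prove the tail bound $\pr{}{N_T \ge v/2} \ge \pr{}{\Var_T \ge v} - e^{-\Omega(v)}$ by a single application of Freedman's inequality at the stopping time $\sigma_v$, then integrate. The observation that closes your argument — and which the paper does not use at this point — is the two-sided comparison $q_s(1-q_s) \le \pstar_s(1-\pstar_s)$: the paper only exploits the lower bound $q_s \ge \pstar_s(1-\pstar_s)$, whereas the matching upper bound on $q_s(1-q_s)$ is exactly what lets you replace the compensator $A_T$ (which can be much larger than $\Var_T$, e.g.\ if the forecaster plays $p_s=1/2$ often) by $\Var_T$ in the predictable quadratic variation. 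Your route is shorter and avoids the block machinery and the induction in Lemma~\ref{lemma:auxiliary}. The paper's block construction, however, is reused essentially unchanged in the stronger Lemma~\ref{lemma:sqrt-Nt-strong} (which drops the additive $O(1)$ slack and handles the small-variance regime via $\gamma$), so it earns its keep there; your argument, because it relies on absorbing the $e^{-\Omega(v)}$ slack and the $\Var_T<1$ regime into the $-O(1)$ term, would need additional ideas to match that.
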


\begin{proof}[Proof of Lemma~\ref{lemma:sqrt-Nt-weak}]
    Since $N_T\ge {\Var_T}/{16}$ implies $\sqrt{N_T}\ge{\sqrt{\Var_T}}/{4}$, we have
    \begin{align*}
        \sqrt{N_T}\ge \frac{\sqrt{\Var_T}}{4}\cdot\1{N_T\ge \frac{\Var_T}{16}}
        =\frac{\sqrt{\Var_T}}{4}-\frac{\sqrt{\Var_T}}{4}\cdot\1{N_T< \frac{\Var_T}{16}}.
    \end{align*}
    Therefore, to establish the inequality 
    $\Ex{}{\sqrt{N_T}}\ge \Omega(\Ex{}{\sqrt{\Var_T}})-O(1)$,
    it suffices to prove that the expectation of the second term---which we denote with $M$---is upper bounded by $O(1)$, i.e.,
    \begin{align}
        M\coloneqq\Ex{}{\sqrt{\Var_T}\cdot \1{N_T< {\Var_T}/{16}}}\le O(1).
        \label{ineq:to-show}
    \end{align}
    We proceed by partitioning the range of $\Var_T$ into subintervals of geometrically increasing length and enumerating all possibilities for which subinterval $\Var_T$ falls into. If $\Var_T\le 1$, its contribution to $M$ is clearly $O(1)$. Otherwise, we must have $\Var_T\in[2^l,2^{l+1})$ for some $l\in\bbN$, which implies that $N_T<\Var_T/16< 2^{l-3}$. Therefore, we bound $M$ by taking a union bound over all such $l$'s:
    \begin{align}
        M&\le O(1)+ \sum_{l\in \bbN}
        \Ex{}{\sqrt{\Var_T} \cdot \1{N_T<2^{l-3}\ \wedge\ \Var_T\in[2^l,2^{l+1})}}\nonumber\\
        &\le  O(1)+\sum_{l\in \bbN}\sqrt{2^{l+1}}\cdot\pr{}{N_T<2^{l-3}\ \wedge\ \Var_T\ge 2^l}.
        \label{eq:M}
    \end{align}
    Now we bound $\pr{}{N_T<k/8\ \wedge\ \Var_T\ge k}$ for any fixed value of $k$ (that plays the role of $2^l$) by constructing a sub-martingale. 
    We start by partitioning the time horizon $[T]$ into blocks based on the realized variance $\Var_t$, such that each block $B_j\coloneqq(b_{j-1},\ b_j]$ terminates upon the realized variance $\Var_{B_j}$ first exceeds $1$. Formally, 
    using notation $X_{t_1:t_2}\coloneqq X_{\min\{t_2,T\}}-X_{t_1}$ to denote the increment of any process $(X_t)$ in $(t_1,t_2]$ (with $X_0=0$ by default),
    the endpoints $b_j$ are defined recursively as:
    \begin{align*}
        b_0 \coloneq 0,\ b_j\coloneqq\min \bset{\infty} \cup \bset{t \in [b_{j-1}+1, T] \mid \Var_{b_{j\!-\!1}:t} \geq 1},\ \forall j\geq 1.
    \end{align*}
    We show in the following lemma that for each block $B_j$, the expected increment $N_{B_j}$ within $B_j$ is lower bounded by a constant as long as $B_j$ terminates before $T$. 
    \begin{restatable}{lemma}{lemmasubmartingale} 
        \label{lemma:sub-martingale}
        For the constant $c=1-1/e$,
        $ 
            \Ex{}{\1{N_{B_j}\ge 1}-c\cdot\1{b_j<\infty}\ \Big|\ \Fil_{b_{j-1}}}\ge0$.
    \end{restatable}
    We prove Lemma~\ref{lemma:sub-martingale} in Appendix~\ref{app:proof-sub-martingale}. This lemma justifies that if we define $A_j$ as
    \[
        A_{0}\coloneq 0,\quad
        A_j-A_{j-1}\coloneqq \1{N_{B_j}\ge 1}-c\cdot\1{b_j<\infty}\ (j\ge 1),
    \]
    then $(A_j)_{j\ge0}$ forms a sub-martingale of bounded increment $|A_j-A_{j-1}|\le 1$, making it unlikely for any $A_j$ to deviate significantly below $0$. 
    However, if $N_T<k/8$ and $\Var_T\ge k$, then $A_{k/2}$ must witness a large deviation: on the one hand, block $B_{k/2}$ should terminate properly because the variance in each block cannot exceed $2$; on the other hand, $N_T<k/8$ implies that at most $k/8$ of these blocks can have a nonzero increment $N_{B_j}$. As a result,
    \begin{align*}
        A_{k/2}&=\sum\nolimits_{j=1}^k \1{N_{B_j}\ge1}-c\cdot\sum\nolimits_{j=1}^k \1{b_j<\infty}\le N_T-c\cdot(k/2) < -k/8.
    \end{align*}
    By applying the Azuma-Hoeffding inequality for submartingales, we can quantitatively bound the probability of such a large deviation by
    \[
        \pr{}{N_T<k/8\ \wedge\ \Var_T\ge k}
        \le \pr{}{A_{k/2}\le-k/8}\le e^{-k/64}.
    \]
    Finally, plugging the above bound back into equation~\eqref{eq:M} gives us
    \begin{align*}
        M\le O(1)+\sum\nolimits_{l\in\bbN}
        \sqrt{2^{l+1}}\cdot e^{-2^{l-6}}\le O(1).
    \end{align*}
    We have thus established the inequality~\eqref{ineq:to-show}, which in turn proves the lemma.
\end{proof}

\subsection{The Stronger Lower Bound}
\label{sec:strong-lower-bound}

In this section, we state and prove the stronger $\SSCE$ lower bound for all forecasters.

\begin{theorem}
\label{thm:lower-bound-stronger}
    For any $\D \in \Delta(\{0, 1\}^T)$, $\OPT_\SSCE(\D) = \Omega(\Ex{}{\gamma(\Var_T)})$, where the function $\gamma$ is defined as $\gamma(x)\coloneqq x\cdot\1{0\le x<1}+\sqrt{x}\cdot \1{x\ge 1}$.
\end{theorem}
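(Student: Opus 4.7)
By Lemma~\ref{lemma:SSCE-sqrt-Nt}, which gives $\SSCE(x, p) \ge \Omega(\sqrt{N_T})$ pointwise, the task reduces to proving $\Ex{}{\sqrt{N_T}} \ge \Omega(\Ex{}{\gamma(\Var_T)})$. The plan is to decompose $\Ex{}{\gamma(\Var_T)} = A + B$ where $A := \Ex{}{\Var_T \cdot \1{\Var_T < 1}}$ and $B := \Ex{}{\sqrt{\Var_T} \cdot \1{\Var_T \ge 1}}$, and to lower bound $\Ex{}{\sqrt{N_T}}$ against each part. The weaker argument in Section~\ref{sec:weak-lower-bound} only gives $\Ex{}{\sqrt{N_T}} \ge \Omega(\Ex{}{\sqrt{\Var_T}}) - O(1)$; eliminating that additive term is exactly what the refined analysis for $\gamma$ must accomplish.

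For the small-variance part, I would introduce the exponential super-martingale $L_t := \1{N_t = 0} \exp(\Var_t)$. Using the inequality $e^x(1-x) \le 1$ for $x \ge 0$ together with $\Pr[n_t = 1 \mid \Fil_{t-1}] \ge \pstar_t(1-\pstar_t)$, one verifies $\Ex{}{L_t \mid \Fil_{t-1}} \le L_{t-1}$, so that $\Pr[N_T = 0] \le \Ex{}{e^{-\Var_T}}$. Combining with the concavity-based inequality $1 - e^{-x} \ge (1 - e^{-1}) \min(x, 1)$ yields $\Ex{}{\sqrt{N_T}} \ge \Pr[N_T \ge 1] \ge (1 - e^{-1}) \Ex{}{\min(\Var_T, 1)} \ge (1 - e^{-1}) A$, handling the small-variance regime.

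For the large-variance part, I would sharpen Lemma~\ref{lemma:sub-martingale} via the parameterized super-martingale $L_t^\lambda := \exp(-\lambda N_t + (1 - e^{-\lambda}) \Var_t)$, whose super-martingale property follows from $1 - (1 - e^{-\lambda}) y \le \exp(-(1 - e^{-\lambda}) y)$ for $y \in [0, 1]$. A Chernoff-type argument combined with optimizing over $\lambda$ then produces $\Pr[N_T < K/16, \Var_T \ge K] \le e^{-c_2 K}$ for some universal $c_2 > 0$. Together with a dyadic decomposition $\Var_T \in [2^l, 2^{l+1})$, this yields the per-bin bound
\[
\Ex{}{\sqrt{N_T} \cdot \1{\Var_T \in [2^l, 2^{l+1})}} \ge \sqrt{2^{l-4}} \cdot \bigl(\Pr[\Var_T \in [2^l, 2^{l+1})] - e^{-c_2 \cdot 2^l}\bigr)_+.
\]

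The main obstacle is summing these bin-wise bounds to recover $\Omega(B)$ without an additive constant. I would split the bins into \emph{typical} ones (with $\Pr_l \ge 2 e^{-c_2 \cdot 2^l}$, so that the concentration bound captures at least half of the bin's contribution to $B$) and \emph{atypical} ones, whose total contribution to $B$ is bounded by the universal constant $C' := O\bigl(\sum_l \sqrt{2^{l+1}} e^{-c_2 \cdot 2^l}\bigr)$. To absorb $C'$, I would combine with the bound $\Ex{}{\sqrt{N_T}} \ge (1 - e^{-1}) \Pr[\Var_T \ge 1]$ from the small-variance argument, and perform a dichotomy on the conditional expectation $B/\Pr[\Var_T \ge 1]$: when this ratio is bounded by a universal constant, $B$ is already a constant multiple of $\Pr[\Var_T \ge 1]$ and the bound follows; when the ratio is large, the bulk of $B$ comes from bins with $\Var_T$ much larger than $1$, for which $e^{-c_2 \cdot 2^l}$ is exponentially negligible compared to $\Pr_l$ and the concentration bound yields $\Omega(B)$ with no additive loss. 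Combining the $A$ and $B$ bounds then gives $\Ex{}{\sqrt{N_T}} \ge \Omega(A + B) = \Omega(\Ex{}{\gamma(\Var_T)})$, completing the proof.
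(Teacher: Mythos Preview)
Your super-martingales are correct: $L_t = \1{N_t=0}\,e^{\Var_t}$ is indeed a super-martingale, and so is $L_t^\lambda = \exp(-\lambda N_t + (1-e^{-\lambda})\Var_t)$, yielding the concentration bound $\Pr[N_T < K/16,\,\Var_T \ge K] \le e^{-c_2 K}$. The small-variance part works, with one caveat: $\Ex{}{\1{N_T=0}e^{\Var_T}} \le 1$ does not by itself imply $\Pr[N_T=0] \le \Ex{}{e^{-\Var_T}}$, but you can instead use $\Pr[N_T\ge1] \ge \Ex{}{\1{N_T=0}(e^{\Var_T}-1)} \ge \Ex{}{\1{N_T=0}\Var_T}$ and absorb the remaining piece $\Ex{}{\Var_T\1{\Var_T<1,\,N_T\ge1}} \le \Pr[N_T\ge1]$ to get $\Pr[N_T\ge1] \ge A/2$.

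The genuine gap is the dichotomy in Case~2. You assert that when $B/\Pr[\Var_T\ge1]$ is large, the bins carrying the bulk of $B$ must be ``typical'' (i.e., $\Pr_l \gg e^{-c_2 2^l}$), but nothing forces this: all of $B$ can live in a single atypical bin. Concretely, take $\pstar_1 = p^*$ small, $\pstar_t = 1/2$ for $t\ge2$ on $\{x_1=1\}$ and $\pstar_t=0$ otherwise, with $T$ chosen so that $\Var_T \approx 2^{l^*}$ on $\{x_1=1\}$ and $p^* = e^{-c_2 2^{l^*}}$. Then $B = p^*\sqrt{2^{l^*}}$, $B/\Pr[\Var_T\ge1] = \sqrt{2^{l^*}}$ is arbitrarily large, yet the only bin contributing to $B$ has $\Pr_{l^*} = p^*$ right at the atypical threshold, so your per-bin lower bound $(\Pr_{l^*} - e^{-c_2 2^{l^*}})_+$ vanishes. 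The true inequality still holds here---applying $L_t^\lambda$ \emph{conditionally} on $\Fil_1$ shows $N_T \gtrsim 2^{l^*}$ with high probability on $\{x_1=1\}$---but your unconditional dyadic argument does not see it. The paper sidesteps this by a different aggregation: it partitions time into blocks of unit realized variance, groups them into epochs $\cT_k$ of $\approx 2^{k-1}$ blocks, caps $N$ to $\tN$ with $\tN_{\tau_k} \le 2^k$ deterministically, and telescopes $\Ex{}{\sqrt{\tN_{\tau_k}}-\sqrt{\tN_{\tau_{k-1}}}} \ge 2^{-k/2-1}\Ex{}{\tN_{\cT_k}} \ge 2^{k/2-3}\Pr[\tau_k<\infty]$ using Lemma~\ref{lemma:sub-martingale}. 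Since each term is already conditional on reaching epoch~$k$, the sum produces $\Omega(\Ex{}{\gamma(\Var_T)})$ with no additive loss.
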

\begin{proof}[Proof of Theorem~\ref{thm:lower-bound-stronger}]
    The theorem holds by combining Lemma~\ref{lemma:SSCE-sqrt-Nt}, which lower bounds the $\SSCE$ by $\Omega(\sqrt{N_T})$, and the stronger lower bound on $\Ex{}{\sqrt{N_T}}$ shown in Lemma~\ref{lemma:sqrt-Nt-strong}.
\end{proof}

\begin{lemma}
    \label{lemma:sqrt-Nt-strong}
    There exists a universal constant $C > 0$ such that $\Ex{}{\sqrt{N_T}}\ge C\cdot\Ex{}{\gamma(\Var_T)}$,
    where the function $\gamma$ is defined as $\gamma(x)\coloneqq x\cdot\1{0\le x<1}+\sqrt{x}\cdot \1{x\ge 1}$.
\end{lemma}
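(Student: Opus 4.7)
My plan is to prove this strengthening of Lemma~\ref{lemma:sqrt-Nt-weak} by pairing a new supermartingale aligned with $\gamma$ against the weak lemma's Azuma tail control, arranging the combination so that a case analysis absorbs the weak lemma's additive $O(1)$ error.

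\textbf{Step 1 (Supermartingale).} I would define $M_t \coloneqq \1{N_t = 0}\bigl(1 + \gamma(\Var_t)\bigr)$ and verify that $(M_t)$ is a supermartingale with respect to $(\Fil_t)$. Since $\Pr[n_t = 0 \mid \Fil_{t-1}] \le 1 - u_t$ where $u_t = \pstar_t(1-\pstar_t)$, it suffices to check the deterministic inequality $(1-u)(1 + \gamma(v+u)) \le 1 + \gamma(v)$ for $v \ge 0$ and $u \in [0, 1/4]$, in three cases: (i) $v+u < 1$ by direct expansion; (ii) $v \ge 1$, using $\sqrt{v+u} - \sqrt{v} \le u/(2\sqrt{v}) \le u/2$; and (iii) $v < 1 \le v+u$, reducing to $1-u \le \sqrt{v+u}$. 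Then $\Ex{M_T} \le M_0 = 1$ rearranges to the key inequality
\[
\Ex{\gamma(\Var_T)} \;\le\; \Pr[N_T \ge 1] + \Ex{\1{N_T \ge 1}\, \gamma(\Var_T)}.
\]

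\textbf{Step 2 (Bounding the right-hand side by $\Ex{\sqrt{N_T}}$).} Since $\sqrt{N_T} \ge \1{N_T \ge 1}$, the first term is at most $\Ex{\sqrt{N_T}}$. For the second, I would partition $\{N_T \ge 1\}$ into three pieces: (a) $\{\Var_T < 1\}$, on which $\gamma(\Var_T) < 1 \le \sqrt{N_T}$; (b) $\{\Var_T \ge 1,\, N_T \ge \Var_T/16\}$, on which $\gamma(\Var_T) = \sqrt{\Var_T} \le 4\sqrt{N_T}$; and (c) the ``bad'' event $\{\Var_T \ge 16,\, N_T < \Var_T/16\}$. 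The contribution of (c) is $\Ex{\sqrt{\Var_T}\,\1{(c)}}$, which I would bound by the absolute constant $C_0 \coloneqq \sum_{l \ge 4}\sqrt{2^{l+1}}\,e^{-2^{l-6}}$ via the sub-martingale-plus-Azuma argument in the proof of Lemma~\ref{lemma:sqrt-Nt-weak}. Combining yields $\Ex{\1{N_T \ge 1}\gamma(\Var_T)} \le 4\Ex{\sqrt{N_T}} + C_0$, so the key inequality becomes $\Ex{\gamma(\Var_T)} \le 5\Ex{\sqrt{N_T}} + C_0$.

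\textbf{Step 3 and the main obstacle.} When $\Ex{\gamma(\Var_T)} \ge 2C_0$, Step~2 immediately yields $\Ex{\sqrt{N_T}} \ge \Ex{\gamma(\Var_T)}/10$. The delicate case is $\Ex{\gamma(\Var_T)} < 2C_0$: here I would split the key inequality based on which term dominates---if $\Pr[N_T \ge 1] \ge \tfrac12 \Ex{\gamma(\Var_T)}$ then $\Ex{\sqrt{N_T}} \ge \tfrac12 \Ex{\gamma(\Var_T)}$; otherwise the $\gamma$-mass concentrates on $\{N_T \ge 1\}$ and pieces (a)+(b) force $\sqrt{N_T} \gtrsim \gamma(\Var_T)$ on a dominant fraction of that set. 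The hardest part will be managing these constants so that the bounded-regime sub-case does not produce a counterexample when $\gamma(\Var_T)$ is concentrated on rare, heavy-tailed $\Var_T$ events. Should the constants fail to line up, my backup plan is to verify that the stronger process $\1{N_t = 0}\,e^{\gamma(\Var_t)}$ is also a supermartingale (case (ii) requires only the numerical bound $(1-u)e^{u/2} \le 1$ for $u \in [0, 1/4]$); this yields $\Pr[N_T = 0,\, \gamma(\Var_T) \ge s] \le e^{-s}$, and integrating $\Pr[N_T \ge 1, \gamma(\Var_T) \ge s] \ge \Pr[\gamma(\Var_T) \ge s] - e^{-s}$ against $ds$ gives $\Ex{\1{N_T \ge 1}\gamma(\Var_T)} \ge \Ex{\gamma(\Var_T)} - 1$, which combined with Step~2 bypasses the constant chase altogether.
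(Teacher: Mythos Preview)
Your Steps 1--2 are correct and the supermartingale $M_t = \1{N_t=0}(1+\gamma(\Var_t))$ is a genuinely nice device, different from the paper's epoch-by-epoch linearization of $\sqrt{\tN_t}$. But Step~3 has a real gap, and the backup plan does not close it.

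\textbf{The gap in Step~3.} In the regime $\Ex{}{\gamma(\Var_T)}<2C_0$, your ``otherwise'' sub-case asserts that pieces (a)+(b) dominate the mass $\Ex{}{\1{N_T\ge1}\gamma(\Var_T)}$. This is not justified: nothing you have proved prevents \emph{all} of that mass from sitting on piece (c), where $\gamma(\Var_T)=\sqrt{\Var_T}$ can be much larger than $4\sqrt{N_T}$. Your only control on (c) is the absolute bound $C_0$ from Azuma, but since $\tfrac12\Ex{}{\gamma(\Var_T)}<C_0$ in this regime, that bound is vacuous here.

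\textbf{Why the backup plan fails.} The inequality you derive, $\Ex{}{\1{N_T\ge1}\gamma(\Var_T)}\ge\Ex{}{\gamma(\Var_T)}-1$, is a \emph{lower} bound on the second term of your key inequality. Combining it with Step~2's \emph{upper} bound only re-derives $\Ex{}{\gamma(\Var_T)}\le 5\Ex{}{\sqrt{N_T}}+C_0+1$, which still carries the additive constant. The structural reason: both of your supermartingales are proportional to $\1{N_t=0}$, so they give you nothing once $N_T\ge1$, and piece (c) lives entirely on $\{N_T\ge1\}$.

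\textbf{What would fix it.} You need a mechanism that tracks $N_t$ beyond the first jump. The paper does this by restarting: it partitions time into unit-variance blocks, caps the per-block increment of $N$ at $1$ to form $\tN_t$, groups blocks into geometric epochs, and linearizes $\sqrt{\tN_{\tau_k}}-\sqrt{\tN_{\tau_{k-1}}}$ per epoch (Lemma~\ref{lemma:sub-martingale-epoch}), with a separate argument (Lemma~\ref{lemma:var-random-nt}) for epoch $0$ that directly produces the $\Var_T\cdot\1{\Var_T<1}$ piece of $\gamma$. If you want to keep your supermartingale framework, the natural fix is to restart $M$ at the successive hitting times of $N$ and sum the resulting inequalities---which essentially rebuilds the paper's block structure.
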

\begin{proof}[Proof of Lemma~\ref{lemma:sqrt-Nt-strong}]
    The proof is also based on partitioning the time horizon into blocks $B_j=(b_{j-1},b_j]$---each with approximately unit variance---similar to the approach used in proving Lemma~\ref{lemma:sqrt-Nt-weak}. However, this proof involves a more careful analysis of the growth of $\sqrt{N_t}$ by further grouping blocks into ``epochs'' and giving special treatment to the first epoch, where the cumulative variance is very small.

    Specifically, consider the blocks $B_j=(b_{j-1},b_j]$ defined by
    \begin{align*}
        b_0 \coloneq 0,\ b_j\coloneqq\min \bset{\infty} \cup \bset{t \in [b_{j-1}+1, T] \mid \Var_{b_{j\!-\!1}:t} \geq 1},\ \forall j\geq 1.
    \end{align*}
    Recall that the increment of $\Var_t$ satisfies $\Var_t - \Var_{t-1} = \pstar_t(1-\pstar_t) \le 1/4$. Thus, every block $j$ satisfies $\Var_{B_j} = \Var_{b_j} - \Var_{b_{j-1}} = (\Var_{b_j - 1} - \Var_{b_{j-1}}) + (\Var_{b_j} - \Var_{b_j-1}) \le 1 + 1/4 = 5/4$.
    We further group blocks into epochs such that the $k$-th epoch $\cT_k\coloneqq(\tau_{k-1},\tau_{k}]$ contains $\approx 2^k$ blocks:
    \begin{align*}
        \cT_0\coloneqq B_1,\quad 
        \cT_k\coloneqq \bigcup_{j\in(2^{k-1},2^k]} B_j,\ \forall k\ge1
        \quad(\text{or equivalently, } 
        \tau_k\coloneqq b_{2^k}).
    \end{align*}
    In addition, we define $\tN_t$ as the sum of $n_{s}$ capped by $1$ in each block:
    \begin{align*}
        \tN_t\coloneqq 
        \sum_{j:b_j\le t} \min\{N_{B_j},1\}=\sum_{j:b_j\le t} \1{N_{B_j}\ge1}.
    \end{align*}
    Clearly, for all the realized sequences we have $N_T\ge\tN_T$ and $\tN_{\tau_k}\le 2^k$, where the latter is because each block contributes at most $1$ to $\tN_t$.
    In the following, 
    we will first analyze the growth of $\sqrt{\tN_t}$ in epochs $k\ge1$, then provide a different analysis for the zeroth epoch.

    \paragraph{In each epoch $\cT_k$ with $k\ge 1$.} 
    We start by establishing the following lemma, which extends the characterization of Lemma~\ref{lemma:sub-martingale} into epochs.
    \begin{lemma}[Lower bound on $\tN_{\cT_k}$]
        \label{lemma:sub-martingale-epoch}
        For any $k\ge1$, 
        we have
        \[
            \Ex{}{\tN_{\cT_k}}\ge 2^{k-2}\cdot\pr{}{\tau_k<\infty}.
        \]
    \end{lemma}
    \begin{proof}[Proof of Lemma~\ref{lemma:sub-martingale-epoch}]
        According to Lemma~\ref{lemma:sub-martingale}, we have that in each block $B_j=(b_{j-1},b_j]$,
        \begin{align*}
            \Ex{}{\1{N_{B_j}\ge 1}-c\cdot\1{b_j<\infty}}=\Ex{}{\Ex{}{\1{N_{B_j}\ge 1}-c\cdot\1{b_j<\infty}\ \Big|\ \Fil_{b_{j-1}}}}\ge0,
        \end{align*}
        where the first step uses the tower property of expectations, and $c=1-\frac{1}{e}\ge\frac{1}{2}$.
        
        Summing over all the blocks in epoch $\cT_k$, we obtain
        \begin{align*}
            \Ex{}{\tN_{\cT_k}}&=
            \sum_{j=2^{k-1}+1}^{2^k}\Ex{}{\tN_{B_j}}=\sum_{j=2^{k-1}+1}^{2^k}\Ex{}{\1{N_{B_j}\ge 1}}
            \tag{Definition of $\cT_k$ and $\tN_t$}\\
            &\ge c\cdot\Ex{}{
                \sum_{j=2^{k-1}+1}^{2^k} \1{b_j<\infty}
            }
            \tag{Lemma~\ref{lemma:sub-martingale}}\\
            &\ge c\cdot \Ex{}{\sum_{j=2^{k-1}+1}^{2^k}\1{\tau_k<\infty}}
            \tag{$b_j\le b_{2^k}=\tau_k$ for all $j\le 2^k$}\\
            &\ge 2^{k-2}\cdot \pr{}{\tau_k<\infty}.
            \tag{$c\ge1/2$}
        \end{align*}
        We have thus established Lemma~\ref{lemma:sub-martingale-epoch}.
    \end{proof}
    With Lemma~\ref{lemma:sub-martingale-epoch}, we obtain a lower bound by linearizing the increment of $\sqrt{\tN_t}$ in each block.
    \begin{align*}
        &\quad\Ex{}{\sqrt{\tN_{\tau_k}}-\sqrt{\tN_{\tau_{k-1}}}} \\
            &\ge
            \Ex{}{\frac{1}{2}\left(\tN_{\tau_k}\right)^{-\frac{1}{2}}\cdot \left(\tN_{\tau_k}-\tN_{\tau_{k-1}}\right)}
            \tag{Concavity of function $\sqrt{x}$}
        \\
        &\ge 2^{-\frac{k}{2}-1}\cdot 
        \Ex{}{ \tN_{\tau_k}-\tN_{\tau_{k-1}}}
        = 2^{-\frac{k}{2}-1}\cdot 
        \Ex{}{ \tN_{\cT_k}}
        \tag{$\tN_{\tau_k}\le 2^k$}\\
        &\ge  2^{\frac{k}{2}-3} \cdot\pr{}{\tau_k<\infty}.\tag{Lemma~\ref{lemma:sub-martingale-epoch}}
    \end{align*}
    The first step above can be alternatively justified by $\sqrt{a} - \sqrt{b} = \frac{a - b}{\sqrt{a} + \sqrt{b}} \ge \frac{a - b}{2\sqrt{a}}$, which holds for all $a \ge b \ge 0$.

    \paragraph{In epoch $\cT_0$.} We now analyze $\sqrt{\tN_{\cT_0}}$ in epoch $0$. Note that the $\cT_0$ contains only the first block $B_1$, so this value is either $0$ or $1$, depending on whether there exists a $t\in B_1$ 
    such that $n_t=\1{|x_t-p_t|\ge\frac{1}{2}}=1$.

    Recall that in the proof of Lemma~\ref{lemma:sub-martingale}, we have shown that regardless of the choice of $p_t$,
    \begin{align*}
        \pr{}{n_t=1\mid \Fil_{t-1}}=\pr{x_t \sim \Bern(p^\star_t)}{|x_t - p_t| \ge 1/2}
        \ge p^\star_t (1 - p^\star_t)
    \end{align*}
    Therefore, in the special case of product distributions (i.e., the sequence $(p_t^\star)$ is deterministic and each outcome $x_t\sim p_t^\star$ is independent of other time steps), we can directly bound the probability that $\sqrt{\tN_{\cT_0}}=1$ as follows:
    \begin{align*}
        \pr{}{\sqrt{\tN_{\cT_0}}=1}&=
        1-\prod_{t=1}^{\tau_1} \pr{}{n_t=0}
        \ge 1-\prod_{t=1}^{\tau_1} [1-\pstar_t(1-\pstar_t)]\\
        &\ge 1-\exp\left(-{\sum_{t=1}^{\tau_1} \pstar_t(1-\pstar_t)}\right)=1-\exp(-\Var_{B_1})
        \ge \frac{1}{2}\Var_{B_1},
    \end{align*}
    where the last step follows from the inequality $1-e^{-x}\ge x/2$ when $0\le x\le 5/4$, and the fact that $\Var_{B_1} \le 5/4$.

    However, in the general case where the sequence $(\pstar_t)$ is itself random and depends on the history of $x_t$'s, such a direct argument fails. Instead, we use Lemma~\ref{lemma:auxiliary-strong} that extends the above analysis to this more general setting. 

    \begin{restatable}{lemma}
    {lemmavarrandomnt}
    \label{lemma:var-random-nt}
        (Lower bound on $\tN_{\cT_0}$)
        For the first epoch, we have
        $\Ex{}{\sqrt{\tN_{\cT_0}}}
        \ge\frac{1}{4}\Ex{}{\Var_{T}\cdot\1{\tau_1=\infty}}$.
    \end{restatable}

    Lemma~\ref{lemma:var-random-nt} is itself a similar but more general statement than Lemma~\ref{lemma:sub-martingale}, as it provides a much tighter bound on the increment of $N_t$ in the first epoch (equivalently first block) when the cumulative variance is significantly smaller than $1$. We prove Lemma~\ref{lemma:var-random-nt} in Appendix~\ref{app:proof-var-random-nt}
    
    \paragraph{Putting everything together.}
    Combining the lower bounds for epoch $0$ and epochs $k\ge1$, we obtain 
    \begin{align*}
        \Ex{}{\sqrt{\tN_T}}&=
        \Ex{}{\sqrt{\tN_{\cT_0}}}+\sum_{k\ge1}\Ex{}{\sqrt{\tN_{\tau_k}}-\sqrt{\tN_{\tau_{k-1}}}}\\
        &\ge\frac{1}{4}\Ex{}{\Var_T\cdot \1{\tau_1=\infty}}+\sum_{k\ge1}2^{\frac{k}{2}-3} \cdot\pr{}{\tau_k<\infty}\\
        &=\frac{1}{4}\Ex{}{\Var_T\cdot \1{\tau_1=\infty}}+\sum_{k\ge1}\pr{}{\tau_{k-1}<\infty,\tau_k=\infty}\sum_{k'<k}2^{\frac{k'}{2}-3} \\
        &\ge\frac{1}{8\sqrt{2}}\Ex{}{\Var_T\cdot \1{\tau_1=\infty}+\sum_{k\ge1}\1{\tau_{k-1}<\infty,\tau_k=\infty}\cdot 2^{\frac{k}{2}}}
        \\
        &\ge\frac{1}{16}\Ex{}{\Var_T\cdot \1{\tau_1=\infty}+\sum_{k\ge1}\1{\tau_{k-1}<\infty,\tau_k=\infty}\cdot \sqrt{\Var_T}},
    \end{align*}
    where the last step follows from the observation that the cumulative variance in each block cannot exceed $2$, so $\tau_{k}=\infty$ implies that $\Var_T<2^{k+1}$, i.e., $2^{k/2}\ge\sqrt{\Var_T}/\sqrt{2}$. Finally, since $\tau_1=\infty$ is equivalent to $\Var_T<1$, we have established that
    \begin{align*}
        \Ex{}{\sqrt{\tN_T}}&\ge \frac{1}{16}\Ex{}{\Var_T\cdot \1{\Var_T<1}+\1{\Var_T\ge1}\cdot \sqrt{\Var_T}}=\frac{1}{16}\Ex{}{\gamma(\Var_T)}.
    \end{align*}
    The lemma follows from the fact that $N_T\ge \tN_T$ always holds, which implies $\Ex{}{\sqrt{N_T}} \ge \Ex{}{\sqrt{\tN_T}} \ge \frac{1}{16}\Ex{}{\gamma(\Var_T)}$.
\end{proof}

\section{$\SSCE$ is Sound, Complete, and Approximately Truthful}\label{app:main-theorem}
In this section, we prove our main theorem (Theorem~\ref{thm:SSCE-truthful}) by combining the theorems established in the previous sections, and then verifying the completeness and soundness of the SSCE.
We then prove that a deterministic forecaster incurs a $O(\sqrt{T})$ $\SSCE$ against all adaptive adversaries (Theorem~\ref{thm:adversarial-upper-bound}).

\begin{proof}[Proof of Theorem~\ref{thm:SSCE-truthful}]
    Let $\D \in \Delta(\{0, 1\}^T)$ be an arbitrary distribution and define the random variable
    \[
        \Var_T \coloneq \sum_{t=1}^{T}\pstar_t(1 - \pstar_t),
    \]
    over $x \sim \D$, where $\pstar_t \coloneq \pr{x' \sim \D}{x'_t = 1 \mid x'_{1:(t-1)} = x_{1:(t-1)}}$. By Theorems \ref{theorem:upperboundtight}~and~\ref{thm:lower-bound-stronger}, the truthful forecaster gives
    \[
        \err_{\SSCE}(\D, \Atruthful(\D)) = O\left(\Ex{x \sim \D}{\gamma(\Var_T)}\right),
    \]
    whereas
    \[
        \OPT_{\SSCE}(\D) = \Omega\left(\Ex{x \sim \D}{\gamma(\Var_T)}\right).
    \]
    In the above, the $O(\cdot)$ and $\Omega(\cdot)$ notations hide universal constants that do not depend on $\D$. Therefore, there exists a universal constant $c > 0$ such that the SSCE is $(c, 0)$-truthful.

    \paragraph{Completeness.} Now we verify that the SSCE is complete. For any $x \in \{0, 1\}^T$, we have
    \[
        \SSCE(x, x)
    =   \Ex{y \sim \Unif(\{0, 1\}^T)}{\sup_{f \in \F}\sum_{t=1}^{T}y_T\cdot f(x_t)\cdot(x_t - x_t)}
    =   0.
    \]
    For any $\alpha \in [0, 1]$, the upper bound
    \[
        \Ex{x_1,\ldots,x_T \sim \Bern(\alpha)}{\SSCE(x, \alpha\cdot\vec{1}_T)}
    =   O(\sqrt{T\cdot\alpha\cdot(1-\alpha)})
    =   o_{\alpha}(T)
    \]
    follows from applying Theorem~\ref{theorem:upperboundtight} to the product distribution $\D = \prod_{t=1}^{T}\Bern(\alpha)$ and the fact that $\gamma(x) \le \sqrt{x}$ for all $x \ge 0$.

    \paragraph{Soundness.} To show that the SSCE is sound, we first consider the case that $x \in \{0, 1\}^T$ is arbitrary and the predictions are $p = \vec{1}_T - x$. Noting that the function $x \mapsto 1/2 - x$ is in the family $\F$ of $1$-Lipschitz functions from $[0, 1]$ to $[-1, 1]$, we have
    \begin{align*}
        \SSCE(x, \vec{1}_T - x)
    &=  \Ex{y \sim \Unif(\{0, 1\}^T)}{\sup_{f \in \F}\sum_{t=1}^{T}y_t\cdot f(1 - x_t)\cdot(x_t - (1 - x_t))}\\
    &\ge  \Ex{y \sim \Unif(\{0, 1\}^T)}{\sum_{t=1}^{T}y_t\cdot (x_t - 1/2)\cdot(2x_t - 1)}\\
    &=  \Ex{y \sim \Unif(\{0, 1\}^T)}{\frac{1}{2}\sum_{t=1}^{T}y_t}
    =   \frac{T}{4}
    =   \Omega(T),
    \end{align*}
    where the third step holds since $(x - 1/2)\cdot(2x - 1) = 1/2$ holds for every $x \in \{0, 1\}$.

    Finally, we fix $\alpha, \beta \in [0, 1]$ such that $\alpha \ne \beta$. For fixed $x, y \in \{0, 1\}^T$, we have
    \[
        \sup_{f \in \F}\sum_{t=1}^{T}y_t\cdot f(\beta)\cdot (x_t - \beta)
    =   \left|\sum_{t=1}^{T}y_t\cdot(x_t - \beta)\right|.
    \]
    Taking an expectation over $x_1, \ldots, x_T \sim \Bern(\alpha)$ and $y \sim \Unif(\{0, 1\}^T)$ gives
    \begin{align*}
        \Ex{x_1, \ldots, x_T \sim \Bern(\alpha)}{\SSCE(x, \beta\cdot\vec{1}_T)}
    &=  \Ex{x, y}{\sup_{f \in \F}\sum_{t=1}^{T}y_t\cdot f(\beta)\cdot(x_t - \beta)}\\
    &=  \Ex{x, y}{\left|\sum_{t=1}^{T}y_t\cdot(x_t - \beta)\right|}\\
    &\ge \left|\Ex{x, y}{\sum_{t=1}^{T}y_t\cdot(x_t - \beta)}\right|\\
    &=  \left|\frac{\alpha - \beta}{2}\cdot T\right|
    =   \Omega_{\alpha, \beta}(T),
    \end{align*}
    where the third step follows from Jensen's inequality $\Ex{}{|X|} \ge |\Ex{}{X}|$.
\end{proof}

\subsection{A Deterministic Algorithm for $O(\sqrt{T})$ $\SSCE$}
\label{sec:sqrt-T-upper-bound}
We now prove Theorem~\ref{thm:adversarial-upper-bound}, which states the existence of a deterministic forecaster that incurs an $O(\sqrt{T})$ SSCE against all adaptive adversaries. Recall the definition of the smooth calibration error ($\smCE$) from Section~\ref{sec:preliminaries}. The key step in the proof is to establish the following relation between $\SSCE$ and $\smCE$.

\begin{lemma}\label{lemma:SSCE-vs-smCE}
    For any $x \in \{0, 1\}^T$ and $p \in [0, 1]^T$,
    \[
        \SSCE(x, p) \le \frac{1}{2}\smCE(x, p) + O(\sqrt{T}),
    \]
    where the $O(\cdot)$ notation hides a universal constant that does not depend on $T$, $x$ or $p$.
\end{lemma}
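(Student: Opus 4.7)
My plan is to exploit the fact that a Rademacher-style reparameterization of $y$ splits the $\SSCE$ into two pieces: one that recovers $\tfrac{1}{2}\smCE$ exactly, and one that is a standard Rademacher complexity over the 1-Lipschitz class, which can be controlled by $O(\sqrt{T})$ using the same chaining toolkit that the paper has already developed in Section~\ref{sec:upper-bound}.

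Concretely, I would substitute $y_t = \tfrac{1 + \sigma_t}{2}$ with $\sigma_t \in \{-1, +1\}$ i.i.d.\ uniform, so that $y \sim \Unif(\{0,1\}^T)$ exactly. For any fixed $f \in \F$,
\[
\sum_{t=1}^T y_t f(p_t)(x_t - p_t) = \tfrac{1}{2}\sum_{t=1}^T f(p_t)(x_t - p_t) + \tfrac{1}{2}\sum_{t=1}^T \sigma_t f(p_t)(x_t - p_t).
\]
Taking the supremum over $f \in \F$ and using subadditivity of $\sup$, then taking expectation over $\sigma$, we obtain
\[
\SSCE(x,p) \le \tfrac{1}{2}\smCE(x,p) + \tfrac{1}{2}\Ex{\sigma \sim \Unif(\{\pm 1\}^T)}{\sup_{f \in \F}\sum_{t=1}^T \sigma_t f(p_t)(x_t-p_t)}.
\]
So it suffices to bound the Rademacher term by $O(\sqrt{T})$.

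For the Rademacher term I would use Dudley-style chaining exactly as in the proof of Theorem~\ref{theorem:upperboundtight}. Let $\F_\delta$ be the $e^{O(1/\delta)}$-size $L^\infty$-covering of $\F$, and let $\pi_k(f) \in \F_{2^{-k}}$ be the closest element to $f$ at scale $2^{-k}$. For $k \ge 1$, any $g = \pi_k(f) - \pi_{k-1}(f)$ satisfies $\|g\|_\infty \le 3 \cdot 2^{-k}$, and the number of such differences is at most $|\F_{2^{-k}}|\cdot|\F_{2^{1-k}}| = e^{O(2^k)}$. For a fixed such $g$, the sum $\sum_t \sigma_t g(p_t)(x_t - p_t)$ is a mean-zero sub-Gaussian random variable with variance proxy $\sum_t g(p_t)^2(x_t-p_t)^2 \le 9T \cdot 4^{-k}$. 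The sub-Gaussian maximal inequality with union bound over the $e^{O(2^k)}$ choices of $g$ therefore yields
\[
\Ex{\sigma}{\max_{g}\sum_{t=1}^T \sigma_t g(p_t)(x_t-p_t)} = O\!\left(\sqrt{T \cdot 4^{-k} \cdot 2^k}\right) = O\!\left(\sqrt{T}\,\cdot 2^{-k/2}\right).
\]
The base case $\pi_0(f) \in \F_1$ (which has $O(1)$ functions) contributes $O(\sqrt{T})$ directly by Hoeffding, and by telescoping plus the uniform approximation error $\|f - \pi_K(f)\|_\infty \le 2^{-K}$ (which contributes $2^{-K}\cdot T$ deterministically and can be driven to $0$ by letting $K \to \infty$), we sum over $k$ to get
\[
\Ex{\sigma}{\sup_{f \in \F}\sum_{t=1}^T \sigma_t f(p_t)(x_t-p_t)} \le O(\sqrt{T}) + \sum_{k \ge 1} O(\sqrt{T}\,\cdot 2^{-k/2}) = O(\sqrt{T}).
\]
Combining with the decomposition above yields the claimed bound $\SSCE(x,p) \le \tfrac{1}{2}\smCE(x,p) + O(\sqrt{T})$.

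The main obstacle I anticipate is purely bookkeeping on the chaining: one has to verify that the $L^\infty$-approximation error term $2^{-K}\cdot T$ can be absorbed (by taking $K$ large) and that the union-bound constants inside the sub-Gaussian maximal inequality collapse to a geometric series summing to $O(\sqrt{T})$. There is no genuine difficulty from dependence issues, because the chaining is entirely over the deterministic class $\F$ while $\sigma$ provides the only randomness; in particular the argument is independent of the joint distribution of $(x,p)$, giving the desired worst-case (pointwise in $x,p$) bound.
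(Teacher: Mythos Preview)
Your proposal is correct and essentially identical to the paper's proof: the paper writes $y_t = (y_t - \tfrac12) + \tfrac12$ (equivalently your Rademacher substitution) to split off $\tfrac12\smCE$, then bounds the remaining term $\Ex{y}{\sup_{f\in\F}\sum_t (y_t-\tfrac12)f(p_t)(x_t-p_t)}$ by $O(\sqrt{T})$ via Hoeffding plus union bound over finite subfamilies and the same Dudley chaining with the $e^{O(1/\delta)}$ covering. The only difference is cosmetic---the paper states the finite-class bound $O(\sqrt{T\log N})$ and then applies Dudley's integral, whereas you sum the geometric series over scales explicitly.
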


Theorem~\ref{thm:adversarial-upper-bound} follows from the lemma above and a recent result of~\cite{ACRS24}.

\begin{proof}[Proof of Theorem~\ref{thm:adversarial-upper-bound}]
    It was shown by~\cite{ACRS24} that there exists a deterministic forecaster with an $O(\sqrt{T})$ distance from calibration ($\caldist(x,p)$) against every adaptive adversary in the adversarial sequential calibration setup. Lemma~\ref{lemma:SSCE-vs-smCE} together with the inequality $\frac{1}{2}\smCE(x, p) \le \caldist(x, p)$ from \cite[Lemma 5.4 and Theorem 7.3]{BGHN23} implies that
    \[
        \SSCE(x, p) \le \caldist(x, p) + O(\sqrt{T}),
    \]
    so the same forecaster incurs an SSCE of $O(\sqrt{T})$ as well.
\end{proof}

Now we prove Lemma~\ref{lemma:SSCE-vs-smCE} via a standard chaining argument.

\begin{proof}[Proof of Lemma~\ref{lemma:SSCE-vs-smCE}]
    We decompose the SSCE as follows:
    \begin{align*}
        &~\SSCE(x,p)\\
    =   &~\Ex{y \sim \Unif(\{0, 1\}^T)}{\sup_{f\in\cF}\sum_{t=1}^T y_t\cdot f(p_t)\cdot (x_t-p_t)}\\
    \le &~\Ex{y \sim \Unif(\{0, 1\}^T)}{\sup_{f\in\cF}\sum_{t=1}^T \left(y_t-\frac{1}{2}\right)\cdot f(p_t)\cdot (x_t-p_t)}+\frac{1}{2}\sup_{f\in\cF}\sum_{t=1}^T f(p_t)\cdot (x_t-p_t).
    \end{align*}
    Note that the second term is exactly $\frac{1}{2}\smCE(x,p)$, so it suffices to bound the first term by $O(\sqrt{T})$.

    For notational convenience, let $M_{T}^{(f)}\coloneqq\sum_{t=1}^T \left(y_t-\frac{1}{2}\right)\cdot f(p_t)\cdot (x_t-p_t)$ for function $f \in \F$.
    We will establish the following bound for any $N \ge 1$ and functions $f_1, f_2, \ldots,f_N$ from $[0, 1]$ to $[-1, 1]$:
    \begin{align}
        \Ex{y \sim \Unif(\{0, 1\}^T)}{\sup_{i\in [N]}M_{T}^{(f_i)}}\le O(\sqrt{T\log N}).
        \label{eq:adversarial-finite}
    \end{align}
    Assuming Inequality~\eqref{eq:adversarial-finite}, applying Dudley's chaining technique~\cite{Dudley87} to the $\delta$-covering $\cF_\delta$ defined in Lemma~\ref{lemma:lipschitzcover} would give
    \begin{align*}
        \Ex{y \sim \Unif(\{0, 1\}^T)}{\sup_{f\in\cF}M_T^{(f)}}
        &\lesssim \int_{0}^1 \sqrt{T\log|\cF_{\delta}|}~\d\delta
        \tag{chaining}
        \\
        &\lesssim\sqrt{T}\cdot\int_{0}^1 \delta^{-\frac{1}{2}}~\d\delta
        \tag{$\log |\cF_{\delta}|\le O(1/\delta)$ from Lemma~\ref{lemma:lipschitzcover}}\\
        &\le O(\sqrt{T}),
    \end{align*}
    which implies the lemma.
    
    Therefore, it remains to establish Inequality~\eqref{eq:adversarial-finite}. We prove this using Hoeffding's inequality and a union bound. For each $i\in[N]$ and every $\eps>0$, we have
    \begin{align*}
       \pr{}{\sup_{i\in [N]}M_T^{(f_i)}\ge \eps}
        &\le \sum_{i=1}^{N}\pr{}{M_T^{(f_i)}\ge \eps}\tag{union bound}\\
        &\le \sum_{i=1}^N \exp\left(
            -\frac{2\eps^2}{\sum_{t=1}^T (x_t-p_t)^2 f_i(p_t)^2 }
        \right)\tag{Hoeffding's inequality}\\
        &\le N\cdot\exp\left(-\frac{2\eps^2}{T}\right).
        \tag{$\|f_i\|_{\infty}\le1,\ \forall i\in[N]$}
    \end{align*}
    Finally, the bound \eqref{eq:adversarial-finite} holds by taking an integral over $\eps>0$: shorthanding $X \coloneq \sup_{i \in [N]}M_T^{(f_i)}$, we have
    \[
        \Ex{}{X}
    \le \int_0^{+\infty}\pr{}{X \ge \tau}~\d\tau
    \le \int_0^{+\infty}\min\{N\cdot e^{-2\tau^2/T}, 1\}~\d\tau
    =   O(\sqrt{T \log N}).
    \]
    This completes the proof.
\end{proof}

\section{Discussion}\label{sec:discussion}
We formulate three natural desiderata of calibration measures that evaluate the quality of probabilistic forecasts: truthfulness, completeness, and soundness. They serve as minimal requirements for an error metric to be considered as measuring calibration and not to create a significant incentive for forecasters to predict untruthfully. While existing calibration measures fail to simultaneously meet all these criteria, we propose the new calibration measure (SSCE) that is shown to be approximately truthful via a non-trivial analysis. In the following, we discuss two natural directions of future work.

\paragraph{Inherent trade-offs among different desiderata?} As shown in Table~\ref{table:condensed}, the SSCE and the error metrics induced by proper scoring rules give a trade-off between truthfulness and completeness: The former is complete and approximately truthful, while the latter is perfectly truthful but not complete. Is there a calibration measure that achieves the best of both worlds? Taking a step back, while our definition of truthfulness seems natural, the completeness and soundness criteria, as defined, only serve as minimal requirements. It still remains to explore ways to formally quantify the latter two, and investigate the inherent quantitative trade-offs among truthfulness, completeness and soundness.

\paragraph{Truthfulness against adaptive adversaries?} One may wonder whether the truthfulness guarantee of $\SSCE$ can be extended to handle \emph{adaptive} adversaries as well. Assuming that the forecaster is given an adversary's (randomized) strategy for choosing $x_t$ based on $x_{1:(t-1)}$ and $p_{1:(t-1)}$, is it still approximately optimal to always predict the conditional probability? Here, ``adaptive'' emphasizes that $x_t$ may depend on both $x_{1:(t-1)}$ and $p_{1:(t-1)}$; the formulation in Section~\ref{sec:preliminaries} is equivalent to that $x_t$ only depends on $x_{1:(t-1)}$.

Unfortunately, such a guarantee does not hold for $\SSCE$, and is unlikely to hold for any natural calibration measure: An adversary can ``force'' the forecaster to predict untruthfully by ``threatening'' to increase the variance of the subsequent bits.
Suppose that the adversary draws $x_1$ from $\Bern(1/2)$. If the forecaster predicts $p_1 = 0$, all the subsequent bits are zeros; otherwise, the adversary keeps producing independent samples from $\Bern(1/2)$. 
Clearly, the truthful forecaster predicts $p_t = 1/2$ at every step $t \in [T]$, and the resulting outcome sequence $x$ is uniform over $\{0, 1\}^T$. The resulting $\SSCE$ is then $\Theta(T^{1/2})$ in expectation. If the forecaster keeps predicting $p_t = 0$ instead, the expectation of $\SSCE(x, p)$ is only $O(1)$. Note that this impossibility holds for any calibration measure $\CM$ that satisfies
\[
    \Ex{x_1, \ldots, x_T \sim \Bern(1/2)}{\CM_T(x, \vec{1}_T / 2)} = \omega(1)
\]
and
\[
    \Ex{x_1 \sim \Bern(1/2)}{\CM_T(x_1 \circ \vec{0}_{T-1}, \vec{0}_T)} = O(1),
\]
where $\circ$ denotes concatenation.

However, this adversary is highly contrived and unrealistic for practical scenarios. We may thus identify reasonable restrictions on the adaptive adversary to sidestep this counterexample.

\section*{Acknowledgments}
This work is supported in part by the National Science Foundation under grants CCF-2145898 and the Graduate Research Fellowship Program under grant DGE 2146752, the Office of Naval Research under grant N00014-24-1-2159, an Alfred P. Sloan fellowship, a Schmidt Sciences AI2050 fellowship, and a Google Research Scholars award. Any opinions, findings, and conclusions or recommendations expressed in this material are those of the author(s) and do not necessarily reflect the views of the funding agencies.

\bibliographystyle{alpha}
\bibliography{main}

\newpage

\appendix

\section{Taxonomy of Existing Calibration Measures}\label{app:taxonomy}
In this section, we prove that the existing calibration measures in Table~\ref{table:summary} either have a large truthfulness gap or lack completeness.

\begin{table}
    \renewcommand{\arraystretch}{1.7}
    \centering

    \begin{tabular}{|c|c|c|c|}
    \hline
    Calibration Measure  & Complete?  & Sound?   & Truthful?\\ \hline
    Expected Calibration Error & $\checkmark$ & $\checkmark$ & $0$-$\Omega(T)$ gap \\ [.2em] \hline
    Maximum Swap Regret & $\checkmark$  & $\checkmark$ & $0$-$\Omega(T)$ gap \\ [.2em]  \hline 
    Smooth Calibration Error & $\checkmark$  & $\checkmark$ & $0$-$\Omega(\sqrt{T})$ gap \\ [.2em]  \hline 
    Distance from Calibration & $\checkmark$  & $\checkmark$ & $0$-$\Omega(\sqrt{T})$ gap \\ [.2em] \hline
    Interval Calibration Error & $\checkmark$  & $\checkmark$ & $0$-$\Omega(\sqrt{T})$ gap \\ [.2em]  \hline 
    Laplace-Kernel Calibration Error & $\checkmark$  & $\checkmark$ & $0$-$\Omega(\sqrt{T})$ gap \\ [.2em]  \hline 
    U-Calibration Error & $\checkmark$  & $\checkmark$ & $O(1)$-$\Omega(\sqrt{T})$ gap \\ [.2em]  \hline 
    Proper Scoring Rules & $\times$  & $\checkmark$ & $(1, 0)$-truthful \\ [.2em]  \hline 
    $\smCE + \sqrt{T}$ & $\times$  & $\checkmark$ & $(O(1), 0)$-truthful \\ [.2em]
    \hline \hline
    {\bf Subsampled Smooth Calibration Error} & $\checkmark$  & $\checkmark$ & $(O(1), 0)$-truthful \\ [.2em] \hline
    \end{tabular}
    \vspace{12pt}
    \caption{Evaluation of previous calibration measures along with $\SSCE$, in terms of completeness, soundness and truthfulness (Definitions \ref{def:complete-and-sound}~and~\ref{def:truthful}). Every calibration measure, except $\SSCE$, either lacks completeness or has a significant truthfulness gap.}  
    \label{table:summary}
\end{table}

In these proofs, the \emph{biases} induced by specific outcomes and predictions will be frequently used: With respect to outcomes $x \in \{0, 1\}^T$ and predictions $p \in [0, 1]^T$, the bias associated with value $\alpha \in [0, 1]$ is defined as
\[
    \Delta_{\alpha} \coloneq \sum_{t=1}^{T}(x_t - p_t) \cdot \1{p_t = \alpha}.
\]

\subsection{Existing Calibration Measures}
\paragraph{The expected calibration error.} A common calibration measure is the sum of $L_1$ errors of each level set, known as the $L_1$ calibration error or the \emph{Expected Calibration Error (ECE)}: On $x \in \{0, 1\}^T$ and $p \in [0, 1]^T$, the expected calibration error is defined as
\[
    \ECE(x, p) \coloneq \sum_{\alpha \in [0, 1]} \left| \sum_{t=1}^T (x_t - p_t) \cdot \mathbbm{1}[p_t = \alpha] \right| = \sum_{\alpha \in [0, 1]}|\Delta_\alpha|.
\]
Note that the summand $|\Delta_\alpha|$ is non-zero only if $\alpha \in \{p_1, p_2, \ldots, p_T\}$, so the summations above are essentially finite and well-defined.

\paragraph{The smooth calibration error.} The \emph{smooth calibration error}~\cite{KF08} is defined as
\[
    \smCE(x, p)
\coloneq \sup_{f \in \F}\sum_{t=1}^{T}f(p_t)\cdot(x_t - p_t)
=   \sup_{f \in \F} \sum_{\alpha \in [0, 1]}f(\alpha)\cdot\Delta_\alpha,
\]
where $\F$ is the family of $1$-Lipschitz functions from $[0, 1]$ to $[-1, 1]$. Again, since $\Delta_{\alpha} \ne 0$ holds only if $\alpha \in \{p_1, p_2, \ldots, p_T\}$, the summation above is finite and well-defined.

\paragraph{The distance from calibration.} The \emph{distance from calibration}, introduced by~\cite{BGHN23} and extended to the sequential setup by~\cite{QZ24}, is defined as:
    \[\caldist(x, p) \coloneq \min_{q \in \mathcal{C}(x)} \|p - q\|_1,\]
where
\[
    \mathcal{C}(x) \coloneqq \left\{p \in [0, 1]^T: \forall a \in [0, 1], \sum_{t=1}^T (x_t - p_t) \cdot \mathbbm{1}[p_t = \alpha] = 0\right\}
\]
is the set of predictions that are perfectly calibrated for $x$.

\paragraph{Interval calibration.} The \emph{interval calibration error} of \cite{BGHN23} relaxes the ECE to a binned version while penalizing the use of long intervals. Formally, an interval partition $\cI$ is a finite collection of intervals $\{I_1, I_2, \ldots, I_{|\cI|}\}$ that form a partition of $[0, 1]$. The interval calibration error is defined as:
\[
    \intCE(x, p)
\coloneqq \inf_{\cI}\left[\sum_{i=1}^{|\cI|}\left|\sum_{t=1}^{T}(x_t - p_t)\cdot\1{p_t \in I_i}\right| + \sum_{t=1}^{T}\sum_{i=1}^{|\cI|}\len(I_i)\cdot\1{p_t \in I_i}\right],
\]
where the infimum is over all interval partitions $\cI$, and $\len(I)$ denotes the length of interval $I$. Note that the first summation inside the infimum is analogous to the ECE, except that the biases associated with all values within the same interval are added together. The second summation gives the total lengths of the intervals into which the $T$ predictions fall.

\paragraph{Laplace-kernel calibration.} The \emph{Laplace-kernel calibration error}~\cite{BGHN23} is a special case of the \emph{maximum mean calibration error} introduced by~\cite{KSJ18}. It can be viewed as a variant of the smooth calibration error, in which the family $\F$ of Lipschitz functions is replaced by
\[
    \tilde\F \coloneq \left\{f: \R\to\R: \|f\|_2^2 + \|f'\|_2^2 \le 1\right\},
\]
where $\|\cdot\|_2$ denotes the $\ell_2$ norm of functions, and $f'$ is the derivative of $f$. Namely,
\[
    \LKCE(x, p) \coloneq \sup_{f \in \tilde\F}\sum_{t=1}^{T}f(p_t)\cdot(x_t - p_t).
\]

\paragraph{U-calibration.} The definition of the \emph{U-calibration error}~\cite{KLST23} is based on \emph{proper scoring rules}. A (bounded) scoring rule is a function $S:\{0, 1\}\times[0,1]\to [-1, 1]$. A scoring rule is proper if it holds for every $\alpha \in [0, 1]$ that
\[
    \alpha \in \argmin_{\beta \in [0, 1]}\Ex{x \sim \Bern(\alpha)}{S(x, \beta)}.
\]
In other words, when the outcome $x$ is drawn from follow $\Bern(\alpha)$, predicting the true parameter $\alpha$ minimizes the expected loss. The U-calibration error is then defined as
\[
    \UCal(x, p) \coloneq \sup_{S}\left[\sum_{t=1}^{T}S(x_t, p_t) - \inf_{\alpha \in [0, 1]}\sum_{t=1}^{T}S(x_t, \alpha)\right],
\]
where the supremum is over all proper scoring rules. Note that for each fixed $S$, the expression inside the supremum is exactly the \emph{external regret} of the forecaster, i.e., the excess loss compared to the best fixed prediction in hindsight.

\paragraph{Maximum swap regret.} A recent line of work~\cite{NRRX23,RS24,HW24} considers a strengthening of U-calibration, in which the external regret is replaced with the \emph{swap regret}. In particular, \cite{HW24} showed that the resulting calibration measure, termed the Maximum Swap Regret (MSR), is polynomially related to the ECE after scaling by a factor of $1/T$:
\[
    \left[\frac{\ECE(x, p)}{T}\right]^2 \le \frac{\MSR(x, p)}{T} \le \frac{2\ECE(x, p)}{T}.
\]

\subsection{$0$-$\Omega(T)$ Truthfulness Gaps}
We first prove the $0$-$\Omega(T)$ truthfulness gaps of the ECE and the MSR.

\begin{proposition}\label{prop:ECE-smCE-not-truthful}
    Both the expected calibration error and the maximum swap regret have a $0$-$\Omega(T)$ truthfulness gap.
\end{proposition}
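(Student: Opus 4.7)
The plan is to exhibit a single distribution that witnesses the $0$-$\Omega(T)$ gap for both measures simultaneously, building on the block-structured example from Section~\ref{sec:technical-overview}. Take $T$ divisible by $3$, partition $[T]$ into $T/3$ blocks of length $3$, and pick distinct values $p_1,\ldots,p_{T/3}\in[1/4,3/4]$. Block $k$ consists of an independent random bit $x_{3k-2}\sim\Bern(p_k)$ followed by the deterministic bits $x_{3k-1}=0$ and $x_{3k}=1$. The key design choice is that the deterministic bits in each block can be repurposed by a strategic forecaster to absorb the bias of the random first bit, while the truthful forecaster is forced to make a distinct prediction $p_k$ in each block that cannot be cancelled against any other level set.

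First, I would show that the truthful forecaster incurs $\ECE(x,p) \ge T/12$ pointwise. The truthful predictions within block $k$ are $(p_k, 0, 1)$, so the bias on the deterministic bits vanishes exactly. Because the $p_k$'s are distinct, each $p_k$ is its own singleton level set, contributing an absolute bias of $\min(p_k, 1-p_k) \ge 1/4$; summing over the $T/3$ blocks gives $\ECE \ge T/12$. Applying the relation $\MSR(x,p) \ge \ECE(x,p)^2/T$ from \cite{HW24} stated earlier in the paper, this immediately forces $\MSR(x,p) \ge T/144$ for the truthful forecaster as well.

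Next, I would construct a deterministic forecaster $\A^\star$ achieving zero penalty in every realization. At step $3k-2$ of each block, $\A^\star$ predicts $1/2$. If $x_{3k-2} = 1$ it predicts $(1/2, 1)$ on the two remaining steps; if $x_{3k-2} = 0$ it predicts $(0, 1/2)$. Within each block, the two predictions of $1/2$ are paired with outcomes whose sum is exactly $1$, so their contribution to the $\alpha=1/2$ level-set bias cancels; the predictions of $0$ and $1$ are always exact. Hence $\ECE(x, p) = 0$ pointwise, and by $\MSR \le 2\ECE$ the maximum swap regret is also zero on every realization.

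Combining these two steps gives $\OPT_\ECE(\D) = \OPT_\MSR(\D) = 0$ while the truthful forecaster pays $\Omega(T)$ on both measures, which is the claimed gap. The construction is clean once the block structure is chosen; the only subtle point to verify is that the strategic level sets are balanced \emph{pointwise} rather than merely in expectation, which is what drives the achievable penalty down to exactly zero rather than $o(T)$.
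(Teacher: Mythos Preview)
Your proof is correct and essentially identical to the paper's: the same block-structured distribution (with distinct truthful probabilities in $[1/4,3/4]$), the same strategic forecaster that pairs a deterministic bit with the random one to cancel the $1/2$-level-set bias, and the same use of the $\MSR$--$\ECE$ polynomial relation from \cite{HW24} to transfer the gap.
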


To establish Proposition~\ref{prop:ECE-smCE-not-truthful}, we follow a similar argument to the one in Section~\ref{sec:technical-overview}: We divide the time horizon into $T/3$ triples, each containing a random bit followed by a zero and a one. The truthful forecaster would predict the true probabilities for the $T/3$ random bits, which are designed to be close to $1/2$ but distinct. This leads to a linear ECE. On the other hand, a strategic forecaster may always predict $1/2$ on the random bit. Then, based on the realization of the random bit, they use the subsequent deterministic bits to offset the bias. The resulting predictions are perfectly calibrated, and thus have a zero ECE. Finally, the relation between the ECE and the MSR gives the same truthfulness gap for the MSR.

\begin{proof}[Proof of Proposition~\ref{prop:ECE-smCE-not-truthful}]
    Consider the distribution $\D$ defined as follows:
    \begin{itemize}
        \item Let $\eps_1, \eps_2, \ldots, \eps_{T/3}$ be distinct values in $[-1/4, 1/4]$ chosen arbitrarily.
        \item For each $k \in [T/3]$, set $(\pstar_{3k-2}, \pstar_{3k-1}, \pstar_{3k}) = (1/2 + \eps_k, 0, 1)$.
        \item $\D$ is the product distribution $\prod_{t=1}^{T}\Bern(\pstar_t)$.
    \end{itemize}
    
    By definition, the predictions made by the truthful forecaster are exactly given by $\pstar$. Then, for each $k \in [T/3]$ and $\alpha  = 1/2 + \eps_k \in [1/4, 3/4]$, we have $|\Delta_\alpha| = |x_{3k-2} - \alpha| \ge 1/4$. This shows $\err_{\ECE}(\D, \Atruthful(\D)) \ge (T/3)\cdot(1/4) = \Omega(T)$. By the inequality $\frac{\MSR(x, p)}{T} \ge \left[\frac{\ECE(x, p)}{T}\right]^2$, we also have $\err_{\MSR}(\D, \Atruthful(\D)) = \Omega(T)$.

    On the other hand, consider the following alternative forecaster for $\D$:
    \begin{itemize}
        \item For each $k \in [T / 3]$, predict $p_{3k-2} = 1/2$.
        \item If $x_{3k-2} = 0$, predict $p_{3k-1} = 0$ and $p_{3k} = 1/2$; otherwise, predict $p_{3k-1} = 1/2$ and $p_{3k} = 1$.
    \end{itemize}
    Clearly, for each $k \in [T/3]$, the steps $t \in \{3k-2, 3k-1, 3k\}$ have zero contribution to $\Delta_0$, $\Delta_1$ and $\Delta_{1/2}$. Therefore, this forecaster achieves a zero ECE on $\D$. This proves $\OPT_\ECE(\D) = 0$ and establishes the $0$-$\Omega(T)$ truthfulness gap for the ECE. Finally, the inequality $\frac{\MSR(x, p)}{T} \le \frac{2\ECE(x, p)}{T}$ implies that the same forecaster achieves a zero MSR, which establishes $\OPT_\MSR(\D) = 0$ and the $0$-$\Omega(T)$ truthfulness gap for the MSR.
\end{proof}

\subsection{$0$-$\Omega(\sqrt{T})$ Truthfulness Gaps}
Next, we prove the $0$-$\Omega(\sqrt{T})$ truthfulness gap for several calibration measures. The proof follows the argument outlined in Section~\ref{sec:technical-overview}.
\begin{proposition}
    The smooth calibration error, the distance from calibration, the interval calibration error, and the Laplace-kernel calibration error all have a $0$-$\Omega(\sqrt{T})$ truthfulness gap.
\end{proposition}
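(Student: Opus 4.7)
The plan is to reuse the block structure from the distribution in Proposition~\ref{prop:ECE-smCE-not-truthful}, but with unbiased random bits: partition $[T]$ into $T/3$ blocks of length three, so that block $k$ has outcomes $(b_k, 0, 1)$ with $b_k \sim \Bern(1/2)$ mutually independent. The truthful forecaster predicts $(1/2, 0, 1)$ in each block, while the alternative forecaster from Proposition~\ref{prop:ECE-smCE-not-truthful}---predict $1/2$ first, then use the two subsequent deterministic steps to cancel the bias---produces a perfectly calibrated sequence for every realization of $x$. This immediately yields $\OPT_{\smCE}(\D) = \OPT_{\caldist}(\D) = \OPT_{\LKCE}(\D) = 0$; and $\OPT_{\intCE}(\D) = 0$ follows because the alternative forecaster's predictions lie in $\{0, 1/2, 1\}$, so partitioning $[0,1]$ with three arbitrarily thin intervals around these points drives both terms of $\intCE$ to zero in the infimum.

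For the truthful forecaster, only the first step of each block contributes bias, all of it concentrated at the predicted value $1/2$. Setting $\Delta_{1/2} \coloneqq \sum_{k=1}^{T/3}(b_k - 1/2)$, the central limit theorem gives $\Ex{}{|\Delta_{1/2}|} = \Theta(\sqrt{T})$. My plan is to show that each of the four measures satisfies $\CM(x, p) \gtrsim |\Delta_{1/2}|$ pointwise, from which the desired $\Omega(\sqrt{T})$ lower bound follows by taking expectations.

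For $\smCE$, I would take the constant function $f \equiv \sgn(\Delta_{1/2})$, which is $1$-Lipschitz with range in $[-1, 1]$; since the second and third steps of each block contribute zero, $\smCE(x, p) \ge |\Delta_{1/2}|$. For $\caldist$, I would invoke the inequality $\caldist(x, p) \ge \smCE(x, p)/2$ from \cite[Lemma~5.4 and Theorem~7.3]{BGHN23}. For $\intCE$, I would case-analyze any candidate partition $\cI$: if the interval $I$ containing $1/2$ also contains one of $\{0, 1\}$, then $\len(I) \ge 1/2$ and $I$ is hit by $\ge T/3$ predictions, contributing $\Omega(T)$ from the length-penalty term; otherwise $I$ contains only predictions equal to $1/2$, so its calibration-error contribution is exactly $|\Delta_{1/2}|$. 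Either way $\intCE(x, p) \ge |\Delta_{1/2}|$. For $\LKCE$, I would use a Gaussian bump $f(p) = c \cdot e^{-(p - 1/2)^2}$ with $c = \Omega(1)$ chosen so that $\|f\|_2^2 + \|f'\|_2^2 \le 1$; since $f$ multiplies terms that vanish on the deterministic second and third steps, $\LKCE(x, p) \ge c \cdot |\Delta_{1/2}|$.

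The main obstacle I anticipate is the case analysis for $\intCE$, which must rule out amortization of the bias across a coarser interval containing both $1/2$ and one of $\{0, 1\}$; the length-penalty term handles this, but the two cases must be treated carefully. The remaining three measures admit direct test-function constructions, and the Sobolev-type normalization for the $\LKCE$ bump reduces to standard Gaussian integrals.
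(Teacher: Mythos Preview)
Your proposal is correct and follows essentially the same approach as the paper: the same block distribution, the same strategic forecaster achieving perfect calibration (hence zero penalty under all four measures, with $\intCE$ handled via degenerate intervals), and the same test-function/case-analysis arguments for the truthful lower bounds (constant function for $\smCE$, the $\smCE\le 2\,\caldist$ inequality from \cite{BGHN23}, the two-case analysis on whether the interval containing $1/2$ also contains an endpoint for $\intCE$, and a fixed bump function with $f(1/2)>0$ for $\LKCE$). The only cosmetic differences are your choice of a Gaussian bump centered at $1/2$ rather than at $0$, and your slightly sharper pointwise claim $\intCE(x,p)\ge|\Delta_{1/2}|$ (which does hold since in the ``long interval'' case the length penalty is at least $T/6\ge|\Delta_{1/2}|$).
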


\begin{proof}
The truthfulness gaps of the four calibration measures are witnessed by the same product distribution $\D = \prod_{t=1}^T\Bern(\pstar_t)$, where $(\pstar_{3k-2}, \pstar_{3k-1}, \pstar_{3k}) = (1/2, 0, 1)$ for every $k \in [T / 3]$.
    
\paragraph{Truthful forecaster has an $\Omega(\sqrt{T})$ penalty.} The truthful forecaster makes predictions that are identical to $\pstar$. As a result, we have $\Delta_0 = \Delta_1 = 0$, while $\Delta_{1/2}$ is distributed as the difference between a sample from $\Binomial(T/3, 1/2)$ and its mean $T/6$. It then follows that $|\Delta_{1/2}| \ge \Omega(\sqrt{T})$ holds with probability $\Omega(1)$. We will show that all four calibration measures evaluate to $\Omega(\sqrt{T})$ in expectation.

For the smooth calibration error, we have
\[
    \err_{\smCE}(\D, \Atruthful(\D))
=   \Ex{x \sim \D}{|\Delta_{1/2}|}
=   \Ex{X \sim \Binomial(T/3, 1/2)}{|X - T/6|}
=   \Omega(\sqrt{T}).
\]

For the distance from calibration, by \cite[Lemma 5.4 and Theorem 7.3]{BGHN23}, we have the inequality $\frac{1}{2}\smCE(x, p) \le \caldist(x, p)$ for any $x \in \{0, 1\}^T$ and $p \in [0, 1]^T$, so the truthful forecaster also gives $\err_{\caldist}(\D, \Atruthful(\D)) = \Omega(\sqrt{T})$.

For interval calibration, let $\cI$ be an arbitrary interval partition, and $I \in \cI$ be the interval that contains $1/2$. If $I$ contains either $0$ or $1$, we must have $\len(I) \ge 1/2$, and the term $\sum_{t=1}^{T}\sum_{i=1}^{|\cI|}\len(I_i)\cdot\1{p_t\in I_i}$ will be at least $2T/3\cdot 1/2 = \Omega(T)$. If $I$ does not contain $0$ or $1$, the summation $\sum_{t=1}^{T}(x_t - p_t)\cdot\1{p_t \in I}$ will be exactly $\Delta_{1/2}$, and the first term in the definition will be at least $|\Delta_{1/2}|$. It follows that $\intCE(x, p) \ge \Omega(\sqrt{T})$ with probability $\Omega(1)$, so we have the lower bound $\err_{\intCE}(\D, \Atruthful(\D)) = \Omega(\sqrt{T})$.

For Laplace-kernel calibration, let $f_0$ be an arbitrary function in $\tilde\F$ such that $f_0(1/2) > 0$, e.g., we can take $f_0(x) = ce^{-x^2}$ for a sufficiently small constant $c > 0$. Then, we have
\[
    \LKCE(x, p)
\ge \sup_{f \in \{f_0, -f_0\}}\sum_{\alpha \in [0, 1]}f(\alpha)\cdot\Delta_{\alpha}
\ge \Omega(1)\cdot|\Delta_{1/2}|.
\]
It follows that $\err_{\LKCE}(\D, \Atruthful(\D)) \ge \Omega(1)\cdot\Ex{}{|\Delta_{1/2}|} = \Omega(\sqrt{T})$.

\paragraph{Strategic forecaster has a zero penalty.} Consider the same strategic forecaster as in the proof of Proposition~\ref{prop:ECE-smCE-not-truthful}: For each $k \in [T/3]$,
\begin{itemize}
    \item Predict $p_{3k-2} = 1/2$.
    \item If $x_{3k-2} = 0$, predict $(p_{3k-1}, p_{3k}) = (0, 1/2)$; otherwise, predict $(p_{3k-1}, p_{3k}) = (1/2, 1)$.
\end{itemize}
Clearly, this guarantees that $\Delta_{\alpha} = 0$ holds for all $\alpha \in [0, 1]$. By definition, we have $\OPT_\smCE(\D) = \OPT_\caldist(\D) = 0$. It also easily follows that both $\intCE$ and $\LKCE$ evaluate to $0$. For $\intCE$, we consider the interval partition $\cI = \{\{0\}, (0, 1/2), \{1/2\}, (1/2, 1), \{1\}\}$, which witnesses $\intCE(x, p) = 0$. For $\LKCE$, the summation $\sum_{t=1}^{T}f(p_t)\cdot(x_t - p_t) = \sum_{\alpha \in [0, 1]}f(\alpha)\cdot\Delta_{\alpha}$ evaluates to $0$ for all $f \in \tilde\F$. This proves $\OPT_\intCE(\D) = \OPT_{\LKCE}(\D) = 0$.
\end{proof}

\subsection{$O(1)$-$\Omega(\sqrt{T})$ Truthfulness Gap of U-Calibration}
For the U-calibration error, we prove a slightly smaller truthfulness gap of $O(1)$-$\Omega(\sqrt{T})$, via a more involved analysis.

\begin{proposition}
    The U-calibration error has an $O(1)$-$\Omega(\sqrt{T})$ truthfulness gap.
\end{proposition}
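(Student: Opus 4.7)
The plan is to witness the gap with a distribution $\D$ that hybridizes the triple construction with iid Bernoulli bits. For a suitable constant $c > 0$, set $T' \coloneqq T - c\sqrt{T}$ and draw $x_1, \ldots, x_{T'}$ iid $\Bern(1/2)$ conditioned on the high-probability event $|m_{T'} - T'/2| < c\sqrt{T}/2$, where $m_{T'} \coloneqq \sum_{t \le T'} x_t$; then set the remaining $c\sqrt{T}$ bits deterministically given the history so that $\sum_{t=1}^{T} x_t = T/2$ exactly. The alternative forecaster will predict $p_t = 1/2$ at every step, while the truthful forecaster will predict $\pstar_t = 1/2$ on the iid phase and $\pstar_t \in \{0, 1\}$ on the correction phase.

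The alternative forecaster achieves $\UCal = 0$: every prediction lies in the single level set $\{1/2\}$ with empirical mean $m_T / T = 1/2$, so this forecaster is empirically perfectly calibrated. A Schervish-type computation then shows that $\sum_t S(x_t, 1/2) = (T/2)(S(0, 1/2) + S(1, 1/2))$ coincides exactly with $\min_\alpha \sum_t S(x_t, \alpha)$ for every proper scoring rule $S$ (the minimum of $\alpha \mapsto S(0, \alpha) + S(1, \alpha)$ is attained at $\alpha = 1/2$, as $(2\alpha - 1)\mu(\alpha)$ changes sign there in the Schervish representation). This gives $\OPT_{\UCal}(\D) \le O(1)$, with the $O(1)$ absorbing any lower-order corrections coming from the conditioning event.

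For the truthful forecaster, I will lower-bound $\UCal$ via the specific proper scoring rule $S_{1/2}(x, p) \coloneqq \tfrac{1}{2}\1{x = 1, p < 1/2} + \tfrac{1}{2}\1{x = 0, p \ge 1/2}$. The essential feature is that $S_{1/2}(0, 0) = S_{1/2}(1, 1) = 0$, so the correction phase contributes no loss and the truthful forecaster gains no advantage from its deterministic predictions there. A direct computation yields truthful loss $\tfrac{1}{2}(T' - m_{T'})$ and min-constant loss $T/4$ (because $m_T = T/2$ exactly forces both $\alpha < 1/2$ and $\alpha \ge 1/2$ constants to yield identical loss $T/4$). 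The regret is therefore $\tfrac{T' - m_{T'}}{2} - \tfrac{T}{4}$, which is a Gaussian-like random variable with mean $-c\sqrt{T}/4$ and standard deviation $\Theta(\sqrt{T})$; for any $c = \Theta(1)$ the expected positive part is $\Omega(\sqrt{T})$, yielding $\err_{\UCal}(\D, \Atruthful(\D)) = \Omega(\sqrt{T})$.

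The main obstacle is convincing the reader that no proper scoring rule can exploit the truthful forecaster's deterministic predictions on the correction phase to push $\UCal$ below $\Omega(\sqrt{T})$---a genuine worry because, in the bare $(1/2, 0, 1)$ triple construction of Section~\ref{sec:technical-overview}, the savings $S(0, \alpha) - S(0, 0)$ and $S(1, \alpha) - S(1, 1)$ on the deterministic steps would grow linearly in $T$ and swamp any $\sqrt{T}$ loss at the $p = 1/2$ level set. Our design (deterministic sum $m_T = T/2$ combined with the specific rule $S_{1/2}$) neutralizes this effect because $S_{1/2}$ makes the correction-phase loss zero while the constant $m_T = T/2$ pins the min-constant benchmark at the symmetric value $T/4$. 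The remaining technical bookkeeping---handling the conditioning event and turning the Gaussian-regret calculation into an unconditional expectation---is routine via Chernoff bounds and introduces only $o(1)$ corrections.
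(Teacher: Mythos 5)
Your proposal takes a genuinely different route from the paper's. The paper uses the product distribution $\D = \prod_{t=1}^T\Bern(\pstar_t)$ with $\pstar_t = 1/2$ for $t \le T/2$ and $\pstar_t = 1$ afterward, so the truthful forecaster predicts exactly $1/2$ on the first half (history-independent), making the $\Omega(\sqrt{T})$ side clean; the $O(1)$ side then needs the more involved strategic forecaster (predicting $5/8$, waiting for the bias to cross $\pm 1$, then switching to $1$) and the general proper-scoring-rule machinery of Lemma~\ref{lemma:proper-scoring-rules}. You instead engineer the distribution so that $m_T = T/2$ deterministically, which makes the $O(1)$ side trivial ($p \equiv 1/2$ is perfectly calibrated on a single balanced level set, so $\UCal = 0$); the Schervish-type argument you give for this is correct.

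The gap is on the truthful-forecaster side, and it is not a bookkeeping issue. Your $\D$ is not a product distribution: conditioning $x_{1:T'}$ on $|m_{T'}-T'/2| < c\sqrt{T}/2$ makes $\pstar_t = \Pr_\D[x_t = 1 \mid x_{1:(t-1)}]$ depend on the running partial sum, so $\pstar_t \ne 1/2$ in general on the ``iid phase.'' This is fatal for your loss computation because $S_{1/2}$ is a step function at $p = 1/2$: the truthful loss on step $t$ is $\tfrac{1}{2}\1{x_t \text{ is the less likely bit given } \pstar_t}$, which equals $\tfrac{1}{2}\1{x_t = 0}$ only when $\pstar_t \ge 1/2$, and otherwise flips to $\tfrac{1}{2}\1{x_t = 1}$. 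Since $S_{1/2}$ is proper, the truthful forecaster exploits exactly this side information and does strictly better per step than a forecaster pinned at $1/2$, by $\tfrac12|\pstar_t - 1/2|$ in expectation; under a bridge-like conditioning with a $\Theta(\sqrt{T})$ window, $\sum_t |\pstar_t - 1/2|$ is $\Theta(\sqrt{T})$, the same order as both the $-c\sqrt{T}/4$ mean shift you introduced and the target $\Omega(\sqrt{T})$ bound. So the correction works against your claimed lower bound and is not an $o(1)$ effect ``absorbable by routine Chernoff bounds''; without a careful analysis of the conditioned walk (and a verification that the constants still work out for some $c$), the truthful-side computation does not go through. This is precisely the difficulty the paper's product-distribution choice is designed to avoid, at the cost of a harder $O(1)$ side.
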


\begin{proof}
We use a slightly different construction: the product distribution $\D = \prod_{t=1}^{T}\Bern(\pstar_t)$ where $\pstar_t = 1/2$ for $t \le T / 2$ and $\pstar_t = 1$ for $t > T/2$.

\paragraph{Truthful forecaster has an $\Omega(\sqrt{T})$ penalty.} We first show that the truthful forecaster has an $\Omega(\sqrt{T})$ U-calibration error. Let random variable $X \coloneq \sum_{t=1}^{T/2}x_t$ denote the number of ones among the first $T/2$ random bits. Note that $X$ follows $\Binomial(T/2, 1/2)$. Consider the scoring rule defined as:
\[
    S(0, \alpha) = \sgn(\alpha - 1/2)
\quad\text{and}\quad
S(1, \alpha) = \sgn(1/2 - \alpha).
\]
Note that $S$ is proper, since for any $\alpha \in [0, 1]$, we have
\[
    \Ex{x \sim \Bern(\alpha)}{S(x, \beta)}
=   (1 - \alpha)\cdot\sgn(\beta - 1/2) + \alpha\cdot\sgn(1/2 - \beta)
=   (1 - 2\alpha)\cdot\sgn(\beta - 1/2),
\]
which is always minimized at $\beta = \alpha$.

The total loss (w.r.t.\ $S$) incurred by the forecaster is then
\begin{align*}
    \sum_{t=1}^{T}S(x_t, p_t)
&=  X \cdot S(1, 1/2) + (T/2 - X)\cdot S(0, 1/2) + T/2 \cdot S(1, 1)\\
&=  X \cdot 0 + (T/2 - X)\cdot 0 + T/2 \cdot (-1)\\
&=  -T/2.
\end{align*}
On the other hand, the total loss incurred by a fixed prediction $\beta \in [0, 1]$ is given by:
\begin{align*}
    \sum_{t=1}^{T}S(x_t, \beta)
&=  (T/2 + X)\cdot S(1, \beta) + (T/2 - X)\cdot S(0, \beta)\\
&=  (T/2 + X)\cdot \sgn(1/2 - \beta) + (T/2 - X)\cdot \sgn(\beta - 1/2)\\
&=  2X\cdot \sgn(1/2 - \beta).
\end{align*}
By choosing $\beta = 1$, we can obtain a total loss of $-2X$. Therefore, whenever $X \ge T/4$, we have
\[
    \UCal(x, p)
\ge -T/2 - (-2X)
=   2(X - T/4).
\]
When $X < T/4$, we always have $\UCal(x, p) \ge 0$, since the trivial scoring rule $S\equiv 0$ is proper. This shows that the truthful forecaster gives
\[
    \err_{\UCal}(\D, \Atruthful(\D))
\ge \Ex{X \sim \Binomial(T/2, 1/2)}{\max\{2(X - T/4), 0\}}
=   \Omega(\sqrt{T}).
\]

\paragraph{Strategic forecaster with an $O(1)$ penalty.} We consider an alternative forecaster $\A$, which is slightly more involved:
\begin{itemize}
    \item At every step $t \le T / 2$, predict $p_t = 5/8$.
    \item For $t = T/2 + 1, T/2 + 2, \ldots, T$, predict $p_t = 5/8$ until $|\Delta_{5/8}| \le 1$ at some time $t$. After that step, predict $p_t = 1$.
\end{itemize}

We first argue that the condition $|\Delta_{5/8}| \le 1$ must hold at some point. Recall that $X = \sum_{t=1}^{T/2}x_t$. By a Chernoff bound, $X$ falls into $[T/8, 5T/16]$ except with probability $e^{-\Omega(T)}$. Assuming this, we have $\Delta_{5/8} = X - (T/2)\cdot(5/8) \le 0$ at time $t = T/2$. Furthermore, if we hypothetically predict $5/8$ for each of the last $T/2$ steps, we would have
\[
    \Delta_{5/8}
=   (X + T/2) - T\cdot (5/8)
\ge T/8 + T/2 - 5T/8
=   0
\]
after all the $T$ steps. Since $\Delta_{5/8}$ changes by at most $1$ at each step, we must hit the condition $|\Delta_{5/8}| \le 1$ at some point.

Therefore, except with an $e^{-\Omega(T)}$ probability, we end up with $\Delta_{5/8} \in [-1, 1]$. Furthermore, we predict at most two different values: $5/8$ and $1$. For every fixed proper scoring rule $S: \{0, 1\}\times[0,1]\to[-1,1]$, we have
\begin{align*}
    &~\sum_{t=1}^{T}S(x_t, p_t) - \inf_{\beta \in [0, 1]}\sum_{t=1}^{T}S(x_t, \beta)\\
\le &~\sum_{\alpha \in \{5/8, 1\}}\left[\sum_{t=1}^{T}S(x_t, p_t)\cdot\1{p_t = \alpha} - \inf_{\beta \in [0, 1]}\sum_{t=1}^{T}S(x_t, \beta)\cdot\1{p_t = \alpha}\right].
\end{align*}
In the above, we divide the time horizon $[T]$ into two parts, based on whether $5/8$ or $1$ is predicted. The inequality holds since the right-hand side allows different values of $\beta$ for different parts. Clearly, the term corresponding to $\alpha = 1$ has zero contribution, since it reduces to $S(1, 1) - \inf_{\beta \in [0, 1]}S(1, \beta)$ times the number of times $1$ is predicted, which evaluates to $0$ by definition of proper scoring rules.

The term corresponding to $\alpha = 5/8$, on the other hand, is given by
\[
    N_0 \cdot S(0, 5/8) + N_1 \cdot S(1, 5/8) - \inf_{\beta \in [0, 1]}[N_0 \cdot S(0, \beta) + N_1 \cdot S(1, \beta)],
\]
where each $N_b$ denotes the number of steps on which $5/8$ is predicted and the outcome is $b \in \{0, 1\}$.
By definition of proper scoring rules, the infimum is achieved by $\beta^\star = \frac{N_1}{N_0 + N_1}$, and the above can be further simplified into
\[
    (N_0 + N_1)\cdot [S\left(\beta^\star, 5/8\right) - S\left(\beta^\star, \beta^\star\right)],
\]
where $S(\alpha, \beta) \coloneq \alpha \cdot S(1, \beta) + (1 - \alpha) \cdot S(0, \beta)$ is the linear extension of $S$ to $[0, 1]^2$.

Let $\ell(\alpha) \coloneq S(\alpha, \alpha)$ denote the \emph{uni-variate form} of $S$. The following is a standard fact about proper scoring rules (see e.g., \cite[Lemma 1 and Corollary 2]{KLST23}).
\begin{lemma}\label{lemma:proper-scoring-rules}
    For any proper scoring rule $S: [0, 1]^2 \to [-1, 1]$ and its uni-variate form $\ell: [0, 1] \to [-1, 1]$, it holds for all $\alpha, \beta \in [0, 1]$ that
    \begin{itemize}
        \item $S(\alpha, \beta) = \ell(\beta) + (\alpha - \beta)\cdot \ell'(\beta)$
        \item $|\ell'(\alpha)| \le 2$ for all $\alpha \in [0, 1]$.
    \end{itemize}
\end{lemma}

In particular, we have
\[
    |S(\beta^\star, 5/8) - S(5/8, 5/8)|
=   |\beta^\star - 5/8|\cdot\ell'(5/8)
\le 2|\beta^\star - 5/8|
\]
and
\[
    |S(5/8, 5/8) - S(\beta^\star, \beta^\star)|
=   |\ell(5/8) - \ell(\beta^\star)|
\le 2|\beta^\star - 5/8|.
\]
It follows that, assuming $X \in [T/8, 5T/16]$,
\[
    \UCal(x, p)
\le 4(N_0 + N_1)|\beta^\star - 5/8|
=   4\left|N_1 - \frac{5}{8}(N_0 + N_1)\right|
=   4|\Delta_{5/8}|
\le 4.
\]
When $X \in [T/8, 5T/16]$ does not hold (which happens with probability $e^{-\Omega(T)}$), the U-calibration error is trivially upper bounded by $O(T)$. It follows that
\[
    \OPT_{\UCal}(\D) \le \err_{\UCal}(\D, \A) \le 4 + O(T)\cdot e^{-\Omega(T)} = O(1).
\]
\end{proof}

\subsection{Lack of Completeness}
Every scoring rule $S: \{0, 1\}\times[0,1] \to [0, 1]$ induces a calibration measure $\CM^{(S)}(x, p) \coloneq \sum_{t=1}^{T}S(x_t, p_t)$.\footnote{Here, we consider scoring rules with co-domain $[0, 1]$, since our definition of calibration measures (in Section~\ref{sec:preliminaries}) requires them to be bounded between $0$ and $T$ on length-$T$ sequences.} When $S$ is proper, it is easy to show that the resulting $\CM^{(S)}$ is perfectly truthful, i.e., $(1, 0)$-truthful.

A drawback of such calibration measures is that they all lack completeness. Concretely, consider the squared loss $S(x, p) \coloneq (x - p)^2$. When the outcomes $x_1, x_2, \ldots, x_T$ are independent and uniformly random bits, the ``right'' prediction $p_t \equiv 1/2$ gives a total penalty of $T/4$, which is only a constant factor away from the maximum possible penalty of $T$. This violates the completeness property in Definition~\ref{def:complete-and-sound}. In contrast, as shown in Table~\ref{table:summary}, almost all the other calibration measures would evaluate to $\ll T$ in this case. Such an \emph{asymptotic gap} better justifies the intuition that $p_t \equiv 1/2$ is a much better prediction than, say, $p_t \equiv 0$.

More generally, unless the proper scoring rule $S$ is trivial, we may find $(x_0, p_0) \in \{0, 1\} \times (0, 1)$ such that $S(x_0, p_0) > 0$. Then, on a sequence of independent samples from $\Bern(p_0)$, we have
\begin{align*}
    \Ex{x_1, \ldots, x_T \sim \Bern(p_0)}{\CM^{(S)}_T(x, p_0 \cdot \vec{1})}
&\ge T \cdot S(x_0, p_0) \cdot \pr{X \sim \Bern(p_0)}{X = x_0}\\
&\ge T \cdot S(x_0, p_0) \cdot \min\{p_0, 1 - p_0\}
=   \Omega(T),
\end{align*}
which violates the completeness condition in Definition~\ref{def:complete-and-sound}.

We also note that $\smCE(x, p) + \sqrt{T}$ gives a calibration measure that is trivially truthful: Implicit in the proof of \cite[Theorem 3]{QZ24}, the truthful forecaster gives an $O(\sqrt{T})$ smooth calibration error on every distribution $\D \in \Delta(\{0, 1\}^T)$, so it immediately gives a constant approximation of the optimal error, which is at least $\sqrt{T}$. However, this metric is not complete in the sense of Definition~\ref{def:complete-and-sound}, since it evaluates to $\sqrt{T}$ instead of $0$ when $p = x$ (i.e., the predictions are binary and perfect). While $\SSCE$ also discourages the forecaster from ``over-optimizing'' the metric by introducing some additional noise, the subsampling procedure is arguably more ``organic'' and better-justified than adding a $\sqrt{T}$ term.

\section{Proof of Theorem~\ref{thm:product-distribution-informal}}\label{app:product-distribution}
We prove Theorem~\ref{thm:product-distribution-informal}, which we formally restate below.

\begin{theorem}[Formal version of Theorem~\ref{thm:product-distribution-informal}]\label{thm:product-distribution-caldist}
    For every $\pstar \in [0, 1]^T$, on the product distribution $\D = \prod_{t=1}^{T}\Bern(\pstar_t)$, there is a forecaster that achieves an $O(\log^{3/2}T)$ smooth calibration error and distance from calibration. Moreover, assuming that $\pstar \in [\delta, 1 - \delta]^T$ for a fixed constant $\delta \in (0, 1/2]$,  both $\OPT_\smCE(\D)$ and $\OPT_\caldist(\D)$ are $\Omega(\sqrt{T})$.
\end{theorem}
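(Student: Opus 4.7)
The plan splits the theorem into two independent clauses---an upper-bound existence result and a matching $\Omega(\sqrt T)$ lower bound on $\OPT$---that I would attack with orthogonal techniques.

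For the upper bound I would construct a forecaster via a discretize-and-rebalance scheme. Fix a grid $\mathcal{G} = \{j/M : 0 \le j \le M\}$ with $M = \Theta(\log T)$; at each step $t$, round $\pstar_t$ to a nearby grid point but allow an adaptive $\pm 1/M$ shift to keep each level-set bias $\Delta_g \coloneq \sum_{t : p_t = g}(x_t - g)$ controlled. The main technical step is proving that a suitable online rebalancing rule---leveraging that the entire $\pstar$ sequence is known in advance under a product law---keeps all $|\Delta_g|$ at $O(\polylog T)$ in expectation, via a bucket-wise application of the doubling-epoch Freedman decomposition from Lemma~\ref{lemma:processboundweak}. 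Summing Lipschitz contributions then gives $\smCE \le M \cdot O(\polylog T) + O(T/M) = O(\log^{3/2}T)$ after optimizing $M$; the caldist bound follows either from the same construction (by projecting $p$ onto a perfectly calibrated $q$ at $\ell_1$-cost $\sum_g |\Delta_g|$) or by specializing the adversarial-regret algorithm of~\cite{ACRS24} to the product setting.

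For the lower bound, the literal reading ``every forecaster incurs $\Omega(\sqrt T)$ expected smCE and caldist'' is in direct tension with the upper-bound clause, since that clause already produces a forecaster with $O(\log^{3/2}T) = o(\sqrt T)$ expected error. The consistent reading, matching the informal companion Theorem~\ref{thm:product-distribution-informal}, is that the truthful forecaster $\Atruthful(\D)$ specifically (not every forecaster) incurs $\Omega(\sqrt T)$. Under that reading the proof is immediate: since $p_t = \pstar_t$, the constant test function $f \equiv 1 \in \F$ gives $\smCE(x, \pstar) \ge \left|\sum_{t=1}^T(x_t - \pstar_t)\right|$; the summands are independent, zero-mean, with variance $\pstar_t(1-\pstar_t) \ge \delta(1-\delta)$, so a Khintchine (or Berry--Esseen) bound yields $\Ex{x \sim \D}{\left|\sum_t(x_t - \pstar_t)\right|} = \Omega(\sqrt{T\delta(1-\delta)}) = \Omega(\sqrt T)$ for fixed $\delta$. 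The caldist case follows from $\caldist(x,p) \ge \tfrac{1}{2}\smCE(x,p)$~\cite[Lemma~5.4 and Theorem~7.3]{BGHN23}.

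The hardest step is the upper-bound construction: driving each bucket bias to $\polylog T$ rather than the naive $\tilde O(\sqrt{T/M})$. I expect this to combine approachability-style adaptivity (reacting to realized biases at grid neighbors) with the look-ahead afforded by knowing the full $\pstar$ sequence in advance---paralleling the sidestepping technique of~\cite{QV21} and the $\polylog(T)$-calibration algorithm for IID outcomes in~\cite{QZ24}---fused with the doubling-epoch Freedman machinery from Section~\ref{sec:upper-bound}.
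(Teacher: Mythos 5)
Your lower-bound argument is correct and coincides exactly with the paper's: you rightly notice that the $\OPT$ in the theorem statement must be a typo for $\err(\D,\Atruthful(\D))$ (since the upper-bound clause would otherwise contradict it), and your Berry--Esseen/Khintchine bound on $\bigl|\sum_t (x_t - \pstar_t)\bigr|$ together with $\caldist \ge \tfrac{1}{2}\smCE$ is precisely the paper's proof.

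Your upper-bound sketch, however, departs from the paper and does not close. The paper's Algorithm~\ref{algo:product-distributions} does \emph{not} round $\pstar_t$ to a grid. It runs $O(\log T)$ halving rounds; within each round of remaining length $L$ with halves of length $H$, it first predicts a single ``average'' value $\alpha$ for the first half, observes the realized bias $\Deltafirst$, and then for the second half predicts $\beta \approx \musecond + \Deltafirst/H + \Theta(\sqrt{\log L / H})$ \emph{chosen adaptively to cancel the realized bias exactly} (with a small deliberate overshoot so that the cumulative bias crosses zero w.h.p.\ and the round can terminate with $|\Delta| \le 1$). This adaptivity---$\beta$ is a real number depending on $\Deltafirst$, not a grid point---is the mechanism that drives the per-round smCE to $O(\sqrt{\log T})$, and it also caps the number of distinct predicted values at $O(\log T)$, which is what lets the paper invoke \cite[Theorem~2]{QZ24} to translate the smCE bound into a $\caldist$ bound. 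The detailed case analysis (Type~1/2/3, the clipping to $[0,1]$ when $\mufirst$ is near $0$ or $1$) is what the proof actually spends its effort on.

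Your fixed-grid scheme has a concrete quantitative gap: if every prediction is a grid point within $1/M$ of $\pstar_t$, the \emph{systematic} bias $\sum_t (\pstar_t - p_t)$ can be of order $T/M$ in magnitude, and the test function $f \equiv \pm 1$ already makes $\smCE$ at least that large in expectation unless you cancel it. With $M = \Theta(\log T)$ this is $T/\log T$, nowhere near $\log^{3/2} T$; balancing $M \cdot \polylog T$ against $T/M$ gives $\sqrt{T \cdot \polylog T}$ at best, so the displayed equation ``$M \cdot O(\polylog T) + O(T/M) = O(\log^{3/2}T)$'' does not hold for any $M$. An adaptive $\pm 1/M$ shift is too coarse to fix this: you need to shift by a data-dependent real amount $\Deltafirst/H$, which is exactly what the paper's $\beta$ does and what your grid forbids. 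Also, \cite{ACRS24} only gives $O(\sqrt T)$ caldist against adversaries, so ``specializing'' it cannot yield the $\polylog T$ rate you need; the paper's route via the bound $\caldist \lesssim \smCE + |\{p_1,\dots,p_T\}|$ from \cite{QZ24} is the one that works, and it crucially exploits that the algorithm emits only $O(\log T)$ distinct values.
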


\subsection{The Upper Bound Part}
We start by proving the upper bound part of Theorem~\ref{thm:product-distribution-caldist} by designing a forecasting algorithm.

\paragraph{The forecasting algorithm.} Our proof is based on an algorithm of~\cite{QZ24} that works for the special case that $\pstar_t \equiv 1/2$. Their algorithm starts by predicting $1/2$ on the first $T/2$ steps. Depending on the realization of these $T/2$ random bits, it predicts a slightly biased value for the next $T/2$ steps, until the total bias (i.e., the partial sum of $x_t - p_t$) becomes close to $0$ at some point. If there is still time left, the algorithm repeats the above strategy for the remainder of the time horizon.

Roughly speaking, \cite{QZ24} shows that a $\polylog(T)$ distance from calibration can be achieved by designing a sub-routine with the following three properties:
\begin{itemize}
    \item \textbf{Small bias:} With high probability, the total bias is $O(1)$ in magnitude at some time $t \in [T/2, T]$.
    \item \textbf{Proximity of predictions:} During the sub-routine, the values being predicted lie in a short interval of length $\polylog(T)/\sqrt{T}$.
    \item \textbf{Sparsity of predictions:} During the sub-routine, only $O(1)$ different values are predicted.
\end{itemize}

To handle the general case that $\pstar \in [0, 1]^T$ is arbitrary, we design an alternative sub-routine, the behavior of which depends on whether the sequence $\pstar$ is ``sufficiently stationary'' in some sense. Let $\mufirst \coloneq \frac{1}{T/2}\sum_{t=1}^{T/2}\pstar_t$ and $\musecond \coloneqq \frac{1}{T/2}\sum_{t=T/2 + 1}^{T}\pstar_t$ be the averages of the first and the second halves of the sequence, respectively. Let $\mu = (\mufirst + \musecond) / 2$ be the overall average.
\begin{itemize}
    \item \textbf{Case 1:} $|\mufirst - \mu| > \polylog(T) / \sqrt{T}$. When $\mufirst$ and $\mu$ are far away, we predict $\alpha \coloneq \frac{\mufirst + \mu}{2}$ at every step. Without loss of generality, suppose that $\mufirst < \mu$, in which case we have
    \[
        \mufirst < \alpha < \mu,
    \]
    where both inequalities hold with a margin $> \polylog(T) / \sqrt{T}$. Then, with high probability the following two events happen: (1) The total bias is negative at time $T/2$, i.e., it holds that $\sum_{t=1}^{T/2}x_t < \alpha\cdot(T/2)$; (2) If we (hypothetically) predict the same value $\alpha$ for the second half, the bias will be positive in the end with high probability, i.e., $\sum_{t=1}^{T}x_t > \alpha\cdot T$. Therefore, with high probability, the bias must be close to $0$ at some point in $[T/2, T]$. In this case, this sub-routine has all the desired properties.
    \item \textbf{Case 2:} $|\mufirst - \mu| \le \polylog(T) / \sqrt{T}$. When $\mufirst$ and $\mu$ are close, we use a strategy that is more similar to the algorithm of~\cite{QZ24}. For the first half of the sequence, we predict $\alpha \coloneq \mufirst$. Let $\Deltafirst \coloneq \sum_{t=1}^{T/2}(x_t - \alpha)$ denote the total bias at time $T/2$. Say that $\Deltafirst \ge 0$. Then, we will predict $\beta \coloneq \musecond + \frac{\Deltafirst}{T/2} + \frac{\polylog(T)}{\sqrt{T}}$
    in the second half of the sequence. The value of $\beta$ is chosen such that we can offset the bias incurred in the first half (i.e., the $\Deltafirst/(T/2)$ term). We also introduce some additional bias (i.e., the $\polylog(T) / \sqrt{T}$ term), so that we can return to a zero bias with high probability. In this case, our sub-routine predicts two different values ($\alpha$ and $\beta$), and they only differ by $\polylog(T) / \sqrt{T}$ with high probability.
\end{itemize}

Formally, our algorithm is given in Algorithm~\ref{algo:product-distributions}. The actual algorithm is significantly more involved than the outline above. The complication is due to the constraint that all predictions must lie in $[0, 1]$, while our choice of $\beta$ in Case~2 above might be invalid. We circumvent this issue by noting that $\beta$ can be invalid only if $\mufirst$ is too close to either $0$ or $1$. In that case, we will choose a different value of $\alpha$ (i.e., the prediction for the first half), so that the sign of the bias at time $T/2$ is more predictable, and the resulting choice of $\beta$ will likely be valid.

\begin{algorithm2e}
    \caption{Forecaster for Product Distributions}
    \label{algo:product-distributions}
    \KwIn{Parameters $\pstar_1, \pstar_2, \ldots, \pstar_T$. Outcomes $x_1, x_2, \ldots, x_T$ observed sequentially.}
    \KwOut{Predictions $p_1, p_2, \ldots, p_T$.}
    $t \gets 0$;
    $r \gets 0$\;
    \While{$t < T$} {
        $r \gets r + 1$;
        $T^{(r)} \gets T - t$;
        $H^{(r)} \gets \lfloor T^{(r)} / 2\rfloor$\;
        \lIf{$T^{(r)} = 1$}{predict $p_T = 0$ and \textbf{break}}
        $\mufirst^{(r)} \gets \frac{1}{H^{(r)}}\sum_{s=t+1}^{t+H^{(r)}}\pstar_{s}$;
        $\musecond^{(r)} \gets \frac{1}{H^{(r)}}\sum_{s=t+H^{(r)}+1}^{t+2H^{(r)}}\pstar_{s}$\;
        $\mu^{(r)} \gets [\mufirst^{(r)} + \musecond^{(r)}] / 2$;
        $\Delta^{(r)} \gets 0$\;
        \uIf{$|\mufirst^{(r)} - \mu^{(r)}| \ge \sqrt{\frac{2\ln T^{(r)}}{H^{(r)}}}$\label{line:outer-if}} {
            $\alpha^{(r)} \gets [\mufirst^{(r)} + \mu^{(r)}] / 2$\;
            \For{$i = 1, 2, \ldots, 2H^{(r)}$} {
                $t \gets t + 1$;
                Predict $p_t \gets \alpha^{(r)}$\;
                Observe $x_t$;
                $\Delta^{(r)} \gets \Delta^{(r)} + (x_t - p_t)$\;
                \lIf{$i > H^{(r)}$ and $|\Delta^{(r)}| \le 1$}{\textbf{break}\label{line:first-break}}
            }
        } \uElse{
            \uIf{$\mufirst^{(r)} \le 1/2$} {
                \lIf{$\mufirst^{(r)} \ge 10\sqrt{\frac{\ln T^{(r)}}{H^{(r)}}}$} { 
                    $\alpha^{(r)} \gets \mufirst^{(r)}$\label{line:set-alpha-1}
                } \lElse{
                    $\alpha^{(r)} \gets \max\left\{\mufirst^{(r)} - \sqrt{\frac{2\mufirst^{(r)}\ln T^{(r)}}{H^{(r)}}}, 0\right\}$
                }
            } \uElse {
                \lIf{$1 - \mufirst^{(r)} \ge 10\sqrt{\frac{\ln T^{(r)}}{H^{(r)}}}$} { 
                    $\alpha^{(r)} \gets \mufirst^{(r)}$\label{line:set-alpha-2}
                } \lElse{
                    $\alpha^{(r)} \gets \min\left\{\mufirst^{(r)} + \sqrt{\frac{2[1-\mufirst^{(r)}]\ln T^{(r)}}{H^{(r)}}}, 1\right\}$
                }
            }
            \For{$i = 1, 2, \ldots, H^{(r)}$} {
                $t \gets t + 1$;
                Predict $p_t \gets \alpha^{(r)}$\;
                Observe $x_t$;
                $\Delta^{(r)} \gets \Delta^{(r)} + (x_t - p_t)$\;
            }
            \lIf{$\Delta^{(r)} \ge 0$\label{line:end-of-first-for-loop}} {
                $\beta^{(r)} \gets \min\left\{\musecond^{(r)} + \Delta^{(r)} / H^{(r)} + \sqrt{\frac{\ln T^{(r)}}{2H^{(r)}}}, 1\right\}$
            } \lElse{
                $\beta^{(r)} \gets \max\left\{\musecond^{(r)} + \Delta^{(r)} / H^{(r)} - \sqrt{\frac{\ln T^{(r)}}{2H^{(r)}}}, 0\right\}$
            }
            \For{$i = 1, 2, \ldots, H^{(r)}$} {
                $t \gets t + 1$;
                Predict $p_t \gets \beta^{(r)}$\;
                Observe $x_t$;
                $\Delta^{(r)} \gets \Delta^{(r)} + (x_t - p_t)$\;
                \lIf{$|\Delta^{(r)}| \le 1$}{\textbf{break}\label{line:second-break}}
            }
        }
    }
\end{algorithm2e}

\paragraph{The analysis.} We analyze Algorithm~\ref{algo:product-distributions} and prove the upper bound in Theorem~\ref{thm:product-distribution-caldist} in the following three steps:
\begin{itemize}
    \item First, we break the execution of Algorithm~\ref{algo:product-distributions} into different rounds of the while-loop, and show that each round brings a $\polylog(T)$ smooth calibration error in expectation.
    \item Then, using the simple observation that the smooth calibration error is sub-additive, we obtain an upper bound on the overall smooth calibration error.
    \item Finally, we use a relation between $\smCE(x, p)$ and $\caldist(x, p)$ when $p$ only contains a few different values (shown by~\cite{QZ24}) to translate the upper bound to one on the distance from calibration.
\end{itemize}
The first step is the most technical. We fix $r$ and condition on the value of $t$ (equivalently, the value of $T^{(r)}$) at the beginning of the $r$-th round. Note that the event $t = t_0$ is solely determined by the realization of $x_1, x_2, \ldots, x_{t_0}$, so conditioning on the value of $t$, the subsequent bits $x_{t+1}$ through $x_T$ are still distributed according to $\D$. Let sequences $x^{(r)}$ and $p^{(r)}$ denote the outcomes and predictions made in the $r$-th round. Note that the two sequences are of the same length, though the length might vary.

We classify the rounds into three different types as follows:
\begin{itemize}
    \item \textbf{Type 1:} The condition $|\mufirst^{(r)} - \mu^{(r)}| \ge \sqrt{\frac{2\ln T^{(r)}}{H^{(r)}}}$ holds in the if-statement on Line~\ref{line:outer-if}.
    \item \textbf{Type 2:} $|\mufirst^{(r)} - \mu^{(r)}| < \sqrt{\frac{2\ln T^{(r)}}{H^{(r)}}}$, and $\alpha^{(r)}$ is set to $\mufirst^{(r)}$ on either Line~\ref{line:set-alpha-1} or Line~\ref{line:set-alpha-2}.
    \item \textbf{Type 3:} $|\mufirst^{(r)} - \mu^{(r)}| < \sqrt{\frac{2\ln T^{(r)}}{H^{(r)}}}$, and $\alpha^{(r)}$ is not set to $\mufirst^{(r)}$.
\end{itemize}
Note that for fixed $\pstar$, the type of a round is deterministic given $r$ and $T^{(r)}$.

The three lemmas below give high-probability bounds on the smooth calibration error incurred during each round.

\begin{lemma}\label{lemma:polylog-smCE-type-1}
    Conditioning on the value of $T^{(r)}$, if the $r$-th round is Type~1, it holds with probability $1 - O(1/T^{(r)})$ that
    \[
        \smCE(x^{(r)}, p^{(r)}) \le 1.
    \]
\end{lemma}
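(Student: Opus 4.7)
The plan is to first observe a structural simplification: throughout a Type~1 round, the forecaster predicts the single constant value $\alpha^{(r)}$. Consequently, for every $f \in \F$,
\[
    \sum_{s} f(p^{(r)}_s)\cdot(x^{(r)}_s - p^{(r)}_s) = f(\alpha^{(r)})\cdot \Delta^{(r)},
\]
whose absolute value is at most $|\Delta^{(r)}|$ because $\|f\|_\infty \le 1$. Hence it suffices to show that the round terminates with $|\Delta^{(r)}| \le 1$ with probability at least $1 - O(1/T^{(r)})$.

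Next, I would set up the two ``signal'' estimates. Assume without loss of generality that $\mufirst^{(r)} < \musecond^{(r)}$ (the opposite case is symmetric), so the Type~1 hypothesis gives $\musecond^{(r)} - \mufirst^{(r)} \ge 2\sqrt{2\ln T^{(r)}/H^{(r)}}$. Since $\alpha^{(r)} = (\mufirst^{(r)} + \mu^{(r)})/2$ lies strictly between $\mufirst^{(r)}$ and $\musecond^{(r)}$, a direct computation shows $\alpha^{(r)} - \mufirst^{(r)} = (\musecond^{(r)} - \mufirst^{(r)})/4$ and $\musecond^{(r)} - \alpha^{(r)} = 3(\musecond^{(r)} - \mufirst^{(r)})/4$. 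Therefore, conditioning on the start of the round, the expectation of $\Delta^{(r)}$ after the first $H^{(r)}$ steps is at most $-\tfrac12\sqrt{H^{(r)}\ln T^{(r)}/2}$, and the expectation at the hypothetical endpoint $i = 2H^{(r)}$ (with $\alpha^{(r)}$ predicted all the way through) is at least $\sqrt{2H^{(r)}\ln T^{(r)}}$.

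The third step is a standard concentration argument: apply Hoeffding's inequality to each of the two partial sums and union bound. With probability at least $1 - O(1/T^{(r)})$, the bias $\Delta^{(r)}$ at time $t + H^{(r)}$ is strictly less than $-1$, while the bias at the hypothetical time $t + 2H^{(r)}$ (if $\alpha^{(r)}$ were predicted throughout) would be strictly greater than $+1$. Because each step changes $\Delta^{(r)}$ by an amount in $[-1,1]$, a discrete intermediate value argument then forces some index $i \in (H^{(r)}, 2H^{(r)}]$ at which $|\Delta^{(r)}| \le 1$; the inner loop breaks at the first such $i$ via Line~\ref{line:first-break}, giving $\smCE(x^{(r)}, p^{(r)}) \le |\Delta^{(r)}| \le 1$ as required.

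The only subtlety I anticipate is calibrating the constants in the Hoeffding bounds so that the ``first half ends well below $-1$'' and ``second half would end well above $+1$'' events simultaneously have failure probability $O(1/T^{(r)})$. This is routine because the Type~1 condition supplies a drift of order $\sqrt{H^{(r)}\ln T^{(r)}}$ against $\sqrt{H^{(r)}\ln T^{(r)}}$-scale Hoeffding fluctuations and a $\pm 1$ tolerance, leaving ample margin once one fixes the constant in front of the $\ln T^{(r)}$ term.
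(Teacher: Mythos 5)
Your proposal is correct and follows essentially the same route as the paper's proof: observe that a Type~1 round predicts only the constant $\alpha^{(r)}$ so $\smCE(x^{(r)},p^{(r)}) = |\Delta^{(r)}|$, compute the signed drift of the partial-sum process at $H^{(r)}$ and at the hypothetical endpoint $2H^{(r)}$ using $\alpha^{(r)} = (\mufirst^{(r)} + \mu^{(r)})/2$, apply additive concentration plus a union bound to get the sign change with probability $1 - O(1/T^{(r)})$, and then a discrete crossing argument (step size $\le 1$) guarantees the break condition $|\Delta^{(r)}| \le 1$ fires somewhere in $(H^{(r)}, 2H^{(r)}]$. The only cosmetic difference is that you target the $\pm 1$ thresholds (needing $\Delta^{(r)} < -1$ at $H^{(r)}$ and $> +1$ at $2H^{(r)}$) while the paper only needs the sign change through $0$, but your drift of order $\sqrt{H^{(r)}\ln T^{(r)}}$ easily absorbs the extra $\pm 1$ tolerance, as you note.
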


\begin{lemma}\label{lemma:polylog-smCE-type-2}
    Conditioning on the value of $T^{(r)}$, if the $r$-th round is Type~2, it holds with probability $1 - O(1/T^{(r)})$ that
    \[
        \smCE(x^{(r)}, p^{(r)}) \le 1 + O\left(\frac{1}{T^{(r)}}\right)\cdot\left[\Deltafirst^{(r)}\right]^2 + O\left(\sqrt{\frac{\log T^{(r)}}{T^{(r)}}}\right)\cdot\left|\Deltafirst^{(r)}\right|,
    \]
    where $\Deltafirst^{(r)}$ denotes the value of $\Delta^{(r)}$ at the end of the first for-loop (on Line~\ref{line:end-of-first-for-loop}).
\end{lemma}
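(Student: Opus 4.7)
The proof has three ingredients: a structural bound on $\smCE$ that exploits the fact only two distinct values are predicted in the round, a high-probability argument that the total round bias is $O(1)$, and an explicit bound on $|\alpha^{(r)} - \beta^{(r)}|$ using the Type~2 hypothesis.

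First I would observe that a Type~2 round predicts only the two values $\alpha^{(r)} = \mufirst^{(r)}$ (on every first-half step) and $\beta^{(r)}$ (on the second-half steps up to the break). Writing $\Delta_\alpha = \Deltafirst^{(r)}$ and $\Delta_\beta$ for the biases accumulated at these two prediction values, and applying the identity $f(\alpha)\Delta_\alpha + f(\beta)\Delta_\beta = f(\beta)(\Delta_\alpha + \Delta_\beta) + (f(\alpha) - f(\beta))\Delta_\alpha$ together with $\|f\|_\infty \le 1$ and the $1$-Lipschitz property of $f \in \cF$ yields the deterministic bound
\[
    \smCE(x^{(r)}, p^{(r)}) \le |\Delta_\alpha + \Delta_\beta| + |\alpha^{(r)} - \beta^{(r)}| \cdot |\Deltafirst^{(r)}|.
\]
For the first term, I would show that the second for-loop breaks on Line~\ref{line:second-break} with probability $1 - O(1/T^{(r)})$, forcing $|\Delta_\alpha + \Delta_\beta| \le 1$. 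The argument is symmetric in the sign of $\Deltafirst^{(r)}$; in the positive case, the $+\sqrt{\ln T^{(r)}/(2H^{(r)})}$ safety margin in the definition of $\beta^{(r)}$, combined with a Hoeffding bound on the sum of the second-half outcomes, ensures that had the entire $H^{(r)}$ iterations been executed, the running bias $\Delta^{(r)}$ would have ended up non-positive. Since $\Delta^{(r)}$ changes by at most $1$ per iteration and starts at $\Deltafirst^{(r)} \ge 0$, a discrete intermediate value argument implies that $|\Delta^{(r)}| \le 1$ is attained at some step, triggering the break.

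For the second term, the triangle inequality and the definition of $\beta^{(r)}$ give
\[
    |\alpha^{(r)} - \beta^{(r)}| \le |\mufirst^{(r)} - \musecond^{(r)}| + \frac{|\Deltafirst^{(r)}|}{H^{(r)}} + \sqrt{\frac{\ln T^{(r)}}{2H^{(r)}}},
\]
and the Type~2 hypothesis $|\mufirst^{(r)} - \mu^{(r)}| < \sqrt{2\ln T^{(r)}/H^{(r)}}$ gives $|\mufirst^{(r)} - \musecond^{(r)}| = 2|\mufirst^{(r)} - \mu^{(r)}| < 2\sqrt{2\ln T^{(r)}/H^{(r)}}$. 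Since $H^{(r)} = \Theta(T^{(r)})$, this yields $|\alpha^{(r)} - \beta^{(r)}| \le O(\sqrt{\log T^{(r)}/T^{(r)}}) + O(|\Deltafirst^{(r)}|/T^{(r)})$, and plugging everything into the first display produces the claimed bound. The main obstacle is the saturating $\min/\max$ in the definition of $\beta^{(r)}$: when $|\Deltafirst^{(r)}|$ is atypically large, $\beta^{(r)}$ may clip to $0$ or $1$ and the intermediate value argument can fail. I would handle this by absorbing the additional event $\{|\Deltafirst^{(r)}| \le \sqrt{(H^{(r)}\ln T^{(r)})/2}\}$---which holds with probability $1 - O(1/T^{(r)})$ by another Hoeffding bound applied to $\Deltafirst^{(r)}$---into the good event; on this event, the Type~2 guarantee that $\mufirst^{(r)}$ is a constant multiple of $\sqrt{\log T^{(r)}/H^{(r)}}$ away from the relevant boundary (enforced by the thresholds on Lines~\ref{line:set-alpha-1}~and~\ref{line:set-alpha-2}) is enough to verify that $\beta^{(r)}$ remains strictly inside $[0,1]$, so no clipping occurs and the above argument goes through.
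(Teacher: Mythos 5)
Your proposal matches the paper's proof essentially line for line: the same two-value decomposition yielding $\smCE \le |\Delta_\alpha + \Delta_\beta| + |\alpha - \beta|\cdot|\Delta_\alpha|$, the same Chernoff/Hoeffding argument for the break on Line~\ref{line:second-break} via a discrete intermediate value step, the same triangle-inequality bound on $|\alpha^{(r)} - \beta^{(r)}|$ via the Type~2 hypothesis, and the same treatment of the clipping issue by conditioning on $|\Deltafirst^{(r)}| \lesssim \sqrt{H^{(r)}\ln T^{(r)}}$ together with the margin enforced by Lines~\ref{line:set-alpha-1}~and~\ref{line:set-alpha-2}. This is correct and is the paper's argument.
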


\begin{lemma}\label{lemma:polylog-smCE-type-3}
    Conditioning on the value of $T^{(r)}$, if the $r$-th round is Type~3, it holds with probability $1 - O(1/T^{(r)})$ that
    \[
        \smCE(x^{(r)}, p^{(r)}) \le 1 + O\left(\frac{1}{T^{(r)}}\right)\cdot\left[\Deltafirst^{(r)}\right]^2 + O\left(\sqrt{\frac{\log T^{(r)}}{T^{(r)}}}\right)\cdot\left|\Deltafirst^{(r)}\right|,
    \]
    where $\Deltafirst^{(r)}$ denotes the value of $\Delta^{(r)}$ at the end of the first for-loop (on Line~\ref{line:end-of-first-for-loop}).
\end{lemma}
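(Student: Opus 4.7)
The plan is to leverage the fact that, within a single round, Algorithm~\ref{algo:product-distributions} predicts at most two distinct values $\alpha^{(r)}$ and $\beta^{(r)}$. Writing $\Delta_\alpha$ and $\Delta_\beta$ for the biases on the corresponding level sets (so $\Delta_\alpha = \Deltafirst^{(r)}$ and $\Delta_\alpha + \Delta_\beta$ equals the final value of $\Delta^{(r)}$), the fact that $\cF$ consists of $1$-Lipschitz functions bounded in $[-1,1]$ yields the deterministic bound
\[
\smCE(x^{(r)}, p^{(r)}) \le |\Delta_\alpha + \Delta_\beta| + |\alpha^{(r)} - \beta^{(r)}| \cdot \min(|\Delta_\alpha|,|\Delta_\beta|).
\]
Splitting $|\alpha^{(r)} - \beta^{(r)}| \le |\alpha^{(r)} - \mufirst^{(r)}| + |\mufirst^{(r)} - \musecond^{(r)}| + |\musecond^{(r)} - \beta^{(r)}|$ and invoking the Type~3 branch definitions bounds each term by $O(\sqrt{\log T^{(r)}/H^{(r)}}) + |\Deltafirst^{(r)}|/H^{(r)}$ deterministically. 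Combining with $\min(|\Delta_\alpha|,|\Delta_\beta|) \le |\Deltafirst^{(r)}|$, the second summand is at most $O(\sqrt{\log T^{(r)}/T^{(r)}})|\Deltafirst^{(r)}| + O(1/T^{(r)})[\Deltafirst^{(r)}]^2$. So the lemma reduces to showing that, with probability $1 - O(1/T^{(r)})$, the second for-loop hits the break on Line~\ref{line:second-break}, i.e., $|\Delta_\alpha + \Delta_\beta| \le 1$.

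This reduction is established through three high-probability events. First, $\Deltafirst^{(r)}$ has the sign predetermined by the choice of $\alpha^{(r)}$ (non-negative when $\mufirst^{(r)} \le 1/2$, non-positive otherwise): the offset $\sqrt{2\mufirst^{(r)} \ln T^{(r)}/H^{(r)}}$ was chosen so that the expected bias $H^{(r)}(\mufirst^{(r)} - \alpha^{(r)})$ dominates the Bernstein-type fluctuations of $\Deltafirst^{(r)}$, so the realized sign agrees with the expectation except with probability $O(1/T^{(r)})$. Second, conditioned on the first event, $\beta^{(r)}$ is not clipped to $\{0,1\}$: the Type~3 hypothesis gives $|\mufirst^{(r)} - \musecond^{(r)}| = O(\sqrt{\log T^{(r)}/H^{(r)}})$, the $\mufirst^{(r)}$ triggering Type~3 lies within $O(\sqrt{\log T^{(r)}/H^{(r)}})$ of the boundary $\{0,1\}$ it approaches, and a concentration bound on $\Deltafirst^{(r)}/H^{(r)}$ places the unclipped candidate strictly inside $[0,1]$. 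Third, conditioned on the previous two events, the second for-loop is a random walk starting at $\Deltafirst^{(r)}$ with per-step drift $\musecond^{(r)} - \beta^{(r)} = -\Deltafirst^{(r)}/H^{(r)} - \Theta(\sqrt{\log T^{(r)}/H^{(r)}})$ pushing the walk toward zero; Hoeffding's inequality ensures the uniform deviation of the walk from its mean is $O(\sqrt{H^{(r)}\log T^{(r)}})$, which is strictly smaller than the accumulated drift $|\Deltafirst^{(r)}| + \Theta(\sqrt{H^{(r)}\log T^{(r)}})$ over the full loop. Consequently, the walk must cross the interval $[-1,1]$, triggering the break.

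The main obstacle is obtaining the right tail exponent in the first event. When $\mufirst^{(r)}$ is as small as $\Theta(\sqrt{\log T^{(r)}/H^{(r)}})$, the Bernoulli variances $\pstar_t(1-\pstar_t)$ are themselves vanishing, so a Hoeffding-style bound is too loose to guarantee that the offset $\sqrt{2\mufirst^{(r)}\ln T^{(r)}/H^{(r)}}$ suffices to flip the sign of $\Deltafirst^{(r)}$; Bernstein or Bennett inequalities are required instead, and the algorithm's scaling of the offset by $\sqrt{\mufirst^{(r)}}$ rather than a constant is calibrated precisely to dominate the Bernstein deviation $\sqrt{\mufirst^{(r)}\ln T^{(r)}}$. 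A symmetric argument handles the $\mufirst^{(r)} > 1/2$ case by replacing $\mufirst^{(r)}$ with $1 - \mufirst^{(r)}$ throughout. Union-bounding the three events and combining with the deterministic decomposition above then yields the claimed bound $1 + O(1/T^{(r)})[\Deltafirst^{(r)}]^2 + O(\sqrt{\log T^{(r)}/T^{(r)}})|\Deltafirst^{(r)}|$ with probability $1 - O(1/T^{(r)})$.
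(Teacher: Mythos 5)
Your proposal is correct and follows essentially the same route as the paper's proof: define a ``success'' event (correct sign of $\Deltafirst^{(r)}$, no clipping of $\beta^{(r)}$, second for-loop hits the break), use the two-level-set smooth-calibration decomposition and the triangle inequality on $|\alpha^{(r)} - \beta^{(r)}|$ to get the deterministic bound under success, and bound the failure probability with a multiplicative Chernoff/Bernstein argument for the sign of $\Deltafirst^{(r)}$ and additive Chernoff for the rest. You correctly isolate the key technical subtlety—that an additive Hoeffding bound is too loose when $\mufirst^{(r)}$ is near the boundary, which is exactly why the offset scales as $\sqrt{\mufirst^{(r)}}$ and the paper invokes a multiplicative Chernoff bound; your minor refinements (replacing $|\Delta_\alpha|$ with $\min(|\Delta_\alpha|,|\Delta_\beta|)$, and arguing via a uniform-in-time Hoeffding bound rather than only the endpoint of the second loop) are valid but not necessary, since bounding the final value suffices to force the walk through $[-1,1]$ via the unit-step-size observation.
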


We first prove the upper bound part of Theorem~\ref{thm:product-distribution-caldist} using the lemmas above.

\begin{proof}[Proof of Theorem~\ref{thm:product-distribution-caldist}, the upper bound part]
    By Lemmas \ref{lemma:polylog-smCE-type-1}~through~\ref{lemma:polylog-smCE-type-3}, regardless of the type of the $r$-th round, it holds with probability $1 - O\left(1/T^{(r)}\right)$ that
    \[
        \smCE(x^{(r)}, p^{(r)}) \le 1 + O\left(\frac{1}{T^{(r)}}\right)\cdot\left[\Deltafirst^{(r)}\right]^2 + O\left(\sqrt{\frac{\log T^{(r)}}{T^{(r)}}}\right)\cdot\left|\Deltafirst^{(r)}\right|,
    \]
    where $\Deltafirst^{(r)}$ is regarded as $0$ if the $r$-th round is Type~1. We say that the round \emph{fails} if this upper bound on $\smCE$ does not hold. Conditioning on that $T^{(r)} = L$, we always have $\smCE(x^{(r)}, p^{(r)}) \le L$, since there are at most $L$ steps in the $r$-th round. Therefore, we have the inequality
    \[
        \smCE(x^{(r)}, p^{(r)})
    \le 1 + O\left(\frac{1}{L}\right)\cdot\left[\Deltafirst^{(r)}\right]^2 + O\left(\sqrt{\frac{\log L}{L}}\right)\cdot\left|\Deltafirst^{(r)}\right| + L\cdot\1{\text{round }r\text{ fails}}.
    \]

    We will upper bound the value of $\Ex{}{\smCE(x^{(r)}, p^{(r)})}$ by taking an expectation over both sides of the above. Therefore, we examine the expectation of $|\Deltafirst^{(r)}|$ and $[\Deltafirst^{(r)}]^2$ conditioning on $T^{(r)} = L$.

    When the round is Type~1, there is nothing to upper bound. For Type~2 rounds, $\Deltafirst^{(r)}$ is the difference between $\Xfirst = \sum_{s=t+1}^{t+H}x_{s}$ and its mean $\mufirst H$. Since the variance of $\Xfirst$ is $O(L)$, we have $\Ex{}{\left|\Deltafirst^{(r)}\right|} = O(\sqrt{L})$ and $\Ex{}{\left[\Deltafirst^{(r)}\right]^2} = O(L)$.
    
    Type~3 rounds are trickier. We assume that $\mufirst \le 1/2$; this is without loss of generality since the $\mufirst > 1/2$ case can be handled by a completely symmetric argument. Then, $\Deltafirst^{(r)}$ is the difference between $\Xfirst = \sum_{s=t+1}^{t+H}x_{s}$ and $\alpha H$, and $\alpha$ may differ from $\mufirst$ by at most $\sqrt{\frac{2\mufirst\ln L}{H}}$. This gives
    \begin{align*}
        \Ex{}{\left[\Deltafirst^{(r)}\right]^2}
    &=  \Ex{}{\left(\Xfirst - \mufirst H\right)^2} + \left(\mufirst H - \alpha H\right)^2\\
    &\le O(L) + O(\mufirst H\ln L).
    \end{align*}
    Now we use the fact that when $\mufirst \le 1/2$, the round is Type~3 only if $\mufirst < 10\sqrt{\frac{\ln T^{(r)}}{H}}$. This implies
    \[
        O(\mufirst H \ln L)
    \le O(\sqrt{L}\cdot\log^{3/2}L),
    \]
    which is dominated by the $O(L)$ term. It then follows from Jensen's inequality that
    \[
        \Ex{}{\left|\Deltafirst^{(r)}\right|}
    \le \sqrt{\Ex{}{\left[\Deltafirst^{(r)}\right]^2}}
    =   O(\sqrt{L}).
    \]

    \paragraph{Put everything together.} Therefore, we have the upper bound
    \begin{align*}
        &~\Ex{}{\smCE(x^{(r)}, p^{(r)})\Big|T^{(r)} = L}\\
    \le &~1 + \Ex{}{O\left(\frac{1}{L}\right)\cdot[\Deltafirst^{(r)}]^2 + O\left(\sqrt{\frac{\log L}{L}}\right)\cdot|\Deltafirst^{(r)}|\Bigg| T^{(r)} = L} + L\cdot\pr{}{\text{round }r\text{ fails}\Big|T^{(r)} = L}\\
    \le &~1 + O(\sqrt{\log L}) + L\cdot O(1/L)
    =   O(\sqrt{\log T}).
    \end{align*}
    The second step applies our earlier conclusion that $\Ex{}{|\Deltafirst^{(r)}|}
    =   O(\sqrt{L})$ and $\Ex{}{[\Deltafirst^{(r)}]^2}
    =   O(L)$ conditioning on $T^{(r)} = L$. Taking another expectation over the randomness in $T^{(r)}$ shows that $\smCE(x^{(r)}, p^{(r)}) = O(\sqrt{\log T})$ for every $r$. Note that we have
    \begin{align*}
        \smCE(x, p)
    &=  \sup_{f \in \F}\sum_{t=1}^{T}f(p_t)\cdot(x_t - p_t)\\
    &=  \sup_{f \in \F}\sum_r\sum_{t}f(p^{(r)}_t)\cdot(x^{(r)}_t - p^{(r)}_t)\\
    &\le\sum_r\sup_{f \in \F}\sum_{t}f(p^{(r)}_t)\cdot(x^{(r)}_t - p^{(r)}_t)\\
    &=  \sum_r\smCE(x^{(r)}, p^{(r)}).
    \end{align*}
    Furthermore, there are at most $O(\log T)$ rounds. It follows that $\Ex{}{\smCE(x, p)} = O(\log^{3/2}T)$.
    
    Finally, we note that in each round of the while-loop, the forecaster predicts at most $2$ different values (namely, $\alpha^{(r)}$ and $\beta^{(r)}$). Therefore, the predictions $p_1, p_2, \ldots, p_T$ contain at most $O(\log T)$ different values. By \cite[Theorem 2]{QZ24}, we conclude that
    \[
        \Ex{}{\caldist(x, p)}
    \le O(1)\cdot\Ex{}{\smCE(x, p) + |\{p_1, p_2, \ldots, p_T\}|}
    =   O(\log^{3/2}T).
    \]
\end{proof}

Now we prove Lemmas \ref{lemma:polylog-smCE-type-1}~through~\ref{lemma:polylog-smCE-type-3}. In the proofs below, we frequently drop the superscript $(r)$ since we only refer to the $r$-th round.

\begin{proof}[Proof of Lemma~\ref{lemma:polylog-smCE-type-1}]
    Recall that a Type~1 round is one in which the condition $|\mufirst - \mu| \ge \sqrt{\frac{2\ln T^{(r)}}{H}}$ holds in the if-statement. We say that the round \emph{succeeds}, if we exit the for-loop using the ``break'' statement on Line~\ref{line:first-break}, i.e., the condition $i > H$ and $|\Delta^{(r)}| \le 1$ holds at some point (including in the last iteration where $i = 2H$); otherwise, the round \emph{fails}.

    Note that only one value (namely, $\alpha^{(r)}$) is predicted within the round. Thus, if the round succeeds, we have
    \[
        \smCE(x^{(r)}, p^{(r)})
    =   \left|\Delta_{\alpha^{(r)}}\right|
    =   \left|\Delta^{(r)}\right| \le 1.
    \]

    It remains to control the probability for a Type~1 round to fail. Consider random variables
    \[
        \Xfirst \coloneq \sum_{s=t+1}^{t + H}x_{s} \quad \text{and} \quad X \coloneq \sum_{s=t+1}^{t + 2H}x_{s}.
    \]
    Note that both are sums of independent Bernoulli random variables, with $\Ex{}{\Xfirst} = \mufirst H$ and $\Ex{}{X} = 2\mu H$. Also note that since $\alpha = (\mufirst + \mu) / 2$, we have
    \[
        \left|\mufirst - \alpha\right| = \left|\mu - \alpha\right| = \frac{1}{2}\left|\mufirst - \mu\right| \ge \sqrt{\frac{\ln T^{(r)}}{2H}}.
    \]
    Without loss of generality, suppose that $\mufirst \le \mu$. By an additive Chernoff bound, we have
    \[
        \pr{}{\Xfirst / H \ge \alpha}
    \le \exp\left(-2H\left(\alpha - \mufirst\right)^2\right)
    \le \frac{1}{T^{(r)}}.
    \]
    and
    \[
        \pr{}{X / (2H) \le \alpha}
    \le \exp\left(-4H\left(\alpha - \mu\right)^2\right)
    \le \frac{1}{T^{(r)}}.
    \]
    Therefore, except with probability $O(1/T^{(r)})$, we have both $\Xfirst < \alpha H$ and $X > 2\alpha H$. In other words, if the for-loop (hypothetically) runs all the $2H$ iterations, we would have $\Delta^{(r)} < 0$ at the end of the $H$-th iteration, and $\Delta^{(r)} > 0$ at the end of the $2H$-th iteration. Since $\Delta^{(r)}$ changes by $|x_t - p_t| \le 1$ within each iteration, there must be an iteration $i \in \{H + 1, H + 2, \ldots, 2H\}$ at the end of which $\Delta^{(r)}$ falls into $[0, 1]$. By definition of Algorithm~\ref{algo:product-distributions}, we exit the for-loop at that time, and the $r$-th round succeeds.
\end{proof}

\begin{proof}[Proof of Lemma~\ref{lemma:polylog-smCE-type-2}]
    Recall that in a Type~2 round, we have $|\mufirst - \mu| < \sqrt{\frac{2\ln T^{(r)}}{H}}$ and $\alpha = \mufirst$.
    Without loss of generality, suppose that $\mufirst \le 1/2$; the case that $\mufirst > 1/2$ follows from a completely symmetric argument. We say that a Type~2 round \emph{succeeds} if both conditions below are satisfied:
    \begin{itemize}
        \item When $\beta$ is chosen, the clipping (i.e., taking the minimum with $1$ or taking the maximum with $0$) is not effective.
        \item We exit the second for-loop through the break statement on Line~\ref{line:second-break}.
    \end{itemize}
    Otherwise, the round \emph{fails}.

    Again, we first upper bound the smooth calibration error incurred within a successful round, and then control the probability for a round to fail. Since only $\alpha$ and $\beta$ are predicted in this round, we have
    \[
        \smCE(x^{(r)}, p^{(r)})
    =   \sup_{f \in \F}[f(\alpha)\cdot\Delta_\alpha + f(\beta) \cdot \Delta_\beta],
    \]
    where $\Delta_\alpha$ and $\Delta_\beta$ are defined with respect to $x^{(r)}$ and $p^{(r)}$. The above is further given by
    \begin{align*}
        &~\sup_{f \in \F}[f(\beta)\cdot(\Delta_\alpha + \Delta_\beta) + [f(\alpha) - f(\beta)]\cdot\Delta_\alpha]\\
    \le &~\sup_{f \in \F}[f(\beta)\cdot(\Delta_\alpha + \Delta_\beta)] + \sup_{f \in \F}[(f(\alpha) - f(\beta))\cdot\Delta_\alpha]\\
    =   &~|\Delta_\alpha + \Delta_\beta| + |\alpha - \beta|\cdot|\Delta_\alpha|.
    \end{align*}
    Note that $\Delta_\alpha + \Delta_\beta$ is exactly the value of $\Delta^{(r)}$ at the end of the second for-loop, while $\Delta_\alpha$ is its value after the first for-loop, i.e., $\Deltafirst^{(r)}$. Then, assuming that the round succeeds, we have $|\Delta_\alpha + \Delta_\beta| \le 1$ and
    \begin{align*}
        |\alpha - \beta|
    =   |\mufirst - \beta|
    &\le|\mufirst - \musecond| + |\musecond - \beta|\\
    &\le\sqrt{\frac{2\ln T^{(r)}}{H}} + \left(\frac{|\Deltafirst^{(r)}|}{H} + \sqrt{\frac{\ln T^{(r)}}{2H}}\right)\\
    &=  O\left(\frac{1}{T^{(r)}}\right)\cdot|\Deltafirst^{(r)}| + O\left(\sqrt{\frac{\log T^{(r)}}{T^{(r)}}}\right).
    \end{align*}
    Plugging the above back into the upper bound on $\smCE(x^{(r)}, p^{(r)})$ shows that in a successful Type~2 round,
    \[
        \smCE(x^{(r)}, p^{(r)})
    \le 1 + O\left(\frac{1}{T^{(r)}}\right)\cdot[\Deltafirst^{(r)}]^2 + O\left(\sqrt{\frac{\log T^{(r)}}{T^{(r)}}}\right)\cdot|\Deltafirst^{(r)}|.
    \]

    In the following, we show that a Type~2 round succeeds with probability $1 - O(1/T^{(r)})$. Let $\Xfirst \coloneq \sum_{s=t+1}^{t+H}x_{s}$. Note that $\Xfirst$ is a sum of $H$ independent Bernoulli random variables and $\Ex{}{\Xfirst} = \mufirst H$. Furthermore, we have $\Deltafirst^{(r)} = \Xfirst - \mufirst H$. By an additive Chernoff bound, we have
    \begin{equation}\label{eq:polylog-smCE-type-2-eq1}
        \pr{}{|\Deltafirst^{(r)}| \le \sqrt{\frac{H\ln T^{(r)}}{2}}}
    =   \pr{}{|\Xfirst / H - \mufirst| \le \sqrt{\frac{\ln T^{(r)}}{2H}}}
    \ge 1 - \frac{2}{T^{(r)}}.
    \end{equation}

    Recall that we need to argue that no clipping is applied when $\beta$ is chosen. We analyze the following two cases:
    \begin{itemize}
        \item \textbf{Case 1.} $\Deltafirst^{(r)} \ge 0$. In this case, we need to show that
        \[
            \musecond + \frac{\Deltafirst^{(r)}}{H} + \sqrt{\frac{\ln T^{(r)}}{2H}} \le 1.
        \]
        Recall that we assumed $\mufirst \le 1/2$ and $|\mufirst - \mu| < \sqrt{\frac{2\ln T^{(r)}}{H}}$. The latter further implies $|\mufirst - \musecond| = 2|\mufirst - \mu| < \sqrt{\frac{8\ln T^{(r)}}{H}}$. Thus, it suffices to prove that
        \[
            \sqrt{\frac{8\ln T^{(r)}}{H}} + \frac{|\Deltafirst^{(r)}|}{H} + \sqrt{\frac{\ln T^{(r)}}{2H}} \le \frac{1}{2}.
        \]
        When $|\Deltafirst^{(r)}| \le \sqrt{\frac{H\ln T^{(r)}}{2}}$ (i.e., the event in Equation~\eqref{eq:polylog-smCE-type-2-eq1} holds), the left-hand side above is $O\left(\sqrt{\frac{\log T^{(r)}}{T^{(r)}}}\right)$, which is below $1/2$ as long as $T^{(r)}$ exceeds some universal constant $T_0$. Therefore, the probability that a clipping is applied is at most $O(1/T^{(r)})$, where we absorb the constraint $T^{(r)} \ge T_0$ into the hidden constant in $O(\cdot)$.
        \item \textbf{Case 2.} $\Deltafirst^{(r)} < 0$. In this case, we need to show that
        \[
            \musecond + \frac{\Deltafirst^{(r)}}{H} - \sqrt{\frac{\ln T^{(r)}}{2H}} \ge 0.
        \]
        Recall that the definition of Type~2 rounds implies $\mufirst \ge 10\sqrt{\frac{\ln T^{(r)}}{H}}$. Thus, it suffices to prove that
        \[
            10\sqrt{\frac{\ln T^{(r)}}{H}} - \sqrt{\frac{2\ln T^{(r)}}{H}} - \frac{|\Deltafirst^{(r)}|}{H} - \sqrt{\frac{\ln T^{(r)}}{2H}} \ge 0.
        \]
        The above holds whenever the event in Equation~\eqref{eq:polylog-smCE-type-2-eq1} happens, since $10 - \sqrt{2} - 1/\sqrt{2} - 1/\sqrt{2} > 0$.
    \end{itemize}

    Finally, we argue that, with high probability, we exit the second for-loop via the break statement. Let $\Xsecond \coloneq \sum_{s=t+H+1}^{t+2H}x_{s}$ denote the total outcome in the second half. By symmetry, we only deal with the case that $\Deltafirst^{(r)} \ge 0$, where we have $\beta = \musecond + \Deltafirst^{(r)} / H + \sqrt{\frac{\ln T^{(r)}}{2H}}$. If the second for-loop runs all the $H$ iterations in full, at the end of it, the value of $\Delta^{(r)}$ will be given by
    \[
        \Deltafirst^{(r)} + \Xsecond - \beta H
    =   \Xsecond - \musecond H - \sqrt{\frac{H\ln T^{(r)}}{2}}.
    \]
    Note that the above is non-negative only if $\Xsecond \le \musecond H + \sqrt{\frac{H\ln T^{(r)}}{2}}$, which, by an additive Chernoff bound, holds with probability at most $1/T^{(r)}$. Therefore, with probability $1 - 1/T^{(r)}$, the value of $\Delta^{(r)}$ must fall into $[-1, 0]$ during the second for-loop, and we will take the break statement accordingly.
\end{proof}

\begin{proof}[Proof of Lemma~\ref{lemma:polylog-smCE-type-3}]
    Again, without loss of generality, suppose that $\mufirst \le 1/2$; the other case follows from a completely symmetric argument. In contrast to Type~1 and Type~2 rounds, we say that a Type~3 round \emph{succeeds} if all the following conditions hold simultaneously:
    \begin{itemize}
        \item $\Deltafirst^{(r)} \ge 0$, i.e., $\Delta^{(r)} \ge 0$ holds at the end of the first for-loop (on Line~\ref{line:end-of-first-for-loop}).
        \item When $\beta$ is chosen, the clipping (i.e., taking the minimum with $1$) is not effective.
        \item We exit the second for-loop through the break statement on Line~\ref{line:second-break}.
    \end{itemize}
    Otherwise, the round \emph{fails}. 

    By the same argument as in the proof of Lemma~\ref{lemma:polylog-smCE-type-2}, in a successful Type~3 round, we have
    \[
        \smCE(x^{(r)}, p^{(r)})
    \le 1 + O\left(\frac{1}{T^{(r)}}\right)\cdot[\Deltafirst^{(r)}]^2 + O\left(\sqrt{\frac{\log T^{(r)}}{T^{(r)}}}\right)\cdot|\Deltafirst^{(r)}|.
    \]
    The only change in the argument is the upper bound on $|\alpha - \beta|$, since $\alpha$ is no longer equal to $\mufirst$. Nevertheless, we still have
    \begin{align*}
        |\alpha - \beta|
    &\le|\alpha - \mufirst| + |\mufirst - \musecond| + |\musecond - \beta|\\
    &\le\sqrt{\frac{2\mufirst\ln T^{(r)}}{H}} + \sqrt{\frac{2\ln T^{(r)}}{H}} + \left(\frac{|\Deltafirst^{(r)}|}{H} + \sqrt{\frac{\ln T^{(r)}}{2H}}\right)\\
    &=  O\left(\frac{1}{T^{(r)}}\right)\cdot|\Deltafirst^{(r)}| + O\left(\sqrt{\frac{\log T^{(r)}}{T^{(r)}}}\right),
    \end{align*}
    and the rest of the analysis goes through.
    
    Thus, it remains to show that a Type~3 round succeeds with probability $1 - O(1/T^{(r)})$. Let $\Xfirst \coloneq \sum_{s=t+1}^{t+H}x_{s}$. Note that $\Xfirst$ is a sum of independent Bernoulli random variables and $\Ex{}{\Xfirst} = \mufirst H$. By a multiplicative Chernoff bound, for any $\delta \ge 0$, we have
    \[
        \pr{}{\Xfirst / H \le (1 - \delta)\mufirst} \le \exp\left(-\delta^2\mufirst H / 2\right).
    \]
    In particular, plugging $\delta = \sqrt{\frac{2\ln T^{(r)}}{\mufirst H}}$ into the above gives
    \[
        \pr{}{\Xfirst / H \le \mufirst - \sqrt{\frac{2\mufirst\ln T^{(r)}}{H}}} \le \frac{1}{T^{(r)}}.
    \]
    Recall that $\alpha$ is chosen as the maximum between $\mufirst - \sqrt{\frac{2\mufirst\ln T^{(r)}}{H}}$  and $0$. Thus, with probability at least $1 - 1 / T^{(r)}$, we have $\Xfirst / H \ge \alpha$, which is equivalent to $\Delta^{(r)} \ge 0$ at the end of the first for-loop.

    Then, we need to argue that when $\beta$ is chosen, we have $\musecond + \Delta^{(r)} / H + \sqrt{\frac{\ln T^{(r)}}{2H}} \le 1$. We will show the equivalent inequality:
    \[
        (\musecond - 1/2) + \Delta^{(r)} / H + \sqrt{\frac{\ln T^{(r)}}{2H}} \le 1/2.
    \]
    For the first term, we note that since $\mu = (\mufirst + \musecond) / 2$, the assumption $|\mufirst - \mu| < \sqrt{\frac{2\ln T^{(r)}}{H}}$ implies $|\mufirst - \musecond| = O\left(\sqrt{\frac{\log T^{(r)}}{T^{(r)}}}\right)$. With the additional assumption that $\mufirst \le 1/2$, we have
    \[
        \musecond - 1/2 \le (\mufirst - 1/2) + |\mufirst - \musecond|
    \le O\left(\sqrt{\frac{\log T^{(r)}}{T^{(r)}}}\right).
    \]
    For the second term, we note that, at the end of the first for-loop, $\Delta^{(r)} / H$ is given by
    \[
        \frac{\Xfirst - \alpha H}{H}
    =   \left(\frac{\Xfirst}{H} - \mufirst\right) + (\mufirst - \alpha).
    \]
    By an additive Chernoff bound, $\frac{\Xfirst}{H} - \mufirst \le O\left(\sqrt{\frac{\log T^{(r)}}{T^{(r)}}}\right)$ holds with probability $1 - O(1 / T^{(r)})$. By our choice of $\alpha$, $\mufirst - \alpha$ is always $O\left(\sqrt{\frac{\log T^{(r)}}{T^{(r)}}}\right)$.
    Finally, the last term is clearly $O\left(\sqrt{\frac{\log T^{(r)}}{T^{(r)}}}\right)$. Therefore, as long as $T^{(r)}$ is larger than a universal constant $T_0$, the total $O\left(\sqrt{\frac{\log T^{(r)}}{T^{(r)}}}\right)$ term is upper bounded by $1/2$. Again, we can absorb the condition $T^{(r)} \ge T_0$ into the big-$O$ notation, so the second condition (that $\beta$ is not clipped) is satisfied with probability $1 - O(1/T^{(r)})$.

    Finally, we argue that we exit the second for-loop via the break statement with high probability. Let $\Xsecond \coloneq \sum_{s=t+H+1}^{t+2H}x_{s}$ denote the total outcome in the second half. Recall that we have $\Delta^{(r)} \ge 0$ at the end of the first for-loop, and that $\beta = \musecond + \Delta^{(r)} / H + \sqrt{\frac{\ln T^{(r)}}{2H}}$. If the second for-loop runs all the $H$ iterations in full, at the end of it, the value of $\Delta^{(r)}$ will be given by
    \[
        \Deltafirst^{(r)} + \Xsecond - \beta H
    =   \Xsecond - \musecond H - \sqrt{\frac{H\ln T^{(r)}}{2}}.
    \]
    Note that the above is non-negative only if $\Xsecond \le \musecond H + \sqrt{\frac{H\ln T^{(r)}}{2}}$, which, by an additive Chernoff bound, holds with probability at most $1/T^{(r)}$. Therefore, with probability $1 - 1/T^{(r)}$, the value of $\Delta^{(r)}$ must fall into $[-1, 0]$ during the second for-loop, and we will take the break statement accordingly.
\end{proof}

\subsection{The Lower Bound Part}
We prove the lower bound part of Theorem~\ref{thm:product-distribution-caldist} via a central limit theorem.

\begin{proof}[Proof of Theorem~\ref{thm:product-distribution-caldist}, the lower bound part]
    On the product distribution $\D = \prod_{t=1}^{T}\Bern(\pstar_t)$, the truthful forecaster predicts $p_t = \pstar_t$ at every step $t$. Then, we have
    \[
        \Ex{x \sim \D}{\smCE(x, \pstar)}
    =   \Ex{x \sim \D}{\sup_{f \in \F}\sum_{t=1}^{T}f(\pstar_t)\cdot(x_t - \pstar_t)}
    \ge \Ex{x \sim \D}{\left|\sum_{t=1}^{T}(x_t - \pstar_t)\right|},
    \]
    where we use the fact that $\F$ contains the constant functions $f \equiv 1$ and $f \equiv -1$.

    Applying the Berry-Esseen theorem~\cite{Berry} to the random variable $X \coloneq \sum_{t=1}^{T}(x_t - \pstar_t)$ gives:
    \begin{align*}
        \forall x\in\R,\ 
        \left|
            \pr{}{X\le x\cdot\sigma_0}-\Phi(x)
        \right|\le C_0\cdot\sigma_0^{-1}\cdot\rho_0,
    \end{align*}
    where $\Phi(x)$ is CDF of the standard normal distribution, $C_0 \le 0.56$ is a universal constant, and
    \begin{align*}
        \sigma_0&=\sqrt{\sum_{t=1}^T \Ex{}{(x_t - \pstar_t)^2}}=
        \sqrt{\sum_{t=1}^T\pstar_t(1 - \pstar_t)}\ge\sqrt{T\delta(1-\delta)};\\
        \rho_0&=\max_{t\in [T]}\frac{\Ex{}{|x_t-\pstar_t|^3}}{\Ex{}{|x_t-\pstar_t|^2}}
        =\max_{t \in [T]}\frac{\pstar_t(1-\pstar_t)\cdot[(\pstar_t)^2 + (1 - \pstar_t)^2]}{\pstar_t(1-\pstar_t)}\le 1.
    \end{align*}
    In particular, taking $x = -1$ gives:
    \begin{align*}
        \pr{}{X \le -\sigma_0}
    \ge \Phi(-1) - C_0\cdot\sigma_0^{-1}\cdot\rho_0
    =   \Omega(1) - O(1 / \sqrt{T}).
    \end{align*}
    For all sufficiently large $T$, the $O(1/\sqrt{T})$ term is dominated by the $\Omega(1)$ term, in which case we have
    \[
        \Ex{x \sim \D}{\smCE(x, \pstar)}
    \ge \Ex{}{|X|}
    \ge \sigma_0\cdot\pr{}{X \le -\sigma_0}
    =   \Omega(\sqrt{T}).
    \]
    Finally, by the inequality $\frac{1}{2}\smCE(x, p) \le \caldist(x, p)$~\cite[Lemma 5.4 and Theorem 7.3]{BGHN23}, the distance from calibration incurred by the truthful forecaster is also $\Omega(\sqrt{T})$.
\end{proof}

\section{Supplemental Materials for Section~\ref{sec:upper-bound}}
\label{app:upper-bound}

\subsection{Auxillary Lemmas}

\paragraph{Covering Lipschitz functions.}
\lipschitzcover*
\begin{proof}
Fix a 1-Lipschitz function $f \in \cF$.
Let $f' \in \cF_\delta$ be the function in our covering where, for all $x \in [0, 1]_\delta$, $f'(x) = \floor{f(x)}_\delta$.
Note that $f'$ is unique because the elements of $\cF_\delta$ can be identified by their image on $[0, 1]_\delta$. For any $x \in [0, 1]$, we have
\begin{align*}
\abs{f(x) - f'(x)}
\leq & \; \abs{f(x) - f(\floor{x}_\delta)} + \abs{f'(x) - f'(\floor{x}_\delta)} + \abs{f(\floor{x}_\delta) - f'(\floor{x}_\delta)} \\
\leq & \; \abs{x - \floor{x}_\delta} + 0 + \abs{f(\floor{x}_\delta) - f'(\floor{x}_\delta)} \\
\leq & \; 2 \delta,
\end{align*}
where the first inequality is the triangle inequality, the second inequality uses the $1$-Lipschitzness of $f$ and that $f'(x) = f'(\floor{x}_\delta)$, and the third inequality uses the fact that $\abs{f(z) - f'(z)} \leq \delta$ and $\abs{z - \floor{z}_\delta} \leq \delta$ for all $z \in [0, 1]$.
\end{proof}

\paragraph{Bounding sums of $\gamma$.}
Consider the piecewise function $\gamma(x) \asseq \begin{cases} x, & x < 1, \\ \sqrt x, & x \geq 1. \end{cases}$
\cauchy*
\begin{proof}
    First, suppose that $\sum_{i=1}^n x_i \leq 1$.
    Then, $\gamma(\sum_{i=1}^n x_i) = \sum_{i=1}^n x_i$ and $x_i \leq 1$ for all $i \in [n]$.
    The claim is therefore equivalent to the trivial statement $\sum_{i=1}^n x_i \leq \sqrt n \cdot\sum_{i=1}^n x_i$.

    Now suppose that $\sum_{i=1}^n x_i > 1$. 
    The Cauchy-Schwarz inequality gives
    \[
    \sum_{i=1}^n \sqrt{x_i}
    \leq \sqrt n \sqrt{\sum_{i=1}^n x_i}.
    \]
    By our assumption that $\sum_{i=1}^n x_i > 1$, we have $\gamma(\sum_{i=1}^n x_i) = \sqrt{\sum_{i=1}^n x_i}$.
    We separately have that
    \[
    \sum_{i=1}^n \gamma(x_i) \leq \sum_{i=1}^n \sqrt {x_i},
    \]
    because $\gamma(x) = x \leq \sqrt x$ for $x \in [0, 1]$ and $\gamma(x) = \sqrt x = \sqrt x$ for $x \geq 1$.
    Thus,
        \[
         \sum_{i=1}^n \gamma(x_i) 
        \leq
    \sum_{i=1}^n \sqrt{x_i}
    \leq \sqrt n \sqrt{\sum_{i=1}^n x_i}
    = \sqrt n \cdot \gamma\left(\sum_{i=1}^n x_i\right).
    \]
\end{proof}

\subsection{Epochs of Doubling Realized Variance}\label{app:upper-epochs}

We have the following technical facts about the epochs $\tau$ defined in Definition~\ref{definition:epochs}.
\stoppingtime*
\begin{proof}
    Our definition of $\Var_t(\cI)$ clearly guarantees $\Var_T(\cI) \leq T$, which implies
    \[
    \Var_{T}(\cI) - \Var_{\tau_{\ceil{\log_2(T)} + 1}}(\cI) \le T < 2^{\ceil{\log_2(T)}+1},
    \]
    and therefore, $\tau_{\ceil{\log_2(T)} + 2} = \infty$.
\end{proof}
\begin{fact}
\label{fact:epochvariancebound}
    For every epoch $k \in [\ceil{\log_2(T)} + 2]$, the change in realized variance in epoch $k$ is deterministically upper bounded by $\Var_{\tau_k}(\cI) - \Var_{\tau_{k-1}}(\cI) < 2^{k-1} + 1$.
\end{fact}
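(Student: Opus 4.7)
My plan is a short case analysis on whether $\tau_k$ is finite, leveraging the elementary bound $p^\star_s(1-p^\star_s) \leq 1/4$ for every $s$, which means the per-step increment of the process $(\Var_t(\cI))$ is at most $1/4$.

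If $\tau_k = \infty$, then by the definition in \eqref{eq:epochs} no time step $t \in [\tau_{k-1}+1, T]$ achieves $\Var_t(\cI) - \Var_{\tau_{k-1}}(\cI) \geq 2^{k-1}$. Reading $\Var_{\tau_k}(\cI)$ under the truncation convention $\min\{T, \tau_k\}$ used in Section~\ref{sec:proof:randomwalk}, this immediately yields $\Var_{\tau_k}(\cI) - \Var_{\tau_{k-1}}(\cI) < 2^{k-1} < 2^{k-1} + 1$. This case is automatic from the definition.

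If $\tau_k < \infty$, then $\tau_k \geq \tau_{k-1} + 1$, and the minimality of $\tau_k$ in \eqref{eq:epochs} forces $\Var_{\tau_k - 1}(\cI) - \Var_{\tau_{k-1}}(\cI) < 2^{k-1}$; here I will treat the degenerate case $\tau_k = \tau_{k-1} + 1$ separately, observing that then $\Var_{\tau_k - 1}(\cI) - \Var_{\tau_{k-1}}(\cI) = 0$ trivially. Adding the single-step increment at time $\tau_k$, which equals $p^\star_{\tau_k}(1 - p^\star_{\tau_k}) \1{p^\star_{\tau_k} \in \cI} \leq 1/4$, I conclude $\Var_{\tau_k}(\cI) - \Var_{\tau_{k-1}}(\cI) < 2^{k-1} + 1/4 < 2^{k-1} + 1$, as claimed.

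I do not anticipate a genuine obstacle for this fact: it is a direct consequence of the defining stopping rule together with the universal $1/4$-bound on $p(1-p)$. The only mild subtlety is keeping the bookkeeping straight at the boundary $\tau_k = \tau_{k-1} + 1$ and under the $\min\{T, \tau_k\}$ truncation when $\tau_k = \infty$, both of which are handled by the observations above.
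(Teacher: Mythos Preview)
Your proposal is correct and follows essentially the same approach as the paper: use the minimality of $\tau_k$ to get $\Var_{\tau_k-1}(\cI) - \Var_{\tau_{k-1}}(\cI) < 2^{k-1}$, then add the bound on a single-step increment. The paper is slightly terser (it does not separate the $\tau_k=\infty$ or $\tau_k=\tau_{k-1}+1$ cases and uses the cruder bound $p^\star_t(1-p^\star_t)\le 1$ rather than your $1/4$), but the argument is the same.
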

\begin{proof}
    By definition, $\Var_{\tau_k - 1}(\cI) - \Var_{\tau_{k-1}}(\cI) < 2^{k-1}$.
    Because $\pstar_t \in [0, 1]$ for all $t \in [T]$, the realized variance increases by at most $\pstar_t(1 - \pstar_t) \le 1$ in each timestep, i.e. $\Var_{\tau_k}(\cI) - \Var_{\tau_k - 1}(\cI) \leq 1$.
    The fact follows by summing the two inequalities.
\end{proof}

\begin{fact}
\label{fact:varbound1}
    For any epoch $k \in [\ceil{\log_2(T)}+2]$, the probability that the $k$th epoch ends is at most
$$\pr{}{\tau_{k} < \infty} \leq \min \bset{\tfrac{\EE{\1{\Var_T(\cI) \geq 1} \cdot \sqrt{\Var_T(\cI)}}}{\sqrt{2^{k-1}}}, 1}.$$
\end{fact}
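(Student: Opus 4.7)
The plan is to reduce the claim to a one-line Markov-style inequality, after establishing a deterministic lower bound on the accumulated realized variance at the end of a completed epoch. Concretely, I would first argue by induction on $k$ that whenever $\tau_k < \infty$, we have $\Var_{\tau_k}(\cI) \geq 2^k - 1$. The base case $k = 0$ is immediate from $\Var_{\tau_0}(\cI) = \Var_0(\cI) = 0$, and the inductive step is a direct consequence of the defining condition in~\eqref{eq:epochs}: on the event $\tau_k < \infty$, we have $\Var_{\tau_k}(\cI) - \Var_{\tau_{k-1}}(\cI) \geq 2^{k-1}$, so adding the inductive hypothesis gives $\Var_{\tau_k}(\cI) \geq (2^{k-1} - 1) + 2^{k-1} = 2^k - 1$. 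In particular, for any $k \geq 1$, since $\Var_t(\cI)$ is monotonically non-decreasing in $t$, the event $\tau_k < \infty$ forces $\Var_T(\cI) \geq \Var_{\tau_k}(\cI) \geq 2^{k-1} \geq 1$.

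Second, I would translate this deterministic implication into a probabilistic bound. The lower bound above yields the pointwise inequality
\[
\1{\tau_k < \infty} \;\leq\; \1{\Var_T(\cI) \geq 1} \cdot \1{\Var_T(\cI) \geq 2^{k-1}} \;\leq\; \1{\Var_T(\cI) \geq 1} \cdot \frac{\sqrt{\Var_T(\cI)}}{\sqrt{2^{k-1}}},
\]
where the last step uses that $\sqrt{\Var_T(\cI)} \geq \sqrt{2^{k-1}}$ on the event $\{\Var_T(\cI) \geq 2^{k-1}\}$. Taking expectations on both sides gives $\pr{}{\tau_k < \infty} \leq \EE{\1{\Var_T(\cI) \geq 1} \cdot \sqrt{\Var_T(\cI)}} / \sqrt{2^{k-1}}$. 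Combining this with the trivial bound $\pr{}{\tau_k < \infty} \leq 1$ produces the minimum that appears in the statement. Since this is essentially a Markov inequality once the variance lower bound is in hand, there is no substantive obstacle to the argument; the only thing to be careful about is ensuring the inductive constant is tight enough that $2^k - 1 \geq 2^{k-1}$ for every $k \geq 1$, which is exactly what makes $k \geq 1$ the range of validity.
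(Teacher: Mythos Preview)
Your proposal is correct and follows essentially the same route as the paper: show that $\tau_k < \infty$ forces $\Var_T(\cI) \ge 2^{k-1}$ and $\Var_T(\cI) \ge 1$, then apply Markov's inequality to $\1{\Var_T(\cI)\ge 1}\sqrt{\Var_T(\cI)}$. The only cosmetic difference is that you establish the sharper bound $\Var_{\tau_k}(\cI)\ge 2^k-1$ by induction over all epochs, whereas the paper simply uses the last epoch's increment $\Var_T(\cI)\ge \Var_T(\cI)-\Var_{\tau_{k-1}}(\cI)\ge 2^{k-1}$ together with $\tau_1<\infty\Rightarrow\Var_T(\cI)\ge 1$; both yield the same conclusion.
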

\begin{proof}
The sequence of realized variances $\Var_1(\cI), \dots, \Var_T(\cI)$ is deterministically non-decreasing.
Thus, for every epoch $k \in [\ceil{\log_2(T)}+2]$,
\begin{align*}
\pr{}{\tau_{k} < \infty}
&\leq \pr{}{\Var_T(\cI) - \Var_{\tau_{k-1}} \geq 2^{k-1} \wedge \Var_T(\cI) \geq 1} \\
&\leq \pr{}{\1{\Var_T(\cI) \geq 1} \cdot \Var_T(\cI) \geq 2^{k-1}}\\
&=  \pr{}{\1{\Var_T(\cI) \geq 1} \cdot \sqrt{\Var_T(\cI)} \geq \sqrt{2^{k-1}}},
\end{align*}
with the second inequality following as $\tau_1 < \infty$ implies $\Var_T(\cI) \geq 1$.
We can next invoke Markov's inequality $\pr{}{X \geq a} \leq \frac{\EE{X}}{a}$ with $a = \sqrt{2^{k-1}}$ and $X = \1{\Var_T(\cI) \ge 1}\cdot\sqrt{\Var_T(\cI)}$ to recover
\[\pr{}{\1{\Var_T(\cI) \geq 1} \cdot \sqrt{\Var_T} \geq \sqrt{2^{k-1}}} \leq \min \bset{\tfrac{\EE{\1{\Var_T(\cI) \geq 1} \cdot \sqrt{\Var_T(\cI)}}}{\sqrt{2^{k-1}}}, 1}.\]
\end{proof}

\varboundtwo*
\begin{proof}
We will prove the deterministic inequality
\[
\sum_{k=2}^{\ceil{\log_2(T)}+2} \sqrt{2^{k-1}}\cdot  \1{\tau_{k-1} < \infty} \leq (2 \sqrt{2} + 2) \1{\Var_T(\cI) \geq 1} \cdot \sqrt{\Var_T(\cI)};
\]
the fact follows from taking an expectation on both sides.

Let $K = \max \bset{k \mid \tau_k < \infty}$ be the number of completed epochs. When $K = 0$, we have $\Var_T(\cI) < 1$, and both sides of the above reduce to $0$. Now, suppose that $K \geq 1$, in which case we have $\Var_T(\cI) \ge 1$. 
By telescoping, we can lower bound the realized variance by
$$\Var_T(\cI) \geq \sum_{k=1}^{K} 2^{k-1} \geq 2^{K-1}.$$
Separately, by definition of $K$, we have 
\begin{align*}
\sum_{k=2}^{\ceil{\log_2(T)}+2} \sqrt{2^{k-1}} \cdot \1{\tau_{k-1} < \infty} 
& = \sum_{k=2}^{K+1} \sqrt{2^{k-1}} \\
& = \1{\Var_T(\cI) \geq 1} \sum_{k=2}^{K+1} \sqrt{2^{k-1}}  \\
&
\leq \1{\Var_T(\cI) \geq 1} \sqrt{2^{K}} (\sqrt{2} + 2)
\end{align*}
with the second equality following from $\Var_T(\cI) \ge 1$.
Combining the previous two inequalities gives the desired inequality $$\sum_{k=2}^{\ceil{\log_2(T)}+2} \sqrt{2^{k-1}}\cdot \1{\tau_{k-1} < \infty} \leq (2 \sqrt{2} + 2) \1{\Var_T(\cI) \geq 1} \cdot \sqrt{\Var_T(\cI)}.$$
\end{proof}

\subsection{Random Walks with Early Stopping}
We now prove a technical result that the magnitude of a random walk with random variance can be upper bounded by its (expected) standard deviation. Compared to Lemma~\ref{lemma:processboundweak}, the lemma below gives a bound that depends on $\gamma(\Var_T(\cI))$ (rather than the square root), and avoids the extra $\log|\cG|\cdot\log T$ term. While the leading factor ($\approx\log|\cG|$) is larger than the one in Lemma~\ref{lemma:processboundweak} ($\approx \sqrt{\log|\cG|}$), we will only apply the bound to the case that $|\cG| = O(1)$, where the difference between the two is only a constant factor.
\processbound*
\begin{proof}
Let us decompose the horizon into epochs of doubling realized variance with respect to the subset $\cI$ as per Definition~\ref{definition:epochs}. 
Using $\tau$ as defined in \eqref{eq:epochs}, we will write $I_k \asseq [\tau_{k-1} + 1:\min \bset{T, \tau_k}]$ to denote the time steps composing epoch $k$ and write $K \asseq \max \bset{k \mid \tau_k < \infty}$ to denote the number of completed epochs.

We will separately handle the contributions of epoch 1 and those of later epochs.

\paragraph{First epoch.}
Since $y_t \in \bset{0, 1}$ and $\|f\|_{\infty} \le 1$ holds for every $f \in \cG$, we can bound the expected contribution from the first epoch as follows:
\begin{align}
\label{eq:bb1}
\EEs{x \sim \cD}{\max_{f \in \cG} {\sum_{t=1}^{\tau_1} y_t\cdot f(p_t^\star)\cdot (x_t - p_t^\star)\cdot\1{\pstar_t \in \cI}}}
& \leq \EEs{x \sim \cD}{{\sum_{t=1}^{\tau_1}\abs{x_t - p_t^\star}\cdot\1{\pstar_t \in \cI}}}.
\end{align}
Note that for any $p \in [0, 1]$ and Bernoulli random variable $x \sim \Bern(p)$,
\begin{align*}
\EE{\abs{x-p}} = \pr{}{x = 0} \cdot \abs{0 - p} + \pr{}{x = 1} \cdot \abs{1 - p}
= 2p(1-p).
\end{align*}
It thus follows that the process $(X_t)_{0 \le t \le T}$ where
\[
X_t \coloneq \sum_{s=1}^{t}\left[ \abs{x_{s} - \pstar_{s}} - 2 \pstar_{s}(1-\pstar_{s})\right]\cdot\1{\pstar_{s} \in \cI}
\]
is a martingale, as conditioning on any realization of $x_{1:(t-1)}$, we have
\[
\EEsc{x \sim \D}{\abs{x_t - \pstar_t} - 2 \pstar_t(1-\pstar_t)}{x_{1:t-1}} = \Ex{x \sim \Bern(\pstar_t)}{\abs{x - \pstar_t}}  - 2 \pstar_t(1-\pstar_t) = 0.
\]
By the optional stopping theorem, we have
\[
\EEs{x \sim \cD}{\sum_{t=1}^{\tau_1} \left[\abs{x_t - \pstar_t} - 2 \pstar_t(1-\pstar_t)\right]\cdot\1{\pstar_t \in \cI}}
=   \Ex{}{X_{\tau_1}}
= 0.
\]
Plugging this identity into \eqref{eq:bb1} gives
\begin{equation}
\begin{split}\label{eq:partone}
~\EEs{x \sim \cD}{\max_{f \in \cG} {\sum_{t=1}^{\tau_1} y_t\cdot f(p_t^\star)\cdot (x_t - p_t^\star)\cdot\1{\pstar_t \in \cI}}}  
\le &~\EEs{x \sim \cD}{\sum_{t=1}^{\tau_1} 2 p_t^\star (1-p_t^\star)\cdot\1{\pstar_t \in \cI}}  \\
=   &~2\Ex{x \sim \D}{\Var_{\tau_1}(\cI)}\\
\le   &~2 \EEs{x \sim \cD}{\min\bset{2, \Var_T(\cI)}}\\
\le &~4\EEs{x \sim \cD}{\min\bset{1, \Var_T(\cI)}},
\end{split}
\end{equation}
where the third step applies Fact~\ref{fact:epochvariancebound} with $k = 1$.

\paragraph{Later epochs.}
Applying a triangle inequality and the law of total expectation gives
\begin{equation}\begin{split}
\label{eq:variancebound}
&~\EEs{x \sim \cD}{ \max_{f \in \cG} \sum_{t=\tau_1+1}^{T} y_t\cdot f(p_t^\star) \cdot (x_t - p_t^\star)\cdot\1{\pstar_t \in \cI}}  \\
=   &~\EEs{x \sim \cD}{\max_{f \in \cG} { \sum_{k=2}^{K+1} \sum_{t \in I_k}y_t\cdot f(p_t^\star) \cdot (x_t - p_t^\star)\cdot\1{\pstar_t \in \cI}}}  \\
\leq &~\EEs{x \sim \cD}{ \sum_{k=2}^{K+1}\max_{f \in \cG} { \sum_{t \in I_k} y_t\cdot f(p_t^\star) \cdot (x_t - p_t^\star)\cdot\1{\pstar_t \in \cI}}}   \\
=   &~\sum_{k=2}^{\ceil{\log_2(T)} + 2} \EEs{x \sim \cD}{\max_{f \in \cG} \sum_{t=1}^T y_t\cdot f(p_t^\star) \cdot (x_t - p_t^\star) \cdot \1{\pstar_t \in \cI \wedge t \in I_k}}   \\
=   &~\sum_{k=2}^{\ceil{\log_2(T)} + 2} \pr{}{\tau_{k-1} < \infty} \cdot \EEsc{x \sim \cD}{\max_{f \in \cG} M_T^{k,f}}{\tau_{k-1} < \infty},
\end{split}\end{equation}
where we define the process
\begin{align}
\label{eq:subprocess}
M_T^{k, f} \asseq \sum_{t=1}^T y_t \cdot f(p_t^\star) \cdot  (x_t - p_t^\star) \cdot \1{\pstar_t \in \cI \wedge t \in I_k}.
\end{align}
In the above, the third step uses Fact~\ref{fact:stoppingtime}, namely that $\tau_{\ceil{\log_2(T)}+2} = \infty$.
We can use Freedman's inequality to obtain a maximal inequality for each of these $M_T^{k,f}$ processes.
\smallbound*

Applying Fact~\ref{fact:smallbound} to each of the martingales $M_T^{k,f}$ in \eqref{eq:variancebound} gives us
\begin{align}
\label{eq:prevarbound}
&~\EEs{x \sim \cD}{\max_{f \in \cG} {\sum_{t=\tau_1+1}^{T} y_t \cdot f(p_t^\star) \cdot (x_t - p_t^\star) \cdot \1{p_t^\star \in \cI}}} \nonumber \\
\leq &~\sum_{k=2}^{\ceil{\log_2(T)} + 2} \pr{}{\tau_{k-1} < \infty} (\sqrt{2^{k-1}} (2 + 2\sqrt{\log\setsize{\cG}}) + 2 + 2 \log\setsize{\cG}).
\end{align}
To upper bound the right-hand side above, we
use
Fact~\ref{fact:varbound2} to bound 
\[
\sum_{k=2}^{\ceil{\log_2(T)} + 2} \pr{}{\tau_{k-1} < \infty} \sqrt{2^{k-1}}
\leq \EE{\1{\Var_T(\cI) \geq 1} \cdot \sqrt{\Var_T(\cI)}}(2 + 2 \sqrt{2}),
\]
and use Fact~\ref{fact:varbound1} to bound
\begin{align*}
\sum_{k=2}^{\ceil{\log_2(T)} + 2} \pr{}{\tau_{k-1} < \infty}
&\leq \sum_{k=2}^{\ceil{\log_2(T)} + 2} \min \bset{1, \frac{\EE{\1{\Var_T(\cI) \geq 1} \cdot \sqrt{\Var_T(\cI)}}}{2^{(k-2)/2}}} \\
&\leq \EE{\1{\Var_T(\cI) \geq 1} \cdot \sqrt{\Var_T(\cI)}} (2 + \sqrt{2}).
\end{align*}
Plugging these into \eqref{eq:prevarbound} gives
\begin{align}
\label{eq:part2}
&~\EE{\max_{f \in \cG} \left[M_{T}(f, y, \cI) - M_{\tau_1}(f, y, \cI)\right]}  \nonumber \\
\leq &~\EE{\1{\Var_T(\cI) \geq 1} \cdot \sqrt{\Var_T(\cI)}} (2 + 2 \sqrt{2})(2 + 2 \sqrt{\log\setsize{\cG}} + \sqrt 2 + \sqrt 2 {\log\setsize{\cG}} )  \nonumber \\
\leq &~\EE{\1{\Var_T(\cI) \geq 1}  \cdot \sqrt{\Var_T(\cI)}} 8 \bigl(5 + \log\setsize{\cG}\bigl).
\end{align}

\paragraph{Combine bounds.}
Combining \eqref{eq:partone} and \eqref{eq:part2} and recalling the definition of $\gamma$, we recover our main claim
\begin{align*}
    &~\EEs{x \sim \cD}{\max_{f \in \cG} M_T(f, y, \cI)}  \\
\leq&~
    \EEs{x \sim \cD}{\max_{f \in \cG} M_{\tau_1}(f, y, \cI)}+ \EEs{x \sim \cD}{\max_{f \in \cG} M_{T}(f, y, \cI) - M_{\tau_1}(f, y, \cI)} \\
\leq &~4 \EEs{x \sim \cD}{\min\bset{1, \Var_T(\cI)}} + 8 \bigl(5 + \log\setsize{\cG}\bigl)\cdot\EE{\1{\Var_T(\cI) \geq 1} \sqrt{\Var_T(\cI)}}\\
\leq &~4 \EEs{x \sim \cD}{\gamma(\Var_T(\cI))} + 8 \bigl(5 + \log\setsize{\cG}\bigl)\cdot\EEs{x \sim \cD}{\gamma(\Var_T(\cI))}\\
\le   &~8\cdot(6 + \log|\cG|)\cdot\Ex{x \sim \D}{\gamma(\Var_T(\cI))}.
\end{align*}
The second step above applies Inequalities \eqref{eq:partone}~and~\eqref{eq:part2}. The third holds since $\min\{1, x\} \le \gamma(x)$ and $\1{x \ge 1}\sqrt{x} \le \gamma(x)$ hold for all $x \ge 0$.
\end{proof}

Let us recall Freedman's inequality~\cite{Freedman75}.
\begin{lemma}
\label{lemma:freedman}
    Consider a martingale $M_n \sim \cD$ with filtration $(\Fil_t)$ where $\abs{M_t - M_{t-1}} \leq 1$ for all $t \in [n]$. For all 
    $x,y>0$, we have the following high-probability bound on $M_n$:
    \[
    \pr{}{\exists n, M_n\ge x\ \wedge\ \sum_{t=1}^n \EEc{(M_t - M_{t-1})^2}{\Fil_{t-1}} \le y}\le\exp\left(-\frac{x^2}{2(x+y)}\right).
    \]
\end{lemma}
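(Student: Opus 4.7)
The plan is to follow the standard exponential martingale approach, which is the canonical route to all Bennett/Bernstein/Freedman-type tail bounds. For a parameter $\lambda > 0$ that we will optimize later, define the process
\[
L_t(\lambda) \coloneq \exp\!\left(\lambda M_t - \phi(\lambda) V_t\right), \qquad V_t \coloneq \sum_{s=1}^{t}\E\!\left[(M_s - M_{s-1})^2 \mid \Fil_{s-1}\right],
\]
where $\phi(\lambda) \coloneq e^\lambda - 1 - \lambda$. I would first establish that $L_t(\lambda)$ is a non-negative supermartingale with respect to $(\Fil_t)$. The key ingredient is the pointwise inequality $e^{\lambda z} \le 1 + \lambda z + \phi(\lambda) z^2$, valid for all $|z| \le 1$ and all $\lambda \ge 0$ (one checks it by verifying that $g(z) \coloneq 1 + \lambda z + \phi(\lambda) z^2 - e^{\lambda z}$ is concave with $g(\pm 1) \ge 0$). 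Applying this to the bounded martingale difference $D_t \coloneq M_t - M_{t-1}$ and taking conditional expectations (using $\E[D_t \mid \Fil_{t-1}] = 0$) gives
\[
\E\!\left[e^{\lambda D_t} \mid \Fil_{t-1}\right] \le 1 + \phi(\lambda)\E[D_t^2 \mid \Fil_{t-1}] \le \exp\!\left(\phi(\lambda)\E[D_t^2 \mid \Fil_{t-1}]\right),
\]
which rearranges to $\E[L_t(\lambda) \mid \Fil_{t-1}] \le L_{t-1}(\lambda)$.

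Next, I would apply optional stopping to convert the supermartingale property into the desired tail bound. Let $\tau \coloneq \inf\{t \ge 1 : M_t \ge x \text{ and } V_t \le y\}$. On the event $\{\tau < \infty\}$ we have $L_\tau(\lambda) \ge \exp(\lambda x - \phi(\lambda) y)$ by monotonicity of $V_t$. Applying the optional stopping theorem to $L_{t\wedge \tau}(\lambda)$ (which is uniformly integrable because $V_t \le y$ on the stopped event and thus $L_{t\wedge\tau}(\lambda)$ is bounded in $L^1$), together with a Markov inequality, yields
\[
\Pr\!\left[\exists n : M_n \ge x \wedge V_n \le y\right] = \Pr[\tau < \infty] \le \exp\!\left(-\lambda x + \phi(\lambda) y\right).
\]

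The last step is to optimize over $\lambda > 0$. Choosing $\lambda = \log(1 + x/y)$ gives the Bennett-style bound $\exp(-y\cdot h(x/y))$ with $h(u) \coloneq (1+u)\log(1+u) - u$. To recover the cleaner form stated in the lemma, I would invoke the elementary inequality $h(u) \ge \tfrac{u^2}{2(1+u)}$ for $u \ge 0$ (a quick calculus exercise: the function $f(u) \coloneq h(u) - \tfrac{u^2}{2(1+u)}$ satisfies $f(0) = 0$ and $f'(u) \ge 0$ on $u \ge 0$). Substituting $u = x/y$ gives $y \cdot h(x/y) \ge \tfrac{x^2}{2(x+y)}$, completing the proof.

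The only delicate part is really the optional stopping argument---one needs to justify the exchange of limits when $\tau$ may be infinite. This is handled by applying the bound first to the stopped time $\tau \wedge n$ and then taking $n \to \infty$, using that $L_{\tau \wedge n}(\lambda) \cdot \mathbf{1}\{\tau \le n\}$ is bounded below by $\exp(\lambda x - \phi(\lambda) y) \cdot \mathbf{1}\{\tau \le n\}$ and monotone convergence. The pointwise MGF inequality and the optimization over $\lambda$ are both fairly mechanical.
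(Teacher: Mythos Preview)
The paper does not actually prove this lemma: it is simply stated as ``Let us recall Freedman's inequality~\cite{Freedman75}'' and cited as a classical result. Your proposal supplies the standard exponential supermartingale proof (the original approach of Freedman), and the overall structure---build the supermartingale $L_t(\lambda)$, apply optional stopping to get the Bennett-type bound, then relax to the Bernstein form via $h(u)\ge u^2/(2(1+u))$---is correct and complete.

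One small wrinkle: your parenthetical justification of the pointwise inequality $e^{\lambda z}\le 1+\lambda z+\phi(\lambda)z^2$ for $|z|\le 1$ is not quite right. The function $g(z)=1+\lambda z+\phi(\lambda)z^2-e^{\lambda z}$ has $g''(z)=2\phi(\lambda)-\lambda^2 e^{\lambda z}$, which is \emph{positive} near $z=-1$ once $\lambda$ is moderately large, so $g$ is not concave on all of $[-1,1]$. The inequality itself is still true; a clean way to see it is to compare power series: $e^{\lambda z}-1-\lambda z=\sum_{k\ge 2}\lambda^k z^k/k!$ and $\phi(\lambda)z^2=\sum_{k\ge 2}\lambda^k z^2/k!$, and for $|z|\le 1$ one has $z^k\le z^2$ termwise (check even and odd $k$ separately). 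This fixes the only gap in your argument.
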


We now prove Fact~\ref{fact:smallbound}.

\smallbound*
\begin{proof}
Fix any $f \in \cG$.
For $t \notin I_k$, we have trivially that for any $x_{1:t-1} \in \bset{0,1}^{t-1}$:
\[\EEsc{x' \sim \cD}{y_t \cdot f(p_t^\star) \cdot (x'_t - p_t^\star) \cdot \1{t \in I_k \wedge p_t^\star \in \cI}}{x'_{1:t-1} = x_{1:t-1}} = 0.\]
For $t \in I_k$, since $\1{\tau_{k-1} < \infty}$ and $p_t^\star$ is measurable by $x_{1:t-1}$, we again have that
\begin{align*}
    &~\EEsc{x' \sim \cD}{y_t \cdot f(p_t^\star) \cdot  (x_t - p_t^\star) \cdot \1{t \in I_k \wedge p_t^\star \in \cI}}{x'_{1:t-1}=x_{1:t-1}} \\
=   &~y_t \cdot f(p_t^\star) \cdot \1{t \in I_k \wedge p_t^* \in \cI} \cdot \big(\EEsc{x' \sim \cD}{x'_t}{x'_{1:t-1}=x_{1:t-1}} -  p_t^\star \big) \\
=   &~0.
\end{align*}
This means that $M_T^{k,f}$ is a martingale even conditioned on the event that $\tau_{k-1} < \infty$.

Our construction of epoch $k$ in \eqref{eq:epochs} further guarantees that the realized variance of $M_T^{k,f}$ is deterministically upper bounded by $\Var_{\tau_k}(\cI) - \Var_{\tau_{k-1}}(\cI) \leq 2^{k-1} + 1$  (Fact~\ref{fact:epochvariancebound}).
Thus,
\begin{align}
\label{eq:varbound}
2^{k-1} + 1
& \geq \sum_{t=1}^{T} \pstar_t (1-\pstar_t) \cdot \1{t \in I_k \wedge \pstar_t \in \cI} \nonumber\\
& = \sum_{t=1}^{T} \EEs{x \sim \Bern(\pstar_t)}{(x - \pstar_t)^2} \cdot \1{t \in I_k \wedge \pstar_t \in \cI} \nonumber\\
& = \sum_{t=1}^{T} \EEsc{x'_t \sim \cD_t}{(x'_t - \pstar_t)^2}{x'_{1:t-1}=x_{1:t-1}}  \cdot \1{t \in I_k \wedge \pstar_t \in \cI}^2 \nonumber\\
& \geq \sum_{t=1}^{T}y_t^2\cdot f(\pstar_t)^2 \cdot \EEsc{x'_t \sim \cD_t}{(x'_t - \pstar_t)^2}{x'_{1:t-1}=x_{1:t-1}}  \cdot \1{t \in I_k \wedge \pstar_t \in \cI}^2 \nonumber \\
& = \sum_{t=1}^{T} \EEsc{x'_t \sim \cD_t}{(M_t^{k,f} - M_{t-1}^{k,f})^2}{x'_{1:t-1}=x_{1:t-1}}.
\end{align}
where the first equality uses the definition of a Bernoulli's variance; the second equality uses that, conditioned on $\Fil_{t-1}$, $x_t \sim \text{Bernoulli}(\pstar_t)$; and the second inequality uses that $y_t^2 \le 1$ and $\abs{f(x)} \leq 1$ for all $x \in [0, 1]$.

We can thus use Freedman's inequality to bound the deviation of each martingale $M_T^{k,f}$.
First, observe that the quadratic formula gives the inequality $\exp\left(- \tfrac{x^2}{2(x+y)}\right) \leq p$ if $x \geq \log(1/p) + \sqrt{\log^2(p) + 2y \log(1/p)}$.
We can therefore invoke Lemma~\ref{lemma:freedman} with $y=2^{k-1} + 1$ and 
\[x=2\log(1/p)+\sqrt{2y\log(1/p)}\geq \log(1/p) + \sqrt{\log^2(p) + 2y \log(1/p)}\] to show that
\begin{align*}
 p & \ge \pr{}{ M_T^{k,f} \ge x\ \wedge\ \sum_{t=1}^{T} \EEsc{x'_t \sim \cD_t}{(M_t^{k,f} - M_{t-1}^{k,f})^2}{x'_{1:t-1}=x_{1:t-1}} \leq 2^{k-1} + 1 \mid \tau_{k-1} < \infty}.
\end{align*}
Applying \eqref{eq:varbound}, we can simplify this to
 \begin{align*}
  p  & \ge  \pr{}{ M_T^{k,f} \ge \sqrt{2 (2^{k-1} + 1) \log(1/p)}+2\log(1/p) \mid \tau_{k-1} < \infty} \\
    & \ge  \pr{}{ M_T^{k,f} \ge \sqrt{2^{k+1} \log(1/p)}+2\log(1/p) \mid \tau_{k-1} < \infty}.
\end{align*}
We can then take a union bound over $\cG$ for
\begin{align*}
 p 
    & \ge  \pr{}{\max_{f \in \cG} M_T^{k,f} \ge \sqrt{2^{k+1} \log(\setsize{\cG}/p)}+2\log(\setsize{\cG}/p) \mid \tau_{k-1} < \infty}.
\end{align*}
Using the layer cake representation of expectation, we can convert this high-probability bound into the expectation bound through a change of variables
\begin{align*}
\EE{ \max_{f \in \cG} {M_T^{k, f}} \mid \tau_{k-1} < \infty}
&= \int_0^\infty \pr{}{\max_{f \in \cG} M_T^{k,f} \ge t \mid \tau_{k-1} < \infty} \; \mathrm{d}t \\
&= \int_0^1 \sqrt{2^{k+1} \log(\setsize{\cG}/p)}+2\log(\setsize{\cG}/p) \; \mathrm{d}p \\
&= \sqrt{2^{k+1}} (\tfrac{\setsize{\cG}}{2} \sqrt \pi \cdot \mathrm{erfc}(\sqrt{\log\setsize{\cG}}) + \sqrt{\log\setsize{\cG}}) + 2 + 2 \log\setsize{\cG},
\end{align*}
where the last equality follows by Fact~\ref{fact:integral}.
When $\setsize{\cG} > 1$, we can compute the integral to be
\begin{align*}
\EE{ \max_{f \in \cG} {M_T^{k, f}} \mid \tau_{k-1} < \infty}
&\leq \sqrt{2^{k+1}} (\tfrac{\setsize{\cG}}{2 \sqrt{\log\setsize{\cG}}} \exp(-\log\setsize{\cG}) + \sqrt{\log\setsize{\cG}}) + 2 + 2 \log\setsize{\cG} \nonumber  \\
&\leq \sqrt{2^{k-1}} (2 + 2\sqrt{\log\setsize{\cG}}) + 2 + 2 \log\setsize{\cG},
\end{align*}
where the first inequality uses that $\mathrm{erfc}(z) < \frac{\exp(-z^2)}{z \sqrt \pi}$.
When $\setsize{\cG} = 1$, we again have
\begin{align*}
\EE{ \max_{f \in \cG} {M_T^{k, f}} \mid \tau_{k-1} < \infty}
&\leq \sqrt \pi \sqrt {2^{k-1}} + 2 \nonumber \\
&\leq \sqrt{2^{k-1}} (2 + 2\sqrt{\log\setsize{\cG}}) + 2 + 2 \log\setsize{\cG}.
\end{align*}
\end{proof}

\begin{fact}
\label{fact:integral}
For $k, n \in \integers_+$, the following integral equality holds
    \[
     \int_0^1 \sqrt{2^{k+1} \log(n/p)}+2\log(n/p) \; \mathrm{d}p = \sqrt{2^{k+1}} (\tfrac{n}{2} \sqrt \pi \cdot \mathrm{erfc}(\sqrt{\log n}) + \sqrt{\log n}) + 2 + 2 \log n
    \]
\end{fact}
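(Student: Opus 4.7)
The plan is to split the integral into two parts,
\[
\int_0^1 \sqrt{2^{k+1}\log(n/p)}\,dp \quad \text{and} \quad \int_0^1 2\log(n/p)\,dp,
\]
and evaluate each in closed form. The second integral is elementary: writing $\log(n/p) = \log n - \log p$ and using $\int_0^1 \log p\,dp = -1$ gives exactly $2\log n + 2$, matching the last two summands on the right-hand side.

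For the first integral, I would use the substitution $u = \log(n/p)$, so that $p = n e^{-u}$ and $dp = -n e^{-u}\,du$. As $p$ ranges from $0$ to $1$, $u$ ranges from $\infty$ down to $\log n$, yielding
\[
\int_0^1 \sqrt{2^{k+1}\log(n/p)}\,dp = \sqrt{2^{k+1}}\cdot n \int_{\log n}^{\infty} \sqrt{u}\, e^{-u}\,du.
\]
A further substitution $v = \sqrt{u}$ (so $du = 2v\,dv$) turns this into $\sqrt{2^{k+1}}\cdot 2n \int_{\sqrt{\log n}}^{\infty} v^2 e^{-v^2}\,dv$.

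The remaining integral is handled by a single integration by parts: writing $v^2 e^{-v^2} = v \cdot (v e^{-v^2})$ and using the antiderivative $-\tfrac{1}{2}e^{-v^2}$ for $v e^{-v^2}$ yields
\[
\int_{\sqrt{\log n}}^{\infty} v^2 e^{-v^2}\,dv = \left[-\tfrac{v}{2}e^{-v^2}\right]_{\sqrt{\log n}}^{\infty} + \tfrac{1}{2}\int_{\sqrt{\log n}}^{\infty} e^{-v^2}\,dv = \tfrac{\sqrt{\log n}}{2n} + \tfrac{\sqrt{\pi}}{4}\,\mathrm{erfc}(\sqrt{\log n}),
\]
where the boundary term uses $e^{-\log n} = 1/n$ and the second piece uses the standard identity $\int_a^{\infty}e^{-v^2}\,dv = \tfrac{\sqrt{\pi}}{2}\mathrm{erfc}(a)$. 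Multiplying back by $\sqrt{2^{k+1}}\cdot 2n$ produces $\sqrt{2^{k+1}}\bigl(\sqrt{\log n} + \tfrac{n\sqrt{\pi}}{2}\mathrm{erfc}(\sqrt{\log n})\bigr)$, which combined with $2 + 2\log n$ from the second integral gives the claimed identity.

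There is no real obstacle here, since the computation is a standard change-of-variables plus one integration by parts; the only care needed is tracking the boundary at $p=0$ (where $u\to\infty$ and the integrand decays rapidly enough that no convergence issue arises) and correctly matching the $\mathrm{erfc}$ normalization.
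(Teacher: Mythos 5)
Your proof is correct and takes essentially the same approach as the paper's: split the integral, evaluate the elementary second piece, and handle the first via the substitution $u=\log(n/p)$ followed by integration by parts and the $\mathrm{erfc}$ identity. The only difference is a trivial reordering — you substitute $v=\sqrt u$ before integrating by parts, whereas the paper integrates $\sqrt{u}\,e^{-u}$ by parts first and then changes variables — but both routes yield the identical closed form.
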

where $\mathrm{erfc}$ denotes the complementary error function.
\begin{proof}
Let us first separate the integral into two parts:
\[
  \int_0^1 \sqrt{2^{k+1} \log(n/p)}+2\log(n/p) \; \mathrm{d}p = \int_0^1 \sqrt{2^{k+1} \log(n/p)} \, \d p + \int_0^1 2 \log(n/p) \, \d p.
\]

We can bound the second integral easily.
Since \(\log(n/p) = \log n - \log p\),
\begin{align}
\label{eq:secondintegral}
\int_0^1 2 \log(n/p) \, \d p
&= \int_0^1 2 (\log n - \log p) \, \d p \nonumber \\
&= 2 \log n \int_0^1 \d p - 2 \int_0^1 \log p \, \d p \nonumber \\
&= 2 \log n + 2
\end{align}

Now we consider the first integral.
Let \( u = \log(n/p) \). Then \( p = n e^{-u} \) and \( \d p = -n e^{-u} \, \d u \).
When \( p = 1 \), \( u = \log n \).
When \( p = 0 \), \( u \) goes to \(\infty\).
Thus, the integral becomes:
\begin{align*}
\int_0^1 \sqrt{2^{k+1} \log(n/p)} \, \d p &=\int_{\infty}^{\log n} \sqrt{2^{k+1} u} \cdot (-n e^{-u}) \, \d u \\
&= n \sqrt{2^{k+1}} \int_{\log n}^{\infty} \sqrt{u} \, e^{-u} \, \d u.
\end{align*}

The integral involving the error function \(\mathrm{erfc}(x)\) can be recognized:
\begin{align*}
\int_{\log n}^{\infty} \sqrt{u} \, e^{-u} \, \d u   &= \left. -\sqrt{u} \, e^{-u} \right|_{\log n}^{\infty} + \int_{\log n}^{\infty} \frac{1}{2\sqrt{u}} \, e^{-u} \, \d u \\
  &= \lim_{u \to \infty} \left(-\sqrt{u} \, e^{-u}\right) - \left(-\sqrt{\log n} \, e^{-\log n}\right) + \int_{\log n}^{\infty} \frac{1}{2\sqrt{u}} \, e^{-u} \, \d u \\
  &= \sqrt{\log n} \, e^{-\log n} + \int_{\log n}^{\infty} \frac{1}{2\sqrt{u}} \, e^{-u} \, \d u \\
  &= \sqrt{\log n} \, e^{-\log n} + \int_{\sqrt{\log n}}^{\infty} e^{-t^2} \, \d t \\
& = \frac{\sqrt{\log n}}{n} + \frac{\sqrt{\pi}}{2} \mathrm{erfc}(\sqrt{\log n}).
\end{align*}

Thus, the integral $\int_0^1 \sqrt{2^{k+1} \log(n/p)} \, \d p$ is given by
\begin{align}
\label{eq:firstintegral}
n \sqrt{2^{k+1}} \left( \frac{\sqrt{\pi}}{2} \mathrm{erfc}(\sqrt{\log n}) + \frac{\sqrt{\log n}}{n} \right) = \sqrt{2^{k+1}} \left( \frac{n \sqrt{\pi}}{2} \mathrm{erfc}(\sqrt{\log n}) + \sqrt{\log n} \right).
\end{align}
Summing \eqref{eq:secondintegral}~and~\eqref{eq:firstintegral} gives the claim.
\end{proof}

\section{Supplemental Materials for Section~\ref{sec:lower-bound}}
\label{app:lower-bound}

\subsection{Proof of Lemma~\ref{lemma:sub-martingale}}
\label{app:proof-sub-martingale}

Now we prove Lemma~\ref{lemma:sub-martingale}, which we restate below.

\lemmasubmartingale*

\begin{proof}

    We first show that for all $t\in[T]$,
    we have $\pr{}{n_t=1\mid \Fil_{t-1}}\ge \pstar_t(1-\pstar_t)$, where $\Fil_{t-1}$ denotes the filtration generated by all the randomness up to time $t-1$.
    Note that conditioning on $\Fil_{t-1}$, $x_t$ is distributed according to $\Bern(p^\star_t)$. 
    If the forecaster chooses $p_t\ge\frac{1}{2}$, the condition $|x_t-p_t|\ge\frac{1}{2}$ holds when $x_t=0$, which happens with probability $1-\pstar_t$; otherwise it holds when $x_t=1$, which happens with probability $\pstar_t$. Therefore, regardless of the choice of $p_t$, we have
    \begin{align*}
        \pr{}{n_t=1\mid \Fil_{t-1}}=
        \pr{x_t \sim \Bern(p^\star_t)}{|x_t - p_t| \ge 1/2}
        \ge 
        \min\{p^\star_t, 1 - p^\star_t\} \ge p^\star_t (1 - p^\star_t).
    \end{align*}

    This allows us to invoke Lemma~\ref{lemma:auxiliary} with $q_t\coloneqq n_t$, $r_t\coloneqq p^\star_t (1 - p^\star_t)$, and $\theta=1$, where we only consider the random process inside block $B_j$. In this context, the stopping time $\tau_1$ corresponds to the end of the block, i.e., $b_j$. 
    Therefore, by applying Lemma~\ref{lemma:auxiliary} at time step $b_{j-1}$, we obtain
    \begin{align*}
        &A_{b_{j-1}}=\pr{}{N_{B_j}\ge 1\ \Big|\ \Fil_{b_{j-1}}}-\left(1-e^{-1}\right)\cdot\pr{}{{b_j<\infty}\mid \Fil_{b_{j-1}}}\ge0\\
        \iff&\Ex{}{\1{N_{B_j}\ge 1}-c\cdot\1{b_j<\infty}\ \Big|\ \Fil_{b_{j-1}}}\ge0,\ \text{ where }c=1-\frac{1}{e}.
    \end{align*}
\end{proof}

\subsection{Proof of Lemma~\ref{lemma:var-random-nt}}
\label{app:proof-var-random-nt}

\lemmavarrandomnt*

\begin{proof}[Proof of Lemma~\ref{lemma:var-random-nt}]
Invoking Lemma~\ref{lemma:auxiliary-strong} with $q_t\coloneqq n_t$, $r_t\coloneqq \pstar_t (1 - \pstar_t)$, and the stopping time $\tau$ as the earlier time step between the end of block $B_1$ and the first time where $n_t=1$, we have
    \begin{align*}
        \pr{}{N_{\tau}\ge 1}\ge1-\Ex{}{e^{-\Var_{\tau}}}\ge \frac{1}{2}\Ex{}{\Var_{\tau}},
    \end{align*}
    where the last step again uses $1-e^{-x}\ge x/2$ for $x\in[0,5/4]$. Moreover, since $\frac{5}{4}\cdot\1{N_\tau\ge1}\ge \Var_{\tau:b_1}$, we also have
    \begin{align*}
        \pr{}{N_{\tau}\ge 1}\ge \frac{4}{5}\Ex{}{\Var_{\tau:b_1}}\ge \frac{1}{2}\Ex{}{\Var_{\tau:b_1}}.
    \end{align*}
    Combining the two inequalities, we obtain
    \begin{align*}
        \Ex{}{\sqrt{\tN_{\cT_0}}}&=\pr{}{N_{B_1}\ge 1}\ge \pr{}{N_{\tau}\ge 1}\\
        &\ge\frac{1}{4}\Ex{}{\Var_{\tau}+\Var_{\tau:b_1}}
        =   \frac{1}{4}\Ex{}{\Var_{B_1}}\\
        &\ge\frac{1}{4}\Ex{}{\Var_{T}\cdot\1{\tau_1=\infty}}.
        \tag{$\tau_1=\infty \implies \Var_T=\Var_{B_1}$}
    \end{align*}
    The proof is thus complete.
\end{proof}

\subsection{Auxiliary Lemmas}
\begin{lemma}
    \label{lemma:auxiliary}
    Let $Q_t=\sum_{s\le t}q_{s},R_t=\sum_{s\le t}r_{s}$ be two (coupled) stochastic processes such that $q_t\in\{0,1\},\ r_t\in[0,1]$ for all $t\in[T]$. Let $\Fil_{t}$ denote the filtration generated by all the randomness up to time $t$. Suppose $r_t$ is a deterministic function on $\Fil_{t-1}$, and $s_t\coloneqq\pr{}{q_t=1\mid \Fil_{t-1}}\ge r_t$.

    For any constant $\theta>0$, define $\tau_\theta$ to be a stopping time chosen as the first time that $R_t$ reaches $\theta$, i.e.,
    \[
        \tau_\theta\coloneqq \min\{\infty\}\cup\{t\in[T] \mid R_t\ge\theta\}.
    \]
    Let $Q_{t}^+\coloneqq Q_{t:\tau_\theta}$ be the sum of $q_{s}$ in the future until the stopping time $\tau_\theta$. If $t>\tau_\theta$, then we let $Q_{t}^+\coloneqq0$.
    Consider random variables $A_t$'s defined on the filtration $\Fil_{t}$ as follows:
    \begin{align*}
        A_t\coloneqq\pr{}{Q_{t}^+\ge 1\ \Big|\ \Fil_t}-\left(1-e^{-(\theta-R_t)}\right)\cdot\pr{}{{\tau_\theta<\infty}\mid \Fil_t}.
    \end{align*}
    Then we have $A_{t}\ge0$ for every $t\le T$ and every event in $\Fil_t$.
\end{lemma}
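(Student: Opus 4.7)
I would prove Lemma~\ref{lemma:auxiliary} by backward induction on $t$, showing pointwise that $A_t \ge 0$ holds a.s.\ for $t = T, T-1, \ldots, 0$. The base case $t=T$ is immediate: $Q_T^+ = 0$ by definition, so $\Pr[Q_T^+ \ge 1 \mid \Fil_T]=0$, and either $R_T < \theta$ (in which case $\Pr[\tau_\theta < \infty \mid \Fil_T] = 0$) or $R_T \ge \theta$ (in which case $1 - e^{-(\theta - R_T)} \le 0$); both leave $A_T \ge 0$.

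For the inductive step, fix $t$ and assume $A_t \ge 0$ holds a.s. Condition first on the $\Fil_{t-1}$-measurable event $\{R_{t-1} \ge \theta\}$: then $\tau_\theta \le t-1$, $Q_{t-1}^+ = 0$, and $1 - e^{-(\theta - R_{t-1})} \le 0$, so $A_{t-1} \ge 0$ trivially. Otherwise $R_{t-1} < \theta$, hence $\tau_\theta \ge t$, so $Q_{t-1}^+ = Q_t^+ + q_t$. Writing $\phi_s \coloneqq \Pr[\tau_\theta < \infty \mid \Fil_s]$ and applying tower,
\[
\Pr[Q_{t-1}^+ \ge 1 \mid \Fil_{t-1}] = s_t + \E\bigl[\1{q_t=0}\cdot \Pr[Q_t^+ \ge 1 \mid \Fil_t]\,\big|\, \Fil_{t-1}\bigr].
\]
The crucial observation is that $R_t = R_{t-1} + r_t$ is $\Fil_{t-1}$-measurable (because $r_t$ is), so $1 - e^{-(\theta - R_t)}$ can be pulled out of the inner expectation when we invoke the inductive hypothesis $\Pr[Q_t^+ \ge 1 \mid \Fil_t] \ge (1 - e^{-(\theta - R_t)})\cdot \phi_t$. (When $R_t \ge \theta$ the hypothesis still makes sense, with the right-hand side non-positive and the left-hand side zero; the same algebra below will go through.)

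Combining, and using $\E[\1{q_t=0}\cdot \phi_t \mid \Fil_{t-1}] \ge \phi_{t-1} - s_t$ (from $\phi_t \le 1$ together with tower), yields
\[
\Pr[Q_{t-1}^+ \ge 1 \mid \Fil_{t-1}] \ge s_t\cdot e^{-(\theta - R_t)} + (1 - e^{-(\theta - R_t)})\cdot \phi_{t-1}.
\]
A short calculation using $e^{-(\theta - R_{t-1})} = e^{-(\theta - R_t)}\cdot e^{-r_t}$ reduces the desired inequality $A_{t-1} \ge 0$ to
\[
s_t \;\ge\; (1 - e^{-r_t})\cdot \phi_{t-1},
\]
which follows from $\phi_{t-1} \le 1$ together with the elementary bound $1 - e^{-r_t} \le r_t \le s_t$. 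This completes the induction.

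\textbf{Main obstacle.} The delicate point is that $\phi_t$ depends nontrivially on $q_t$ through the future values of $r_s$, so one cannot simply factor out $\Pr[\tau_\theta < \infty \mid \Fil_t]$ from the inner conditional expectation. The workaround is the one-sided bound $\E[\1{q_t=0}\phi_t \mid \Fil_{t-1}] \ge \phi_{t-1} - s_t$, which costs us the extra $-s_t$ but is exactly absorbed by the $s_t\cdot e^{-(\theta - R_t)}$ term already present; this is what makes the algebraic reduction collapse to the clean inequality $s_t \ge (1 - e^{-r_t})\phi_{t-1}$.
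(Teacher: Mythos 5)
Your proof is correct and follows essentially the same route as the paper: backward induction, splitting on $\{R_{t-1} \ge \theta\}$ versus $\{R_{t-1} < \theta\}$, decomposing $Q_{t-1}^+ = q_t + Q_t^+$, invoking the induction hypothesis conditioned on $q_t = 0$, bounding a conditional probability by $1$, and finishing with $1-e^{-r_t}\le r_t\le s_t$. The paper packages the intermediate step as the upper bound $\Pr[\tau_\theta<\infty\mid\Fil_{t-1}]\le s_t+(1-s_t)\Pr[\tau_\theta<\infty\mid\Fil_{t-1},q_t=0]$ and later bounds $\Pr[\tau_\theta<\infty\mid\Fil_{t-1},q_t=0]\le 1$, while you use the equivalent lower bound $\E[\1{q_t=0}\phi_t\mid\Fil_{t-1}]\ge\phi_{t-1}-s_t$ and later bound $\phi_{t-1}\le 1$; after dividing by $e^{-(\theta-R_{t-1})}$ both collapse to the same elementary inequality.

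One small imprecision: the step that multiplies $\E[\1{q_t=0}\phi_t\mid\Fil_{t-1}]\ge\phi_{t-1}-s_t$ by $1-e^{-(\theta-R_t)}$ is only sign-safe when $R_t<\theta$; when $R_{t-1}<\theta\le R_t$ the coefficient is non-positive and the inequality would flip. Your parenthetical ("the same algebra below will go through") does not actually address this. The argument does survive, but for a different reason than stated: in that subcase $\tau_\theta=t$ is already determined by $\Fil_{t-1}$, so $\phi_{t-1}=1$ and $\phi_t\equiv 1$, hence $\E[\1{q_t=0}\phi_t\mid\Fil_{t-1}]=1-s_t=\phi_{t-1}-s_t$ with equality, and multiplication by a coefficient of either sign preserves it. (The paper sidesteps this entirely by only ever multiplying its probability-$\le 1$ bound by a coefficient that is manifestly $\le 0$.) You should either make that observation explicit or handle the subcase $R_t\ge\theta$ directly, noting that there $\phi_{t-1}=1$, $Q_{t-1}^+=q_t$, and $\theta-R_{t-1}\le r_t$, which gives $A_{t-1}=s_t-(1-e^{-(\theta-R_{t-1})})\ge s_t-(1-e^{-r_t})\ge 0$.
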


\begin{proof}[Proof of Lemma~\ref{lemma:auxiliary}]
    It suffices to prove the inequality conditioning on events in $\F_t$ that are ``atomic'' in the sense that they uniquely determine the values of $q_{1:t}$ and $r_{1:t}$. The general case would follow from the law of total probability. In particular, in the following proof, we may view the value of $R_t$ as fixed when we analyze the quantity $A_t$.

    We perform a backwards induction from $t=T$ to $t=0$. Consider the base case of $t=T$. If $R_{T}\ge\theta$, we have
\begin{align*}
    A_{T}=\underbrace{\pr{}{Q_{T}^+\ge1\ \Big|\ \Fil_{T}}}_{=0}-
    \underbrace{\left(1-e^{-(\theta-R_{T})}\right)}_{\le0}\cdot
    \pr{}{\tau_\theta<\infty\mid \Fil_{T}}\ge 0.
\end{align*}
Otherwise when $R_T<\theta$, we have $\pr{}{\tau_\theta<\infty\mid \Fil_{T}}=0$, which also implies $A_T=0\ge0$.

We then assume $A_t\ge0$, and show that the same holds for $A_{t-1}$, where $t\le T$. 
If $R_{t-1}\ge\theta$, we clearly have $A_{t-1}\ge0$, as the factor $-\left(1 - e^{-(\theta - R_{t-1})}\right)$ would be non-negative. Therefore, it suffices to consider the case that $R_{t-1} < \theta$. In this case, the stopping time $\tau_\theta$ should be $\ge t$, so we have $Q_{t-1}^+=q_t+Q_t^+$.
We bound $A_{t-1}$ by breaking the event $Q_{t-1}^+\ge 1$ into two cases: either ${q_t=1}$, or $q_t=0$ but $Q_{t}^+\ge 1$. 
We have
\begin{align*}
\pr{}{Q_{t-1}^+\ge1\ \Big|\ \Fil_{t-1}}
&=\pr{}{q_t=1\mid \Fil_{t-1}}
+\pr{}{q_t=0\mid \Fil_{t-1}}\cdot
\pr{}{Q_t^+\ge1\mid \Fil_{t-1},q_t=0}\\
&= s_t+(1-s_t)\Ex{}{\pr{}{Q_t^+\ge1\mid \Fil_t}\ \Big|\ \Fil_{t-1},q_t=0}
\end{align*}
For the second term, we apply the induction hypothesis of $A_t\ge0$ and get
\begin{align*}
\Ex{}{\pr{}{Q_t^+\ge1\mid \Fil_t}\ \Big|\ \Fil_{t-1},q_t=0}
&\ge \Ex{}{\left(1-e^{-(\theta-R_t)}\right)\cdot\pr{}{\tau_\theta<\infty\mid \Fil_t}\ \Big|\ \Fil_{t-1},q_t=0}\\
&= \left(1-e^{-(\theta-R_{t-1}-r_t)}\right)\cdot \pr{}{\tau_\theta<\infty\mid \Fil_{t-1},q_t=0},
\end{align*}
where the second step uses the fact that conditioning on $\F_{t-1}$, $R_t = R_{t-1} + r_t$.
As a result, we obtain
\begin{align}
\pr{}{Q_{t-1}^+\ge1\ \Big|\ \Fil_{t-1}}\ge
s_t+(1-s_t)\left(1-e^{-(\theta-R_{t-1}-r_t)}\right)\cdot \pr{}{\tau_\theta<\infty\mid \Fil_{t-1},q_t=0}.\label{eq:tmp-1st-term}
\end{align}

We also expand the conditional probability $\pr{}{\tau_\theta<\infty\mid \Fil_{t-1}}$ as follows:
\begin{align}
\pr{}{\tau_\theta<\infty\mid \Fil_{t-1}}
&= s_t\cdot \pr{}{\tau_\theta<\infty\mid \Fil_{t-1},q_t=1}
+(1-s_t)\pr{}{\tau_\theta<\infty\mid \Fil_{t-1},q_t=0}\nonumber\\
&\le s_t+(1-s_t)\pr{}{\tau_\theta<\infty\mid \Fil_{t-1},q_t=0}
\label{eq:tmp-2nd-term}
\end{align}

Combining the bounds in \eqref{eq:tmp-1st-term} and \eqref{eq:tmp-2nd-term}, we obtain
\begin{align*}
A_{t-1}&\ge s_t+(1-s_t)\left(1-e^{-(\theta-R_{t-1}-r_t)}\right)\cdot \pr{}{\tau_\theta<\infty\mid \Fil_{t-1},q_t=0}\\
&\qquad-\left(1-e^{-(\theta-R_{t-1})}\right)\cdot \left(s_t+(1-s_t)\pr{}{\tau_\theta<\infty\mid \Fil_{t-1},q_t=0}\right)\\
&=s_t \cdot e^{-(\theta-R_{t-1})} +(1-s_t)\cdot\left(
    e^{-(\theta-R_{t-1})}-e^{-(\theta-R_{t-1}-r_t)}
\right)\cdot\pr{}{\tau_\theta<\infty\mid \Fil_{t-1},q_t=0}\\
&\ge e^{-(\theta-R_{t-1})}\cdot\left(s_t\cdot e^{r_t}+1-e^{r_t}\right)
\tag{bounding the probability by $1$}\\
&\ge e^{-(\theta-R_{t-1})}\cdot\left(r_t\cdot e^{r_t}+1-e^{r_t}\right)
\tag{$s_t\ge r_t$ from assumption}\\
&=  e^{-(\theta - R_{t-1}) + r_t}\cdot(r_t + e^{-r_t} - 1)
\ge 0.
\tag{$\forall x,\ e^{-x}\ge1-x$}
\end{align*}
We have thus proved that the claim also holds for $t-1$. This completes the induction.
\end{proof}

\begin{lemma}
    \label{lemma:auxiliary-strong}
    Let $Q_t=\sum_{s\le t}q_{s},R_t=\sum_{s\le t}r_{s}$ be two (coupled) stochastic processes such that $q_t\in\{0,1\},\ r_t\in[0,1]$ for all $t\in[T]$. Let $\Fil_{t}$ denote the filtration generated by all the randomness up to time $t$. Suppose $r_t$ is a deterministic function on $\Fil_{t-1}$, and $s_t\coloneqq\pr{}{q_t=1\mid \Fil_{t-1}}\ge r_t$.

    For any constant $\theta>0$, define $\tau$ to be a stopping time chosen as the first time that either $R_t$ reaches $1$ or $q_t=1$, i.e.,
    \[
        \tau\coloneqq \min\{\infty\}\cup\{t\in[T] \mid R_t\ge1\}\cup\{t\in[T]\mid q_t=1\}.
    \]
    Let $Q_{t}^+\coloneqq Q_{t:\tau}$ and $R_{t}^+\coloneqq Q_{t:\tau}$ be the sum of $q_{s}$ and $r_{s}$ in the future until the stopping time $\tau$, respectively. We also let $Q_{t}^+=R_{t}^+=0$ when $t>\tau$.
    Consider random variables $A_t$'s defined on the filtration $\Fil_{t}$ as follows:
    \begin{align*}
        A_t\coloneqq\pr{}{Q_{t}^+\ge 1\ \Big|\ \Fil_t}-\Ex{}{1-e^{-R_t^+}\ \Big|\ \Fil_t}.
    \end{align*}
    Then we have $A_{t}\ge0$ for every $t\le T$ and every event in $\Fil_t$.
\end{lemma}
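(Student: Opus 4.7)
The plan is to mirror the backward induction proof of Lemma~\ref{lemma:auxiliary}, with two modifications forced by the new setup: the stopping time $\tau$ now also triggers when $q_t=1$, and the exponent $R_t^+$ inside the penalty $1-e^{-R_t^+}$ is itself random rather than a deterministic function of $\Fil_t$. I would set up the shorthand $f(t)\coloneqq\pr{}{Q_t^+\ge 1\mid \Fil_t}$ and $g(t)\coloneqq\Ex{}{1-e^{-R_t^+}\mid \Fil_t}$, so that the claim becomes $f(t)\ge g(t)$ pointwise on $\Fil_t$. As in Lemma~\ref{lemma:auxiliary}, it suffices to verify the inequality on atomic events of $\Fil_{t-1}$, and I would induct downward from $t=T$.

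The base case is immediate: by the increment convention $X_{T:\tau}=X_{\min\{\tau,T\}}-X_T$, both $Q_T^+$ and $R_T^+$ vanish, so $f(T)=g(T)=0$. The trivial branch of the inductive step is when $\tau\le t-1$ (an $\Fil_{t-1}$-measurable event), where $Q_{t-1}^+=R_{t-1}^+=0$ and both sides of the inequality are zero. Otherwise $\tau\ge t$, forcing $R_{t-1}<1$, and I would split on whether the $R$-threshold already triggers at step $t$. If $R_{t-1}+r_t\ge 1$, then $\tau=t$ regardless of $q_t$, giving $f(t-1)=s_t$ and $g(t-1)=1-e^{-r_t}$; the inequality follows from $s_t\ge r_t\ge 1-e^{-r_t}$. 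If instead $R_{t-1}+r_t<1$, then $\tau=t$ iff $q_t=1$, and otherwise $Q_{t-1}^+=Q_t^+$ and $R_{t-1}^+=r_t+R_t^+$. Splitting on $q_t$ and applying the tower property then produces
\[
f(t-1)-g(t-1)=(s_t+e^{-r_t}-1)+(1-s_t)(F-e^{-r_t}G),
\]
where $F\coloneqq\Ex{}{f(t)\mid \Fil_{t-1},q_t=0}$ and $G\coloneqq\Ex{}{g(t)\mid \Fil_{t-1},q_t=0}$. Both summands are non-negative: the first by $s_t\ge r_t$ together with $e^{-r_t}\ge 1-r_t$; the second because the inductive hypothesis gives $F\ge G\ge 0$ and $e^{-r_t}\le 1$, so $F-e^{-r_t}G\ge F-G\ge 0$.

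The main technical wrinkle will be computing $g(t-1)$ cleanly in the second case. Unlike the deterministic penalty $1-e^{-(\theta-R_t)}$ of Lemma~\ref{lemma:auxiliary}, the quantity $R_t^+$ is not $\Fil_t$-measurable, so obtaining the decomposition above requires the identity $\Ex{}{e^{-R_t^+}\mid \Fil_t}=1-g(t)$ together with a second application of the tower property under $(\Fil_{t-1},q_t=0)$. Once this bookkeeping is handled, the algebra collapses onto the same elementary inequality $e^{-x}\ge 1-x$ that powered the proof of Lemma~\ref{lemma:auxiliary}, and the induction closes.
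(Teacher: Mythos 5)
Your proof is correct, and it follows the same backward-induction skeleton as the paper (same base case, same trivial $\tau\le t-1$ case, same split on $q_t$), but the heart of the inductive step is handled by a genuinely different algebraic decomposition. The paper applies Jensen's inequality for the convex function $e^{-x}$ to merge the two branches of the case split into the single term $e^{-r_t-(1-s_t)R_t^+}$, and then closes with a chain of three pointwise inequalities ($e^{s_tR_t^+}\ge 1$, then $e^{-r_t}\ge e^{-s_t}$ from $s_t\ge r_t$, then $e^{-x}\ge 1-x$). You instead rearrange $f(t-1)-g(t-1)$ \emph{exactly} into $(s_t+e^{-r_t}-1)+(1-s_t)(F-e^{-r_t}G)$ with $F=\Ex{}{f(t)\mid\Fil_{t-1},q_t=0}$ and $G=\Ex{}{g(t)\mid\Fil_{t-1},q_t=0}$, and bound the two pieces separately: the first via $s_t\ge r_t$ and $e^{-x}\ge 1-x$, the second via the induction hypothesis $F\ge G\ge 0$ together with $e^{-r_t}\le 1$. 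Avoiding Jensen makes the inductive step somewhat more transparent, and the identity $\Ex{}{e^{-R_t^+}\mid\Fil_t}=1-g(t)$ you flag is exactly the right bookkeeping step. One small remark: your explicit case split on whether $R_{t-1}+r_t\ge 1$ is harmless but unnecessary, since in that case $R_t^+=0$ and $Q_t^+=0$ deterministically on $\{q_t=0\}$, so $F=G=0$ and the general decomposition already yields the correct answer.
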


\begin{proof}[Proof of Lemma~\ref{lemma:auxiliary-strong}]
    Using a similar approach to that for Lemma~\ref{lemma:auxiliary}, we prove this claim via a backwards induction from $t=T$ to $t=0$. Again, we only consider the ``atomic'' events in $\F_t$ that uniquely determines the values of $q_{1:t}$ and $r_{1:t}$, and thus whether $\tau \le t$; the general case follows from the law of total probability.
    
    For the base case of $t=T$, we have 
    \begin{align*}
        A_T=\underbrace{\pr{}{Q_{T}^+\ge 1\ \Big|\ \Fil_T}}_{=0\text{ as }Q_{T}^+\equiv0}+\underbrace{\Ex{}{e^{-R_T^+}\ \Big|\ \Fil_T}}_{=1\text{ as }R_T^+\equiv0}-1=0.
    \end{align*}
    Now for $t\le T$, we assume the claim holds for $t$ and analyze $A_{t-1}$. 
    If $\tau \le t-1$, we immediately obtain $A_{t-1}\ge0$ since $Q_{t-1}^+=R_{t-1}^+=0$. It remains to consider the case of $\tau\ge t$.
    For the first term of $A_{t-1}$ (the conditional probability), we have
    \begin{align*}
        \pr{}{Q_{t-1}^+\ge 1\ \Big|\ \Fil_{t-1}}
        &=\pr{}{q_t=1\ \Big|\ \Fil_{t-1}}+\pr{}{q_t=0\mid \Fil_{t-1}}\cdot
        \pr{}{Q_t^+\ge1\mid \Fil_{t-1},q_t=0}\\
        &= s_t+(1-s_t)\pr{}{{Q_t^+\ge1}\ \Big|\ \Fil_{t-1},q_t=0}\\
        &\ge  s_t+(1-s_t)\cdot\Ex{}{1-e^{-R_t^+}\ \Big|\ \Fil_{t-1},q_t=0}\\
        &=1-(1-s_t)\cdot\Ex{}{e^{-R_t^+}\ \Big|\ \Fil_{t-1},q_t=0},
    \end{align*}
    where the inequality step follows from the induction hypothesis $A_t\ge0$.
    
    On the other hand, the second term of $A_{t-1}$ (the conditional expectation) can be bounded as
    \begin{align*}
        &\quad\Ex{}{1-e^{-R_{t-1}^+}\ \Big|\ \Fil_{t-1}}\\
        &=\pr{}{q_t=1\ \Big|\ \Fil_{t-1}}\cdot\left(1-e^{-r_t}\right)
        \tag{$q_t=1$ implies $\tau=t$ and $R_{t-1}^+=r_t$}
        \\
        &\qquad\qquad+\pr{}{q_t=0\ \Big|\ \Fil_{t-1}}\cdot\Ex{}{1-e^{-r_t-R_{t}^+}\ \Big|\ \Fil_{t-1},q_t=0}\\
        &=1-s_t\cdot e^{-r_t}
        -(1-s_t)\cdot\Ex{}{e^{-r_t-R_{t}^+}\ \Big|\ \Fil_{t-1},q_t=0}\\
        &\ge 1-\Ex{}{e^{-r_t-(1-s_t)R_{t}^+}\ \Big|\ \Fil_{t-1},q_t=0}.
        \tag{Jensen's inequality for the convex function $e^{-x}$}
    \end{align*}
    Finally, combining the bounds for both terms of $A_{t-1}$, we obtain
    \begin{align*}
        A_{t-1}&=\pr{}{Q_{t-1}^+\ge 1\ \Big|\ \Fil_{t-1}}
        -\Ex{}{1-e^{-R_{t-1}^+}\ \Big|\ \Fil_{t-1}}\\
        &\ge \Ex{}{e^{-r_t-(1-s_t)R_{t}^+}-(1 - s_t)\cdot e^{-R_t^+}\ \Big|\ \Fil_{t-1},q_t=0}\\
        &\ge \Ex{}{e^{-R_t^+}\cdot\left(e^{-r_t}-(1-s_t)\right)\ \Big|\ \Fil_{t-1},q_t=0}
        \tag{$e^{s_tR_t^+}\ge1$}\\
        &\ge \Ex{}{e^{-R_t^+}\cdot\left(e^{-s_t}-(1-s_t)\right)\ \Big|\ \Fil_{t-1},q_t=0}\tag{$s_t\ge r_t$ by assumption}\\
        &\ge0.
        \tag{$e^{-x}\ge1-x,\ \forall x\ge0$}
    \end{align*}
    We have proved that $A_{t-1}\ge0$. As a result, $A_t\ge0$ for all $t\le T$ and all events in $\Fil_t$.
\end{proof}

\end{document}